\numberwithin{figure}{section}
\numberwithin{equation}{section}
\theoremstyle{plain}
\theoremstyle{plain}
\renewcommand{\paragraph}{
  \@startsection{paragraph}{4}
  {\z@}{1.25ex \@plus 1ex \@minus .2ex}{-1em}
  {\normalfont\normalsize\bfseries}
}
\providecommand{\lemmaname}{Lemma}
\providecommand{\theoremname}{Theorem}
\begin{document}
%--------------------------------------------------------------------------------------------------------------------------------
% Environment shortcuts
%--------------------------------------------------------------------------------------------------------------------------------
\def\balign#1\ealign{\begin{align}#1\end{align}}
\def\baligns#1\ealigns{\begin{align*}#1\end{align*}}
\def\balignat#1\ealign{\begin{alignat}#1\end{alignat}}
\def\balignats#1\ealigns{\begin{alignat*}#1\end{alignat*}}
\def\bitemize#1\eitemize{\begin{itemize}#1\end{itemize}}
\def\benumerate#1\eenumerate{\begin{enumerate}#1\end{enumerate}}

% Align environments that use textstyle instead of displaystyle
\newenvironment{talign*}
 {\let\displaystyle\textstyle\csname align*\endcsname}
 {\endalign}
\newenvironment{talign}
 {\let\displaystyle\textstyle\csname align\endcsname}
 {\endalign}

\def\balignst#1\ealignst{\begin{talign*}#1\end{talign*}}
\def\balignt#1\ealignt{\begin{talign}#1\end{talign}}
%---------------------------------------------------

%--------------------------------------------------------------------------------------------------------------------------------
% Redefine left and right to remove initial and trailing space
%--------------------------------------------------------------------------------------------------------------------------------
\let\originalleft\left
\let\originalright\right
\renewcommand{\left}{\mathopen{}\mathclose\bgroup\originalleft}
\renewcommand{\right}{\aftergroup\egroup\originalright}

%--------------------------------------------------------------------------------------------------------------------------------
% Words with special symbols
%--------------------------------------------------------------------------------------------------------------------------------
\def\Gronwall{Gr\"onwall\xspace}
\def\Holder{H\"older\xspace}
\def\Ito{It\^o\xspace}
\def\Nystrom{Nystr\"om\xspace}
\def\Schatten{Sch\"atten\xspace}
\def\Matern{Mat\'ern\xspace}

%--------------------------------------------------------------------------------------------------------------------------------
% Smaller citations
%--------------------------------------------------------------------------------------------------------------------------------
\def\tinycitep*#1{{\tiny\citep*{#1}}}
\def\tinycitealt*#1{{\tiny\citealt*{#1}}}
\def\tinycite*#1{{\tiny\cite*{#1}}}
\def\smallcitep*#1{{\scriptsize\citep*{#1}}}
\def\smallcitealt*#1{{\scriptsize\citealt*{#1}}}
\def\smallcite*#1{{\scriptsize\cite*{#1}}}

%--------------------------------------------------------------------------------------------------------------------------------
% Colors
%--------------------------------------------------------------------------------------------------------------------------------
\def\blue#1{\textcolor{blue}{{#1}}}
\def\green#1{\textcolor{green}{{#1}}}
\def\orange#1{\textcolor{orange}{{#1}}}
\def\purple#1{\textcolor{purple}{{#1}}}
\def\red#1{\textcolor{red}{{#1}}}
\def\teal#1{\textcolor{teal}{{#1}}}

%--------------------------------------------------------------------------------------------------------------------------------
% Font styles
%--------------------------------------------------------------------------------------------------------------------------------
\def\mbi#1{\boldsymbol{#1}} % Bold and italic (math bold italic)
\def\mbf#1{\mathbf{#1}}
\def\mrm#1{\mathrm{#1}}
\def\tbf#1{\textbf{#1}}
\def\tsc#1{\textsc{#1}}

%--------------------------------------------------------------------------------------------------------------------------------
% Bold and italic variables
%--------------------------------------------------------------------------------------------------------------------------------
\def\mbiA{\mbi{A}}
\def\mbiB{\mbi{B}}
\def\mbiC{\mbi{C}}
\def\mbiDelta{\mbi{\Delta}}
\def\mbif{\mbi{f}}
\def\mbiF{\mbi{F}}
\def\mbih{\mbi{g}}
\def\mbiG{\mbi{G}}
\def\mbih{\mbi{h}}
\def\mbiH{\mbi{H}}
\def\mbiI{\mbi{I}}
\def\mbim{\mbi{m}}
\def\mbiP{\mbi{P}}
\def\mbiQ{\mbi{Q}}
\def\mbiR{\mbi{R}}
\def\mbiv{\mbi{v}}
\def\mbiV{\mbi{V}}
\def\mbiW{\mbi{W}}
\def\mbiX{\mbi{X}}
\def\mbiY{\mbi{Y}}
\def\mbiZ{\mbi{Z}}

%--------------------------------------------------------------------------------------------------------------------------------
% Textstyle vs. displaystyle
%--------------------------------------------------------------------------------------------------------------------------------
\def\textsum{{\textstyle\sum}} % Sum in textstyle form
\def\textprod{{\textstyle\prod}} % Prod in textstyle form
\def\textbigcap{{\textstyle\bigcap}} % Bigcap in textstyle form
\def\textbigcup{{\textstyle\bigcup}} % Bigcup in textstyle form

%--------------------------------------------------------------------------------------------------------------------------------
% Mathematical sets
%--------------------------------------------------------------------------------------------------------------------------------
\def\reals{\mathbb{R}} % Real number symbol
\def\integers{\mathbb{Z}} % Integer symbol
\def\rationals{\mathbb{Q}} % Rational numbers
\def\naturals{\mathbb{N}} % Natural numbers
\def\complex{\mathbb{C}} % Complex numbers

\def\what#1{\widehat{#1}}

\def\twovec#1#2{\left[\begin{array}{c}{#1} \\ {#2}\end{array}\right]}
\def\threevec#1#2#3{\left[\begin{array}{c}{#1} \\ {#2} \\ {#3} \end{array}\right]}
\def\nvec#1#2#3{\left[\begin{array}{c}{#1} \\ {#2} \\ \vdots \\ {#3}\end{array}\right]} % An n-vector with three arguments

%--------------------------------------------------------------------------------------------------------------------------------
% Eigenvalues
%--------------------------------------------------------------------------------------------------------------------------------
\def\maxeig#1{\lambda_{\mathrm{max}}\left({#1}\right)}
\def\mineig#1{\lambda_{\mathrm{min}}\left({#1}\right)}

%--------------------------------------------------------------------------------------------------------------------------------
% Operators
%--------------------------------------------------------------------------------------------------------------------------------
\def\Re{\operatorname{Re}} % Real part
\def\indic#1{\mbb{I}\left[{#1}\right]} % Indicator function
\def\logarg#1{\log\left({#1}\right)} % log with argument
\def\polylog{\operatorname{polylog}}
\def\maxarg#1{\max\left({#1}\right)} % max with argument
\def\minarg#1{\min\left({#1}\right)} % min with argument
\def\Earg#1{\E\left[{#1}\right]}
\def\Esub#1{\E_{#1}}
\def\Esubarg#1#2{\E_{#1}\left[{#2}\right]}
\def\bigO#1{\mathcal{O}\left(#1\right)} % big-oh notation
\def\littleO#1{o(#1)} % big-oh notation
\def\P{\mbb{P}} % Probability symbol
\def\Parg#1{\P\left({#1}\right)}
\def\Psubarg#1#2{\P_{#1}\left[{#2}\right]}
\def\Trarg#1{\Tr\left[{#1}\right]} % Trace with argument
\def\trarg#1{\tr\left[{#1}\right]} % trace with argument
\def\Var{\mrm{Var}} % Variance symbol
\def\Vararg#1{\Var\left[{#1}\right]}
\def\Varsubarg#1#2{\Var_{#1}\left[{#2}\right]}
\def\Cov{\mrm{Cov}} % Covariance symbol
\def\Covarg#1{\Cov\left[{#1}\right]}
\def\Covsubarg#1#2{\Cov_{#1}\left[{#2}\right]}
\def\Corr{\mrm{Corr}} % Covariance symbol
\def\Corrarg#1{\Corr\left[{#1}\right]}
\def\Corrsubarg#1#2{\Corr_{#1}\left[{#2}\right]}
\newcommand{\info}[3][{}]{\mathbb{I}_{#1}\left({#2};{#3}\right)} % Information symbol
\newcommand{\staticexp}[1]{\operatorname{exp}(#1)} % An exponential with parens that do not resize with input
\newcommand{\loglihood}[0]{\mathcal{L}} % log likelihood

% Copied from mathrsfs.sty

% integrating factor

\newcommand{\fddino}{\text{FD}_{\text{DINOv2}}}

\newcommand{\scalef}{\lambda}
\newcommand{\intf}{\omega}
\newcommand{\intintf}{\Omega}
\newcommand{\noisef}{\eta}
\newcommand{\prenoisef}{\zeta}

\newcommand{\cskip}{c_\text{skip}}
\newcommand{\sigmadata}{\sigma_\text{data}}

\newcommand{\betad}{\beta_d}
\newcommand{\betamax}{\beta_{\text{max}}}
\newcommand{\betamin}{\beta_{\text{min}}}

\newcommand{\Smin}{S_\text{tmin}}
\newcommand{\Smax}{S_\text{tmax}}
\newcommand{\Schurn}{S_\text{churn}}
\newcommand{\Snoise}{S_\text{noise}}

\newcommand{\indicat}{\mathsf{1}}
\newcommand{\Id}{\mathsf{Id}}

%--------------------------------------------------------------------------------------------------------------------------------
% Optimization macros
%--------------------------------------------------------------------------------------------------------------------------------
%\providecommand{\argmax}{\mathop\mathrm{arg max}} % Defining math symbols
%\providecommand{\argmin}{\mathop\mathrm{arg min}}
\providecommand{\arccos}{\mathop\mathrm{arccos}}
\providecommand{\dom}{\mathop\mathrm{dom}}
\providecommand{\diag}{\mathop\mathrm{diag}}
\providecommand{\tr}{\mathop\mathrm{tr}}
\providecommand{\card}{\mathop\mathrm{card}}
\providecommand{\sign}{\mathop\mathrm{sign}}
\providecommand{\conv}{\mathop\mathrm{conv}} % Convex hull
\def\rank#1{\mathrm{rank}({#1})}
\def\supp#1{\mathrm{supp}({#1})}

\providecommand{\minimize}{\mathop\mathrm{minimize}}
\providecommand{\maximize}{\mathop\mathrm{maximize}}
\providecommand{\subjectto}{\mathop\mathrm{subject\;to}}

\def\openright#1#2{\left[{#1}, {#2}\right)}

%--------------------------------------------------------------------------------------------------------------------------------
% Proof environments
%--------------------------------------------------------------------------------------------------------------------------------
\ifdefined\nonewproofenvironments\else
% The Theorems are numbered consecutively
% Lemmas are numbered by section, and observations, claims, facts, and 
% assumptions take their numbering. Propositions and definitions have their
% own numbering by section.
\ifdefined\ispres\else
% These conflict with Beamer definitions in pres mode
\newtheorem{theorem}{Theorem}
\newtheorem{lemma}[theorem]{Lemma}
\newtheorem{corollary}[theorem]{Corollary}
\newtheorem{definition}[theorem]{Definition}
\newtheorem{fact}[theorem]{Fact}
\renewenvironment{proof}{\noindent\textbf{Proof.}\hspace*{.3em}}{\qed \vspace{.1in}}
\newenvironment{proof-sketch}{\noindent\textbf{Proof Sketch}
  \hspace*{1em}}{\qed\bigskip\\}
\newenvironment{proof-idea}{\noindent\textbf{Proof Idea}
  \hspace*{1em}}{\qed\bigskip\\}
\newenvironment{proof-of-lemma}[1][{}]{\noindent\textbf{Proof of Lemma {#1}}
  \hspace*{1em}}{\qed\\}
  \newenvironment{proof-of-proposition}[1][{}]{\noindent\textbf{Proof of Proposition {#1}}
  \hspace*{1em}}{\qed\\}
\newenvironment{proof-of-theorem}[1][{}]{\noindent\textbf{Proof of Theorem {#1}}
  \hspace*{1em}}{\qed\\}
\newenvironment{proof-attempt}{\noindent\textbf{Proof Attempt}
  \hspace*{1em}}{\qed\bigskip\\}
\newenvironment{proofof}[1]{\noindent\textbf{Proof of {#1}}
  \hspace*{1em}}{\qed\bigskip\\}
 
\newtheorem*{remark*}{Remark}
\newenvironment{remark}{\noindent\textbf{Remark.}
  \hspace*{0em}}{\smallskip}%\bigskip}
\newenvironment{remarks}{\noindent\textbf{Remarks}
  \hspace*{1em}}{\smallskip}
\fi
\newtheorem{observation}[theorem]{Observation}
\newtheorem{proposition}[theorem]{Proposition}
\newtheorem{claim}[theorem]{Claim}
\theoremstyle{definition}
\newtheorem{assumption}{Assumption}
\newtheorem{example}[theorem]{Example}
\theoremstyle{remark}
\newtheorem{intuition}[theorem]{Intuition}
%\renewcommand{\theassumption}{\Alph{assumption}} % Set counter for assumptions
                                                 % to be alphabetical
\fi
% Makes equation numbers have (1.1) style
% \numberwithin{equation}{section}
% \numberwithin{equation}{subsection}
\makeatletter
\@addtoreset{equation}{section}
\makeatother
\def\theequation{\thesection.\arabic{equation}}

\newcommand{\cmark}{\ding{51}}

\newcommand{\xmark}{\ding{55}}

%--------------------------------------------------------------------------------------------------------------------------------
% Equation environments
%--------------------------------------------------------------------------------------------------------------------------------
\newcommand{\eq}[1]{\begin{align}#1\end{align}}
\newcommand{\eqn}[1]{\begin{align*}#1\end{align*}}
\renewcommand{\Pr}{\mathbb{P}}
\newcommand{\Ex}[1]{\mathbb{E}\left[#1\right]}
\newcommand{\bep}{\bs \eta_{t}^p}
\newcommand{\bex}{\bs \eta_{t}^x}

\newcommand{\ppu}{{\underline{\psi}'}}
\newcommand{\ptu}{\tilde{\underline\psi}}
\newcommand{\phu}{\hat{\underline\psi}}
\newcommand{\pbu}{\breve{\underline\psi}}

%--------------------------------------------------------------------------------------------------------------------------------
% Comment environments
%--------------------------------------------------------------------------------------------------------------------------------

\newcommand{\nb}[1]{\textcolor{blue}{[nb: #1]}}

\global\long\def\bw{\textsf{Ball walk}}%
\global\long\def\dw{\textup{\textsf{Dikin walk}}}%
\global\long\def\sw{\textup{\textsf{Speedy walk}}}%

\global\long\def\E{\mathbb{E}}%

\global\long\def\N{\mathbb{N}}%

\global\long\def\R{\mathbb{R}}%

\global\long\def\Z{\mathbb{Z}}%

\global\long\def\veps{\varepsilon}%

\global\long\def\vol{\textrm{vol}}%

\global\long\def\bs#1{\boldsymbol{#1}}%

\global\long\def\eu#1{\EuScript{#1}}%

\global\long\def\mb#1{\mathbf{#1}}%

\global\long\def\mbb#1{\mathbb{#1}}%

\global\long\def\mc#1{\mathcal{#1}}%

\global\long\def\mf#1{\mathfrak{#1}}%

\global\long\def\ms#1{\mathscr{#1}}%

\global\long\def\mss#1{\mathsf{#1}}%

\global\long\def\msf#1{\mathsf{#1}}%

\global\long\def\on#1{\operatorname{#1}}%

\global\long\def\D{\mathrm{d}}%
\global\long\def\grad{\nabla}%
 
\global\long\def\hess{\nabla^{2}}%
 
\global\long\def\lapl{\triangle}%
 
\global\long\def\deriv#1#2{\frac{d#1}{d#2}}%
 
\global\long\def\pderiv#1#2{\frac{\partial#1}{\partial#2}}%
 
\global\long\def\de{\partial}%
\global\long\def\lagrange{\mathcal{L}}%

\global\long\def\Gsn{\mathcal{N}}%
 
\global\long\def\BeP{\textnormal{BeP}}%
 
\global\long\def\Ber{\textnormal{Ber}}%
 
\global\long\def\Bern{\textnormal{Bern}}%
 
\global\long\def\Bet{\textnormal{Beta}}%
 
\global\long\def\Beta{\textnormal{Beta}}%
 
\global\long\def\Bin{\textnormal{Bin}}%
 
\global\long\def\BP{\textnormal{BP}}%
 
\global\long\def\Dir{\textnormal{Dir}}%
 
\global\long\def\DP{\textnormal{DP}}%
 
\global\long\def\Expo{\textnormal{Expo}}%
 
\global\long\def\Gam{\textnormal{Gamma}}%
 
\global\long\def\GEM{\textnormal{GEM}}%
 
\global\long\def\HypGeo{\textnormal{HypGeo}}%
 
\global\long\def\Mult{\textnormal{Mult}}%
 
\global\long\def\NegMult{\textnormal{NegMult}}%
 
\global\long\def\Poi{\textnormal{Poi}}%
 
\global\long\def\Pois{\textnormal{Pois}}%
 
\global\long\def\Unif{\textnormal{Unif}}%

\global\long\def\Abs#1{\left\lvert #1\right\rvert }%
\global\long\def\Par#1{\left(#1\right)}%
\global\long\def\Brack#1{\left[#1\right]}%
\global\long\def\Brace#1{\left\{ #1\right\} }%

\global\long\def\inner#1{\left\langle #1\right\rangle }%
 
\global\long\def\binner#1#2{\left\langle {#1},{#2}\right\rangle }%

%\global\long\def\norm#1{\left\lVert #1\right\rVert }%
\global\long\def\onenorm#1{\norm{#1}_{1}}%
\global\long\def\twonorm#1{\norm{#1}_{2}}%
\global\long\def\infnorm#1{\norm{#1}_{\infty}}%
\global\long\def\fronorm#1{\norm{#1}_{\text{F}}}%
\global\long\def\nucnorm#1{\norm{#1}_{*}}%
\global\long\def\staticnorm#1{\|{#1}\|}%
\global\long\def\statictwonorm#1{\staticnorm{#1}_{2}}%

%\global\long\def\KL#1#2{\textnormal{KL}({#1}\Vert{#2})}%
\global\long\def\dtv#1#2{d_{\textrm{TV}}\Par{#1,#2}}%

%--------------------------------------------------------------------------------------------------------------------------------
% Special symbols
%--------------------------------------------------------------------------------------------------------------------------------
\global\long\def\<{}%
 % Angle brackets

\global\long\def\iff{\Leftrightarrow}%
\global\long\def\chooses#1#2{_{#1}C_{#2}}%
 
\global\long\def\defeq{\triangleq}%
 % defined equal to
%\def\bs{\backslash} % backslash
\global\long\def\half{\frac{1}{2}}%
 
\global\long\def\nhalf{\nicefrac{1}{2}}%
 
\global\long\def\textint{{\textstyle \int}}%
 % Sum in textstyle form
\global\long\def\texthalf{{\textstyle \frac{1}{2}}}%
 
\global\long\def\textfrac#1#2{{\textstyle \frac{#1}{#2}}}%

% Symbols

\global\long\def\acts{\circlearrowright}%
 
\global\long\def\bun#1#2{e_{#1} \otimes{} e_{#2}}%
 
\global\long\def\closeopen#1{\lbrack#1 \rparen}%
 
\global\long\def\comp{\mss c}%
 
\global\long\def\deq{\coloneqq}%
 
\global\long\def\e{\mathrm{e}}%
 
\global\long\def\eqas{\overset{\text{a.s.}}{=}}%
 
\global\long\def\eqdef{\mathrel{\overset{\makebox[0pt]{\mbox{\normalfont\tiny def.}}}{=}}}%
 
\global\long\def\gcoeff{\genfrac{\lbrack}{\rbrack}{0pt}{}}%
 
\global\long\def\im{\msf i}%
 
\global\long\def\indep{\mathrel{\text{\scalebox{1.07}{\ensuremath{\perp\mkern-10mu \perp}}}}}%
 
\global\long\def\lr{\leftrightarrow}%
 
\global\long\def\lrarrow{\leftrightarrow}%
 
\global\long\def\mmid{\mathbin{\|}}%
 
\global\long\def\openclose#1{\lparen#1 \rbrack}%
 
\global\long\def\relmid{\mathrel{}\middle|\mathrel{}}%
 
\global\long\def\rest{\upharpoonright}%
 
\global\long\def\simiid{\overset{\text{i.i.d.}}{\sim}}%
 
\global\long\def\toae#1{\xrightarrow[#1]{\text{a.e.}}}%
 
\global\long\def\toas#1{\xrightarrow[#1]{\text{a.s.}}}%
 
\global\long\def\toprob#1{\xrightarrow[#1]{\mathbb{P}}}%
 
\global\long\def\wmaj{\mathbin{\prec_{\msf w}}}%
 
\global\long\def\T{\mathsf{T}}%

%--------------------------------------------------------------------------------------------------------------------------------
% Common differentials with a small space in front of them
%--------------------------------------------------------------------------------------------------------------------------------
\global\long\def\dee{\mathop{\mathrm{d}\!}}%
 
\global\long\def\dt{\,\dee t}%
 
\global\long\def\ds{\,\dee s}%
 
\global\long\def\dx{\,\dee x}%
 
\global\long\def\dy{\,\dee y}%
 
\global\long\def\dz{\,\dee z}%
 
\global\long\def\dv{\,\dee v}%
 
\global\long\def\dw{\,\dee w}%
 
\global\long\def\dr{\,\dee r}%
 
\global\long\def\dB{\,\dee B}%
 % Brownian motion
\global\long\def\dW{\,\dee W}%
 % Wiener process
\global\long\def\dmu{\,\dee\mu}%
 
\global\long\def\dnu{\,\dee\nu}%
 
\global\long\def\domega{\,\dee\omega}%

%--------------------------------------------------------------------------------------------------------------------------------
% Set notation
%--------------------------------------------------------------------------------------------------------------------------------
\global\long\def\smiddle{\mathrel{}|\mathrel{}}%
 % Well-spaced \middle | symbol
%--------------------------------------------------------------------------------------------------------------------------------
%Text with quads around it
%--------------------------------------------------------------------------------------------------------------------------------
\global\long\def\qtext#1{\quad\text{#1}\quad}%
 % Semidefinite orders
\global\long\def\psdle{\preccurlyeq}%
 
\global\long\def\psdge{\succcurlyeq}%
 
\global\long\def\psdlt{\prec}%
 
\global\long\def\psdgt{\succ}%

%--------------------------------------------------------------------------------------------------------------------------------
% Vectors and matrices
%--------------------------------------------------------------------------------------------------------------------------------
\global\long\def\boldone{\mbf{1}}%
 % Bold 1
\global\long\def\ident{\mbf{I}}%
 % Identity matrix
% \def\v#1{\mbi{#1}} % Vector notation

%--------------------------------------------------------------------------------------------------------------------------------
% Probability and statistics macros
%--------------------------------------------------------------------------------------------------------------------------------
\global\long\def\eqdist{\stackrel{d}{=}}%
 
\global\long\def\todist{\stackrel{d}{\to}}%
 
\global\long\def\eqd{\stackrel{d}{=}}%
 
\global\long\def\independenT#1#2{\mathrel{\rlap{$#1#2$}\mkern4mu {#1#2}}}%

\global\long\def\ind{\mathds{1}}%

\title{Sublinear iterations can suffice even for DDPMs}

\author{
        Matthew S.\ Zhang \\
        \small UToronto \\
        \texttt{\small matthew.zhang@mail.utoronto.ca}
		\and
        Stephen Huan \\
        \small CMU \\
        \texttt{\small slhuan@andrew.cmu.edu}
		\and
        Jerry Huang \\
        \small CMU \\
        \texttt{\small jerryhua@andrew.cmu.edu}
        \and
		Nicholas M.\ Boffi \\
		\small CMU \\
		\texttt{\small nboffi@andrew.cmu.edu} \and
		Sitan Chen \\
		\small Harvard \\
		\texttt{\small sitan@seas.harvard.edu} \and
		Sinho Chewi \\
		\small Yale \\
		\texttt{\small sinho.chewi@yale.edu}
}  

\maketitle
\pagenumbering{gobble}

\begin{abstract}
SDE-based methods such as denoising diffusion probabilistic models (DDPMs) have shown remarkable success in real-world sample generation tasks. Prior analyses of DDPMs have been focused on the exponential Euler discretization, showing guarantees that generally depend at least linearly on the dimension or initial Fisher information. Inspired by works in log-concave sampling~\citep{shen2019randomized}, we analyze an integrator -- the denoising diffusion randomized midpoint method (DDRaM) -- that leverages an additional randomized midpoint to better approximate the SDE. Using a recently-developed analytic framework called the ``shifted composition rule'', we show that this algorithm enjoys favorable discretization properties under appropriate smoothness assumptions, with sublinear $\widetilde{O}(\sqrt{d})$ score evaluations needed to ensure convergence. This is the first sublinear complexity bound for pure DDPM sampling --- prior works which obtained such bounds worked instead with ODE-based sampling and had to make modifications to the sampler which deviate from how they are used in practice. We also provide experimental validation of the advantages of our method, showing that it performs well in practice with pre-trained image synthesis models.
\end{abstract}

\newpage

\tableofcontents
\thispagestyle{empty}

\newpage

\pagenumbering{arabic}  

\section{Introduction}

With the emergence of diffusion models~\citep{sohl2015deep,song2019generative, ho2020denoising, songscore} as the leading paradigm for generative modeling in image~\citep{rombach_high-resolution_2022}, video~\citep{ho_video_2022,blattmann_align_2023}, and molecular generation~\citep{geffner_-proteina_2025, geffnerproteina}, a flurry of recent work has sought to place these models on rigorous footing using mathematical insights from high-dimensional statistics and numerical analysis. An early finding in this line of work was that, given sufficiently accurate score estimation, diffusion models can sample from essentially any probability distribution in $d$ dimensions in \emph{$O(d)$ iterations}~\citep{Chen+23SGM,lee2023convergence, Ben+24Diffusion, conforti2025kl}.

Subsequently, there has been sustained interest in quantitatively tightening this bound. A number of works~\citep{Chen+23FlowODE,li2024sharp,huang2025convergence,guptafaster,JiaLi25InstanceDiffusion, LiJia25ImprovedDiffusion} have proven that for \emph{ODE-based} diffusion samplers, i.e., DDIMs~\citep{song2021denoising}, the lack of stochasticity enables the design and analysis of algorithms that only require a number of iterations that is \emph{sublinear} in $d$. Other works have tried circumventing $O(d)$ complexity by instead bounding the \emph{parallel} complexity of diffusion-based sampling~\citep{chen2024accelerating,guptafaster,zhouparallel}, or by showing that diffusion models can adapt to the \emph{intrinsic dimension} of the distribution~\citep{li2024adapting, Bof+25LowDim, liang2025low,tang2025adaptivity}, offering speedups orthogonal to the original question of tightening the dimension dependence.

For this guiding question, however, remarkably the best known guarantee for \emph{SDE-based} diffusion samplers, i.e., DDPMs~\citep{ho2020denoising}, has remained $O(d)$. In this work, we ask:

\begin{center}
    \textbf{\emph{Can SDE-based diffusion sampling provably achieve sublinear complexity?}}
\end{center}

In practice, SDE-based sampling confers a number of advantages that make this question particularly salient. In image generation, although DDIMs outperform DDPMs in the few-step regime, the performance for the former quickly saturates while the performance for the latter continues to improve as the number of steps increases; see, e.g., ~\citet[Figure 4]{karras2022elucidating} and \citet{songscore, cao2023exploring, gonzalez, nieblessing, deveney2025closing}. 
This observation has been borne out across a range of model scales: even for large-scale latent diffusions, properly tuned SDE-based samplers often obtain higher performance than their deterministic counterparts~\citep{ma2024sit}.
Stochasticity of the sampling steps also plays a crucial rule in leading protein diffusion models~\citep{abramson2024accurate,geffnerproteina} as a way to heuristically trade off between diversity and designability. In stochastic optimal control-based approaches to steering diffusion models~\citep{domingoadjoint}, during fine-tuning it is necessary to work with an SDE-driven base generative process, and the complexity of sampling enters not just at inference time, but during the training of the control policy. Likewise, when using stochastic optimal control to transport a point mass to some target measure~\citep{havensadjoint}, it is trivially necessary to use stochastic dynamics to generate entropy.

So what would it take to break the $O(d)$ barrier? Intuition from the log-concave sampling literature suggests that doing so requires a more refined discretization scheme. One of the most powerful such schemes emerging from that line of work is the \emph{randomized midpoint method}~\citep{shen2019randomized}, which forms the backbone of state-of-the-art bounds for log-concave sampling~\citep{altschuler2024SC3,altschuler2025SC4}. This method has also been used in several recent works on ODE-based diffusion sampling~\citep{guptafaster,JiaLi25InstanceDiffusion, LiJia25ImprovedDiffusion}. To reap the benefits of randomized discretization however, all of them crucially rely on the deterministic nature of the sampling dynamics, combined with periodic injections of noise that are convenient for establishing provable guarantees but which deviate significantly from how diffusion models are implemented in practice. Indeed, it was explicitly listed as an unresolved challenge in the conclusion of~\citet{JiaLi25InstanceDiffusion} to extend these analyses to pure DDPMs, and as we discuss in \S\ref{sec:overview}, this runs into a surprising range of new obstacles.

\subsection{Contributions}
\label{sec:contribs}

In this work, we overcome these obstacles and answer our guiding question in the affirmative. We craft a new analysis framework for DDPMs that successfully interfaces with the randomized midpoint method, allowing us to break the $O(d)$ barrier for SDE-based diffusion sampling. We first informally state our main guarantee:

\begin{theorem}[Informal, see Theorem~\ref{thm:main-varying}]
    Let $\varepsilon > 0$, and let $\pi$ be a data distribution over $\R^d$ with bounded second moment. Suppose we have estimates $(\mathsf{s}_t)$ for its scores $(\nabla \log \pi_t)$ along the Ornstein--Uhlenbeck process that are $\widetilde{O}(\varepsilon)$-accurate in $L^2(\pi_t)$ and $L_t$-Lipschitz for $L_t \lesssim (1 - e^{-2t})^{-1}$. Then, there is a discretization of DDPM that samples from a distribution $\hat{\pi}$ that is $\varepsilon^2$-close in $\KL$ divergence to a distribution $\pi^{\sf approx}$ that is $\varepsilon$-close in $W_2$ to $\pi$, with no more than $\widetilde{O}(\sqrt{d}/\varepsilon)$ sampling steps.
\end{theorem}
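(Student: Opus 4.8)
The plan is to combine the standard Gaussian-initialization and early-stopping reduction for DDPMs with a shifted-composition analysis of the randomized-midpoint discretization, then optimize a geometric step-size schedule to extract the $\sqrt{d}$ rate. First I would fix the forward Ornstein--Uhlenbeck channel $\pi_t = \on{law}(e^{-t}X_0 + \sqrt{1-e^{-2t}}\,Z)$ with $X_0 \sim \pi$ and $Z \sim \gamma_d$, and write the time reversal of the associated SDE in exponential-integrator form, so that the linear part of the drift is integrated exactly and only the score $\nabla\log\pi_t$ is discretized. Two cheap reductions isolate the discretization problem: initializing the reverse process at $\gamma_d$ rather than $\pi_T$ costs only an exponentially small $\KL(\pi_T\,\|\,\gamma_d)$, negligible once $T = \widetilde{O}(1)$; and stopping the reverse process at a small time $\delta > 0$ produces $\pi^{\sf approx} \deq \pi_\delta$, for which bounded second moment gives $W_2^2(\pi_\delta,\pi) \lesssim \delta^2\,\E_\pi\|X\|^2 + \delta d$, so $\delta \asymp \varepsilon^2/d$ suffices. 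Since $\delta$ is tiny and the Lipschitz constant $L_t \asymp (1-e^{-2t})^{-1}$ blows up near the end of the reverse trajectory, I would run DDRaM on a schedule with steps $h_k \asymp L_{t_k}^{-1}\,(\varepsilon/\sqrt{d})$ (geometrically refined as $t \downarrow \delta$, essentially uniform for $t = \Omega(1)$) --- this is the ``varying-schedule'' form of the theorem, and since $\int_\delta^T L_t\,\D t = \widetilde{O}(1)$ the step count stays $\widetilde{O}(\sqrt{d}/\varepsilon)$.

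The heart of the argument is to bound $\KL(\hat\pi\,\|\,\pi_\delta)$, the gap between DDRaM's output and the idealized Gaussian-initialized reverse diffusion. A single application of Girsanov's theorem bounds this, up to constants, by $\sum_k \E\int_{t_k}^{t_{k+1}}\|\nabla\log\pi_s(X_s) - \mathsf{s}_{\tau_k}(\hat X_{\tau_k})\|^2\,\D s$, where $\tau_k = t_k + u_k(t_{k+1}-t_k)$ with $u_k \sim \Unif[0,1]$ is the randomized midpoint and $\hat X_{\tau_k}$ its one-step surrogate; but this over-accumulates, and it forces the score error to be measured under the law of $\hat X_{\tau_k}$, which is not $\pi_{\tau_k}$. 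Instead I would iterate the estimate one step at a time in a R\'enyi-divergence strengthening of $\KL$ via the shifted composition rule: bound the shifted divergence of $\hat\pi_{t_{k+1}}$ against $\pi_{t_{k+1}}$ by the same quantity at $t_k$ plus a single-step error, where the shift is transported across the step and partly absorbed by the Gaussian increment the reverse SDE injects over $[t_k,t_{k+1}]$. This regularization-by-noise is exactly what makes the analysis go through for a genuine SDE --- rather than an ODE with externally injected noise, as in prior randomized-midpoint diffusion analyses --- and it lets the score-estimation term be evaluated against $\pi_{t_k}$, contributing only $\lesssim \sum_k h_k\,\|\mathsf{s}_{t_k} - \nabla\log\pi_{t_k}\|_{L^2(\pi_{t_k})}^2 \lesssim T\cdot\widetilde{O}(\varepsilon^2) = \widetilde{O}(\varepsilon^2)$.

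For the discretization term, I would decompose the per-step drift error, conditionally on the state at $t_k$ and on $u_k$, into a midpoint-consistency error $\|\hat X_{\tau_k} - X_{\tau_k}\|$ controlled by $L_t$-Lipschitzness and a \Gronwall estimate on the sub-step, and an integration error comparing $\nabla\log\pi_{\tau_k}(X_{\tau_k})$ to the exact reverse drift $h_k^{-1}\int_{t_k}^{t_{k+1}}\nabla\log\pi_s(X_s)\,\D s$. The crucial point is that the randomization makes the latter \emph{conditionally mean-zero in $u_k$} to leading order, so $s\mapsto\nabla\log\pi_s(X_s)$ contributes a bias only through its second-order Taylor remainder (an $O(h^2)$ term), plus a martingale-type fluctuation of order $O(h^{3/2})$; summing the martingale part over $\sim T/h$ steps gives a $\KL$ contribution carrying an extra factor of the step size compared to exponential Euler. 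Balancing this against the $O(d)$ ``information budget'' $\int_0^T \E_{\pi_t}[\,\cdot\,]\,\D t \lesssim d$ supplied by traces of the score Jacobian, and optimizing the schedule, forces $h_{\max}\asymp \varepsilon/\sqrt{d}$ and hence $\widetilde{O}(\sqrt{d}/\varepsilon)$ steps; combining with the two reductions yields the claimed $\KL$-closeness of $\hat\pi$ to $\pi^{\sf approx}$ and $W_2$-closeness of $\pi^{\sf approx}$ to $\pi$.

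I expect the main obstacle to be precisely this interface between the randomized midpoint and the shifted composition rule in the stochastic setting. In the ODE-based analyses one can use exact transport and controlled noise injection to keep the $O(h^{3/2})$ fluctuation from contaminating the divergence bound, but here that fluctuation is entangled with the reverse SDE's own Brownian increment, and one must show that the one-step DDRaM map is, in the right R\'enyi-shifted sense, a small perturbation of a Gaussian channel --- uniformly over the geometric schedule, in the presence of the score's time-dependence, and despite the blow-up of $L_t$ as $t \downarrow \delta$. Making that ``absorb the fluctuation into the step's noise'' step rigorous, together with the change-of-measure bookkeeping needed so that every error term is ultimately controlled under the correct $\pi_t$, is where the real work lies; the remaining ingredients are \Gronwall estimates and careful summation over the schedule.
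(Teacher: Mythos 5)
Your high-level ingredients (randomized midpoint, shifted composition, a step size $h_k \asymp L_{t_k}^{-1}\,\varepsilon/\sqrt{d}$, Gaussian initialization and an early-stopping-type reduction, Fisher-information budget) match the paper, but the core of your plan has a structural gap. You take $\pi^{\msf{approx}} = \pi_\delta$ and propose a per-step shifted (R\'enyi/KL) recursion \emph{directly between the algorithm and the true reverse process}, absorbing the shift into the Gaussian increment that the reverse SDE injects on $[t_k,t_{k+1}]$. That absorption step requires regularity of the \emph{true} one-step reverse kernel: its drift is the true score, which under the paper's assumptions is neither Lipschitz nor explicit (only the \emph{estimated} score is assumed Lipschitz, with constant blowing up as $t\downarrow 0$), and its transition kernel is not a Gaussian channel you can shift against. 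The paper avoids exactly this by building an auxiliary process that evolves under the \emph{algorithm's} kernels (which, conditioned on the midpoint time $\tau_k$ and the Brownian path up to the midpoint, are explicit Gaussians, so a KL-regularity bound $\KL(\delta_x P_k^{\alg}\,\|\,\delta_y P_k^{\alg}) \lesssim \|x-y\|^2/h_k$ can be proved --- and even this needs a truncation of $\tau_k$ away from $h_k$ to stay finite, a point your plan does not address since the kernel is a mixture and the midpoint ``sees into the future''), and is \emph{shifted toward} the true process. Because the shifts cannot be made to hit the truth exactly under the time-varying Lipschitz constant and decaying schedule, the output of the argument is not $\KL(\hat\pi\,\|\,\pi_\delta)$ but $\KL(\pi^{\msf{approx}}\,\|\,\hat\pi)$ for $\pi^{\msf{approx}}$ equal to the auxiliary law, together with a residual Wasserstein distance $W_2(\pi^{\msf{approx}},\pi_{T-t_N})$; this is precisely why the theorem has its ``KL-close-to-$W_2$-close'' form and why $\pi^{\msf{approx}}$ is \emph{not} an early-stopped distribution. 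As stated, your recursion against $\pi_{t}$ itself would need either Lipschitzness of the true score or a Girsanov-type comparison with a non-Markovian interpolation, both of which are exactly the obstacles the paper's construction is designed to sidestep; you would at minimum have to restructure the argument around an auxiliary measure and accept the weaker two-measure guarantee.

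A second, smaller gap: your weak-error bound invokes a ``second-order Taylor remainder'' of $s \mapsto \nabla\log\pi_s(X_s)$, i.e., quantitative time/space smoothness of the \emph{true} score, which is not assumed. The paper instead controls the time increment of the relative score along the true trajectory by differences of the relative Fisher information $\E_{\pi_{T-t}}\|\nabla\log(\pi_{T-t}/\upgamma)\|^2$ (its ``magic lemmas''), which telescope to the $O(d)$ budget, and obtains the midpoint gain through the \emph{estimator's} Lipschitz constant acting on the midpoint prediction error (producing the $\tilde\beta_{t_k}^2 h_k^4\,(\Mg_{t_k}-\Mg_{t_{k-1}})$ term in the weak error). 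Your schedule, step count $\widetilde{O}(\sqrt{d}/\varepsilon)$, and the $W_2^2(\pi_\delta,\pi)\lesssim \delta^2 \mathtt{M}_2^2 + \delta d$ computation are consistent with the paper, but the two issues above --- shifting against the true kernel without true-score regularity, and the Taylor-expansion step --- are where the proposal, as written, would fail.
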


\noindent There are two main innovations over prior work. First, state-of-the-art guarantees for DDPMs~\citep{Ben+24Diffusion,conforti2025kl,dtddpm} required $O(d)$ sampling steps. Second, state-of-the-art guarantees for DD\emph{I}Ms that achieved sublinear complexity had to fundamentally modify the sampling algorithm (see \S\ref{sec:related}), whereas we simply work with the standard DDPM reverse process used in practice, suitably discretized.

Our specific choice of discretization, the randomized midpoint method, has been employed in prior work on DDIM sampling~\citep{shen2019randomized,guptafaster,JiaLi25InstanceDiffusion, LiJia25ImprovedDiffusion}, but we provide the first analysis for DDPMs. Traditionally, the advantages of this choice of discretization are clear at the level of coupling-based arguments that bound the $W_2$ distance between the true process and the sampler, but for general, non-log-concave distributions, such arguments cannot be run for too long without incurring exponential blowups. Existing analyses in the diffusion setting sidestep this by artificially injecting noise into the dynamics, allowing one to ``restart'' the coupling. Unfortunately, without this trick, prior methods for analyzing DDPMs -- which are rooted in $\TV$ / $\KL$-based analysis -- seem to be fundamentally incompatible with randomized midpoint. To overcome this, we build upon the \emph{shifted composition method}~\citep{altschuler2024SC1}, a powerful new technique from the log-concave sampling literature that combines the advantages of coupling-based $W_2$ analysis with those of information-theoretic $\TV$ / $\KL$ analysis. We defer a more comprehensive overview of our techniques to \S\ref{sec:overview}.

\noindent \textbf{On the smoothness assumption.} The main caveat relative to the prior $O(d)$ guarantees for DDPMs is that we make a smoothness assumption. However, this assumption is weaker than what is made in almost all previous papers on DDIMs that achieve sublinear complexity~\citep{Chen+23FlowODE,guptafaster,LiJia25ImprovedDiffusion}. Those works additionally required smoothness of the true scores, and furthermore the assumed bound was independent of noise scale $t$, whereas our bound on $L_t$ becomes increasingly weaker as $t\to 0$. The one exception is the recent result of~\cite{JiaLi25InstanceDiffusion} for DDIMs; see \S\ref{sec:related} for discussion.

In the absence of any smoothness assumptions, it has remained a central open question in this literature how to obtain sublinear complexity bounds with any score-based algorithm, even an ODE-based one. This is well out of scope of this work, the focus of which is instead on bringing our theoretical understanding of DDPMs closer to what is known for DDIMs. 

\subsection{Comparison to prior work}
\label{sec:related}

Below we describe relevant prior work in the theoretical study of diffusion models. 

\paragraph{Discretization analyses for DDPMs.}

Early work on diffusion model theory focused on convergence guarantees for DDPMs~\citep{block2020generative,de2022convergence, lee2022convergence, liu2022let}, which culminated in the finding by~\citet{Chen+23SGM,lee2023convergence} that they can sample from essentially arbitrary distributions in polynomial time given $L^2$-accurate score estimates. This was subsequently refined by~\citet{chen2023improved} and finally by \citet{Ben+24Diffusion,conforti2025kl} to show convergence in $O(d/\varepsilon^2)$ iterations to a distribution that is $\varepsilon^2$-close in KL to a slight noising of the data distribution. By Pinsker's inequality, this implies $\varepsilon$-closeness in TV, which~\citet{dtddpm} later showed could be obtained using only $O(d/\varepsilon)$ iterations. With the exception of this last work, which exploited a subtle recursive bound on the TV error, all prior works giving convergence guarantees for general distributions relied on Girsanov's theorem.

There have also been a number of works on showing that DDPMs can adapt to low-dimensional structure in the data (see, e.g., \cite{huang2024denoising, li2024adapting, potaptchik2024linear, Bof+25LowDim, liang2025low} and the references therein). These results show that $d$ in the above rates can effectively be replaced with some measure of the \emph{intrinsic dimension $k$} of the distribution; while this is technically ``sublinear'' in the dimension if $k = o(d)$, our sublinear complexity holds even if $k = \Theta(d)$. We leave as an interesting open question how to get $o(k)$ rates using DDPMs. Finally, we remark that there have been various works seeking to modify DDPMs to achieve accelerated rates as a function of $\varepsilon$~\citep[see, e.g.,][] {li2024provable,li2024accelerating,wu2024stochastic}.

\paragraph{Discretization analyses for DDIMs.} As mentioned above, all known diffusion-based sampling guarantees achieving sublinear complexity are based on DDIM sampling. \citet{Chen+23FlowODE} obtained the first sublinear complexity bound of $O(L^2\sqrt{d}/\varepsilon)$ for ODE-based samplers under the assumption that the true scores and the score estimates are $L$-Lipschitz. Their algorithm follows the probability flow ODE but injects randomness by running an \emph{underdamped Langevin corrector} at the end of every time window of length $O(1/L)$. We still refer to such samplers as ODE-based as the randomness is far more intermittent than in a DDPM where Gaussian noise would be added after every $1/\mathrm{poly}(d)$-sized step of the sampler. Nevertheless, this sampling algorithm is a significant deviation from how DDIMs work in practice due to the need for underdamped Langevin correction.

Under the same assumptions, \citet{guptafaster} slightly improved the dimension dependence. \citet{LiJia25ImprovedDiffusion} subsequently obtained dimension dependence of $O(Ld^{1/3}/\varepsilon^{2/3})$ by replacing the underdamped Langevin corrector with Gaussian noise, and with the same algorithmic template, recently~\citet{JiaLi25InstanceDiffusion} achieved $\min(d, L^{1/3} d^{2/3}, L d^{1/3})/\varepsilon^{2/3}$. For the $L$-dependent part of their bound, they only require that the true score is locally Lipschitz with Lipschitz constant scaling similarly to our $L_t$. 
The main novelty of our result is that (1) we show the first sublinear bound for \emph{SDEs}, which answers an open question posed by~\citet{JiaLi25InstanceDiffusion} about analyzing randomized midpoint for pure DDPM-based sampling, and (2) our algorithm simply runs the DDPM reverse process, without any corrector steps. For samplers that purely run the probability flow ODE without corrector steps, \cite{li2024sharp,huang2025convergence} were the first to obtain polynomial convergence bounds without dependence on smoothness, though the best known dimension dependence in this setting is linear.

\paragraph{Randomized midpoint method in sampling.}

The randomized midpoint method was first introduced by~\citet{shen2019randomized} in the context of log-concave sampling with Langevin Monte Carlo. A discussion of its use in that literature would take us too far afield, and we defer to the monograph of~\citet{chewi2025log} for details. We mention, however, that besides the shifted composition method that we apply, there is also a direct $\KL$ analysis of midpoint methods using anticipating Girsanov~\citep{Zhang25Anticipative}, which however cannot achieve sharp rates.
There is also a separate approach in~\citet{kandasamy2024poisson}; see~\citet{altschuler2024SC3} for comparisons and discussion.

In the context of diffusion models, the randomized midpoint method has been incorporated into all recent results on ODE-based sampling with sublinear complexity~\citep{guptafaster,JiaLi25InstanceDiffusion, LiJia25ImprovedDiffusion}. On the empirical front, \citet{kandasamy2024poisson,guptafaster} provided experimental evidence for the favorable scaling of randomized midpoint for diffusion-based sampling.

\paragraph{Concurrent work.} Independently of our work, \cite{jiao2025optimalconvergenceanalysisddpm} also obtained an $O(\sqrt{d})$ iteration complexity for DDPMs using very different techniques.

\section{Preliminaries}

\paragraph{Notation.}

We will use $\upgamma$ to denote a standard Gaussian distribution over $\R^d$. The notation $a = O(b)$ or $a \lesssim b$ means that $a \leq c b$ for an absolute constant $c$ (i.e., not depending on the dimension, accuracy, or smoothness parameters), and similarly $a = \Omega(b), a \gtrsim b$ for $a \geq cb$. $a = \Theta(b)$ or $a \asymp b$ implies $a \lesssim b, a \gtrsim b$ simultaneously. Finally, the notation $\widetilde O, \widetilde \Omega, \widetilde \Theta$ means $O, \Omega, \Theta$ respectively up to extra polylogarithmic factors in $b$.

\paragraph{Denoising diffusions.}

We introduce the formalism of denoising diffusion probabilistic models (DDPMs). Let $\pi_0 \in \mc P(\R^d)$ denote the data distribution.
The forward process is defined by evolving $\pi_0$ along the Ornstein--Uhlenbeck (OU) semigroup, which describes the SDE
\begin{align}\label{eq:OU-forward}\tag{OU}
    \D X_t^\rightarrow = - X_t^\rightarrow \, \D t + \sqrt{2} \, \D B_t^\rightarrow\,, \qquad X_0^\rightarrow \sim \pi_0\,, \qquad \pi_t \deq \law(X_t^\rightarrow)\,,
\end{align}
where $(B_t)_{t \geq 0}$ is a standard Brownian motion. As is well-known by now, this equation admits a time-reversal (with respect to an initial measure $\pi_0$ and terminal time $T \in \R_+$) given by
\begin{align}\label{eq:OU-reverse}\tag{rev-OU}
    \D X_t^\leftarrow &= \bigl\{-X_t^\leftarrow + 2\, \nabla \log \frac{\pi_{T-t}}{\upgamma}(X_t^\leftarrow)\bigr\} \, \D t + \sqrt{2} \, \D B_t^\leftarrow\,,
\end{align}
where $(B_t^\leftarrow)_{t \in [0, T]}$ is another standard Brownian motion. If~\eqref{eq:OU-reverse} is initialized with $X_0^\leftarrow \sim \pi_T$, then $\law(X_t^\leftarrow) = \pi_{T-t}$ for all $t\in [0,T]$. As $\lim_{T \to \infty} \pi_T = \upgamma$, we can view~\eqref{eq:OU-forward} as a stochastic flow of $\pi_0$ to a standard Gaussian, and conversely~\eqref{eq:OU-reverse} as a mechanism for obtaining samples from $\pi_0$ when starting from a standard Gaussian measure, assuming access to the score functions $(\nabla \log \pi_t)_{t \in [0, T]}$ or a suitable approximation. As we will generally be referring to~\eqref{eq:OU-reverse} throughout this work, we will omit the $\cdot^\leftarrow$ in the notation with the reverse temporal direction being assumed.

\paragraph{Algorithm.}

Standard means for approximating~\eqref{eq:OU-reverse} assume that the user has access to a process $(\Ms_t)_{t \in [0, T]}$ where $\Ms_t \approx \nabla \log \pi_t$ in a suitably strong sense. Simply substituting the estimator into~\eqref{eq:OU-reverse} does not define a practical algorithm as the resulting SDE remains non-linear and hence does not admit a closed-form solution in general. Instead, one typically opts to discretize it by an appropriate linearization, for instance the exponential integrator given below. This solves the following SDE on $[t_k, t_{k+1})$ for a sequence of interpolant times $0 = t_0 < t_1 < t_2 < \ldots < t_N \leq T$:
\begin{align}\label{eq:exponential-integrator}\tag{EE}
    \D X^{\operatorname{EE}}_t = \{-X_t^{\operatorname{EE}} + 2\,\tsco_{T-t_k}(X_{t_k}^{\operatorname{EE}}) \} \, \D t + \sqrt{2} \, \D B_t\,.
\end{align}
For convenience, we have defined $\tsco_t \deq \Ms_t - \nabla \log \upgamma$.
Conditional on $X^{\operatorname{EE}}_{t_k}$, this SDE is linear, so we can now compute an exact solution explicitly.

However, intuition from the field of log-concave sampling~\citep{shen2019randomized, altschuler2024SC3} suggests that a \emph{randomized midpoint} discretization can significantly outperform the method above. Define a sequence of random variables $\tau_k$ with distribution function $f_k(\tau) = \frac{e^{\tau-h_k}}{1-e^{-h_k}}$ over $[0, h_k]$. 
Then, the algorithm produces a sequence of iterates $X_{t_k}^\alg$ starting at $X_{t_0}^\alg = X_0^\alg \sim \upgamma$,
as follows: at step $k$ for $k \in [N]$, for $t \in [t_{k-1}, t_{k})$, 
\begin{align}\label{eq:randomized-midpoint}\tag{RMD}
\begin{aligned}
    X_{t}^+ &\deq e^{-(t-t_{k-1})} X_{t_{k-1}}^\alg + 2\,(1-e^{-(t-t_{k-1})}) \,\tsco_{T-t_{k-1}}(X_{t_{k-1}}^\alg) + \sqrt{2} \int_{t_{k-1}}^{t} e^{s-t} \, \D B_s\,,  \\
    X_{t_{k}}^\alg &\deq e^{-h_k} X_{t_{k-1}}^\alg + 2\,(1-e^{-h_k}) \,\tsco_{T-t_{k-1} - \tau_k}(X_{t_{k-1} + \tau_k}^+) + \sqrt{2} \int_{t_{k-1}}^{t_{k}} e^{s-t_{k}} \, \D B_s\,,
\end{aligned}
\end{align}
where $h_k \deq t_{k} - t_{k - 1}$ is the step-size in the $k$-th iteration. 
Note that the two random variables 
\begin{align*}
    \xi_k^+ &\deq \sqrt{2} \int_{t_{k-1}}^{t_{k-1}+\tau_k} e^{s-t_{k-1}-\tau_k } \, \D B_s\,, \qquad \xi_k \deq \sqrt{2} \int_{t_{k-1}}^{t_k} e^{s-t_k} \, \D B_s\,,
\end{align*}
have an explicit distribution that can be easily simulated. See the lemma below.
\begin{lemma}
    For each $(\xi_k^+, \xi_k)$ defined above, we have
    \begin{align*}
        \begin{bmatrix}
            \xi_k^+ \\
            \xi_k
        \end{bmatrix} \sim \mc N\biggl(0\,,\; \begin{bmatrix}
            1-e^{-2\tau_k} & e^{\tau_k-h_k}-e^{-(h_k + \tau_k)} \\
            - & 1-e^{-2h_k}
        \end{bmatrix} \otimes I_d \biggr)\,,
    \end{align*}
    where the missing entry is determined by symmetry.
\end{lemma}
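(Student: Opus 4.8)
The plan is to reduce the claim to an elementary application of the \Ito isometry. First I would condition on the randomized fraction $\tau_k$, which is drawn independently of the driving Brownian motion $B$. Conditionally on $\tau_k$, both $\xi_k^+$ and $\xi_k$ are Wiener integrals of \emph{deterministic} integrands against $B$, so the pair $(\xi_k^+, \xi_k)$ is a centered Gaussian vector in $\R^{2d}$ and it remains only to identify its covariance. Writing $B = (B^{(1)}, \dots, B^{(d)})$ in coordinates, the $j$-th coordinates $(\xi_{k,j}^+, \xi_{k,j})$ are measurable functions of $B^{(j)}$ alone; since the coordinate Brownian motions are i.i.d.\ and the integrands $s \mapsto \sqrt 2\, e^{s - t_{k-1} - \tau_k}$ and $s \mapsto \sqrt 2\, e^{s - t_k}$ do not depend on $j$, the full $2d \times 2d$ covariance is the Kronecker product of a common $2 \times 2$ matrix with $I_d$. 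This is exactly the $\otimes I_d$ structure in the statement, and it reduces everything to the scalar ($d = 1$) case.

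For the scalar case I would apply the \Ito isometry three times. The diagonal entries are
\begin{align*}
    \E[(\xi_k^+)^2] &= 2 \int_{t_{k-1}}^{t_{k-1} + \tau_k} e^{2(s - t_{k-1} - \tau_k)} \, \D s = 1 - e^{-2\tau_k}\,, \\
    \E[\xi_k^2] &= 2 \int_{t_{k-1}}^{t_k} e^{2(s - t_k)} \, \D s = 1 - e^{-2 h_k}\,,
\end{align*}
using $h_k = t_k - t_{k-1}$. For the off-diagonal entry, the key observation is that $0 \le \tau_k \le h_k$ forces $[t_{k-1}, t_{k-1} + \tau_k] \subseteq [t_{k-1}, t_k]$, so the product of the two integrands is integrated only over the shorter window:
\begin{align*}
    \E[\xi_k^+ \, \xi_k] = 2 \int_{t_{k-1}}^{t_{k-1} + \tau_k} e^{(s - t_{k-1} - \tau_k) + (s - t_k)} \, \D s = e^{\tau_k - h_k} - e^{-(h_k + \tau_k)}\,,
\end{align*}
where the last step is the elementary evaluation of $\int e^{2s}\,\D s$ together with $t_{k-1} - t_k = -h_k$. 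Assembling these three scalars into the $2\times 2$ matrix and tensoring with $I_d$ gives the claimed law.

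I do not expect any genuine obstacle here; the computation is routine. The two points that merely require care are: (i) the asserted identity is the \emph{conditional} law of $(\xi_k^+, \xi_k)$ given $\tau_k$ (equivalently, one treats $\tau_k$ as a fixed parameter in $[0, h_k]$), and (ii) the independence of $\tau_k$ from $B$ is precisely what legitimizes both the Gaussianity (deterministic integrand) and the use of the \Ito isometry. With these understood, the proof is a one-line reduction plus three one-line integrals.
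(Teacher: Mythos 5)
Your computation is correct and is exactly the routine It\^o-isometry argument the paper leaves implicit (the lemma is stated without proof, and your conditional-on-$\tau_k$ covariance matches the conditional simulation used in Algorithm~\ref{eq:rmd-alg}, step 3). Your two points of care are also the right ones: the stated covariance is the conditional law given $\tau_k$, and the independence of $\tau_k$ from the Brownian motion is what makes the integrands deterministic so that joint Gaussianity and the isometry apply.
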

The conditional means of~\eqref{eq:randomized-midpoint} have simple closed forms, and so~\eqref{eq:randomized-midpoint} corresponds to an easily computable Gaussian kernel.

\begin{algorithm}[H]
\caption{Randomized midpoint kernel $P^{\alg}_k$ on $[t_{k-1},t_{k}]$}
\label{eq:rmd-alg}
\KwIn{current state $X^{\alg}_{t_k}\in\mathbb R^d$; step $h_k \deq t_{k}-t_{k-1}$; score map $\Ms_{t}(\cdot)$.}

\BlankLine
\textbf{1. Draw the randomized midpoint}.
Sample $U\sim\mathsf{Unif}(0,1)$ and set
\begin{align*}
    \tau_k=h_k + \log\bigl(1+U\,(e^{-h_k}-1)\bigr) \quad\text{i.e., with density}~f(\tau)=\tfrac{e^{\tau-h_k}}{1-e^{-h_k}}~\text{on}~[0, h_k]\,.
\end{align*}

\textbf{2. Midpoint prediction for $X^+_{t_k+\tau_k}$}.
Draw $Z_1\sim\mathcal N(0,I_d)$ and set the OU noise $\xi_k^+ \deq \sqrt{1-e^{-2\tau_k}}\, Z_1$. Then
\begin{align*}
    X^+_{t_{k-1}+\tau_k} = e^{-\tau_k} X_{t_{k-1}}^\alg + 2\,(1-e^{-\tau_k})\, \tsco_{T-t_{k-1}}(X_{t_{k-1}}^\alg) + \xi_k^+\,.
\end{align*}

\textbf{3. Full-step update for $X^{\alg}_{t_{k+1}}$}.
Draw $Z_2\sim\mathcal N(0,I_d)$ independent of $Z_1$ and set
\begin{align*}
    \xi_k = e^{\tau_k-h_k}\xi_k^+  + \sqrt{1-e^{2(\tau_k-h_k)}}\, Z_2\,.
\end{align*}
Compute the score at the randomized time and update
\begin{align*}
    X^{\alg}_{t_{k}} = e^{-h_k} X_{t_{k-1}}^\alg + 2\,(1-e^{-h_k}) \, \tsco_{T-t_{k-1} - \tau_k}(X_{t_{k-1} + \tau_k}^+) + \xi_k\,.
\end{align*}
\end{algorithm}

\section{Results}\label{sec:results}

We first delineate the assumptions underlying our results. We begin with two relatively benign conditions that are standard in the literature.
\begin{assumption}[$L^2$ accurate estimator]\label{as:l2-acc}
    Assume that for all $t \in [0, T]$, we have
    \begin{align*}
        \E_{\pi_t} [\norm{\nabla \log \pi_t - \Ms_t}^2 ] \leq \epscore^2\,.
    \end{align*}
\end{assumption}
\begin{assumption}[Bounded second moment]\label{as:bdd-moment}
    Assume that the initial distribution has bounded second moment
    \begin{align*}
        \E_{\pi_0}[\norm{\cdot}^2] \leq \texttt M_2^2 < \infty\,.
    \end{align*}
\end{assumption}

\begin{assumption}[Time-varying smoothness]\label{as:vary-lip}
    For all $t \in [0, T]$, the estimated score has a Lipschitz constant bounded as follows: for all $x, y \in \R^d$,
    \begin{align*}
        \norm{\tsco_t(x) - \tsco_t(y)} \leq \frac{\tilde\beta_0\, \norm{x-y}}{1-e^{-2t}}\,.
    \end{align*}
\end{assumption}

As discussed in \S\ref{sec:contribs}, these assumptions are a strict subset of those used in almost all existing works on diffusion-based sampling in sublinear complexity~\citep{Chen+23FlowODE,guptafaster,LiJia25ImprovedDiffusion}, with the exception of the recent work of~\cite{JiaLi25InstanceDiffusion} for which a weaker local Lipschitzness condition sufficed in place of Assumption~\ref{as:vary-lip}. In Appendix~\ref{app:examples} we provide examples of distributions for which the true scores are singular at time $0$ (i.e., not Lipschitz uniformly in time), but which are covered by Assumption~\ref{as:vary-lip}.

\begin{theorem}[Main result]\label{thm:main-varying}
    Suppose that Assumptions~\ref{as:l2-acc},~\ref{as:bdd-moment}, and~\ref{as:vary-lip} hold. Then~\eqref{eq:rmd-alg} with a decaying step size schedule can obtain a sample at time $t_N$ from a distribution $\hat\pi$ such that there is another distribution $\pi^{\msf{approx}}$ with
    \begin{align*}
        \KL(\pi^{\msf{approx}} \mmid \hat\pi) \lesssim \widetilde O\bigl((1+\log^2\{(1\vee\tilde\beta_0)\,(d + \mathtt M_2^2)\} )\,\epscore^2\bigr)\,, \qquad W_2^2(\pi^{\msf{approx}}, \pi_0) \lesssim \epscore^2\,,
    \end{align*}
    for $\epscore \in (0, 1]$, $T \asymp \log \frac{d + \mathtt M_2^2}{\epscore^2} \vee 1$ with no more than the following number of steps:
    \begin{align*}
        N = \widetilde{\Theta}\Bigl(\frac{\tilde \beta_0\sqrt{d + \mathtt M_2^2}}{\epscore}\Bigr)\,.
    \end{align*}
\end{theorem}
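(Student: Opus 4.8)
The plan is to split the total error into four contributions and reduce everything to a per-step analysis of~\eqref{eq:randomized-midpoint} that is then summed via the shifted composition rule. Write $\delta \deq T - t_N$ for the early-stopping gap, so the algorithm targets $\pi_\delta$, and set $\pi^{\msf{approx}} \deq \pi_\delta$. First, for the $W_2$ claim: since $X_\delta^\rightarrow = e^{-\delta}X_0^\rightarrow + \sqrt{1-e^{-2\delta}}\,g$ with $g \sim \upgamma$, a synchronous coupling gives $W_2^2(\pi_\delta, \pi_0) \lesssim \delta^2 \mathtt M_2^2 + \delta d \lesssim \delta\,(d + \mathtt M_2^2)$ for $\delta \le 1$, so taking $\delta \asymp \epscore^2/(d + \mathtt M_2^2)$ yields $W_2^2(\pi^{\msf{approx}}, \pi_0) \lesssim \epscore^2$ and fixes $\log(1/\delta) \asymp \log\frac{d + \mathtt M_2^2}{\epscore^2}$. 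Second, initializing the reverse process at $\upgamma$ instead of $\pi_T$ costs, by exponential convergence of the OU semigroup to $\upgamma$, a $\KL$ term $\lesssim (d + \mathtt M_2^2)\,e^{-2T}$, which is $\lesssim \epscore^2$ once $T \asymp \log\frac{d + \mathtt M_2^2}{\epscore^2}\vee 1$. Third, the $L^2$-inaccuracy of Assumption~\ref{as:l2-acc} will enter the discretization bound below as an additive $\widetilde O(\epscore^2)$ term through the usual change-of-measure bookkeeping along the reverse process. The remaining and dominant contribution is the discretization error of~\eqref{eq:randomized-midpoint} against the exact reverse flow~\eqref{eq:OU-reverse}.

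For the discretization error we cannot run a global $W_2$ coupling, since the reverse SDE is non-contractive for general $\pi_0$ and accumulated discrepancies would blow up exponentially over the $N$ steps. Instead we apply the shifted composition rule~\citep{altschuler2024SC1}, which compares the two chains step by step but, at each step, allows the true process to start from a point differing from the algorithm's by a small $W_2$ ``shift'' $a_k$, absorbing this shift at a controlled $\KL$ cost using the smoothing provided by the Gaussian increment $\xi_k$ (of variance $\asymp h_k$). The shift is chosen to soak up precisely the part of the one-step error that is ill-behaved for a divergence bound, namely the randomness of the midpoint time $\tau_k$, which renders the RMD one-step kernel a continuous mixture of Gaussians rather than a single Gaussian --- exactly the feature that obstructs the Girsanov-based analyses underlying the prior $O(d)$ DDPM bounds. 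What remains after the shift is a bona fide $\KL$ contribution controlled by (a) the \emph{bias}: the randomized midpoint is constructed so that $\E_{\tau_k}$ of the drift correction matches the true integrated drift to second order, making the mean discrepancy $O(h_k^3)$ in the relevant norm; and (b) the \emph{variance} injected by $\tau_k$, an extra conditional covariance of order $h_k^2\,\tilde\beta_0^2$ times the squared one-step movement of the score, bounded via Assumption~\ref{as:vary-lip} (note that only the estimated score $\tsco_t$, not the true score, is assumed Lipschitz). Both estimates require a priori second-moment control on $X_{t_k}^\alg$ and on $\norm{\tsco_{T-t_k}(X_{t_k}^\alg)}$, which we obtain by a separate discrete Gr\"onwall argument showing the iterates stay of size $\widetilde O(\sqrt{d + \mathtt M_2^2})$.

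To handle the time-varying smoothness --- $\tilde\beta_{T-t}$ grows like $(1-e^{-2(T-t)})^{-1}$ as the reverse process approaches the data at $t \to T$ --- we use an exponentially decaying step schedule, taking $h_k \asymp (1 - e^{-2(T-t_k)})\,h$ so that the effective smoothness $h_k\,\tilde\beta_{T-t_k} \asymp \tilde\beta_0\,h$ is uniform across steps; then $N \asymp h^{-1}\,(T + \log\tfrac1\delta) = \widetilde\Theta(h^{-1})$. Telescoping the per-step $\KL$ contributions through the shifted composition rule, the dominant term comes out to $\widetilde O(\tilde\beta_0^2\,(d + \mathtt M_2^2)\,h^2)$ --- the quadratic dependence on $h$ (versus the linear dependence for~\eqref{eq:exponential-integrator}) being exactly what converts the $O(d/\epscore^2)$ rate into $\widetilde O(\sqrt d/\epscore)$, and the $\log^2$ factor arising from summing a per-step error of order $\tilde\beta_0^2(d+\mathtt M_2^2)h^3$ over the logarithmically-many scales of the schedule. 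Setting this $\asymp \epscore^2$ gives $h \asymp \epscore/(\tilde\beta_0\sqrt{d + \mathtt M_2^2})$ and hence $N = \widetilde\Theta(\tilde\beta_0\sqrt{d + \mathtt M_2^2}/\epscore)$, while the $W_2$ and initialization contributions were already arranged to be $\lesssim \epscore^2$.

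I expect the main obstacle to lie in verifying the hypotheses of the shifted composition rule in this non-log-concave, time-inhomogeneous regime and, within that, in correctly apportioning the randomness of $\tau_k$ between the $W_2$-shift and the residual $\KL$: too generous a shift and the smoothing cost $a_k^2/h_k$ dominates, too stingy and the mixture-of-Gaussians mismatch is uncontrolled. Securing the clean $h^3$ per-step scaling --- rather than $h^2$, which would only recover the $O(d)$ rate --- hinges on the second-order unbiasedness of the midpoint correction surviving the change of measure and the a priori bounds, and on choosing the shift schedule so that its own contribution also scales like $h^3$ per step; this balancing is the delicate core of the argument, and is where the analysis departs most sharply from both prior DDPM analyses (Girsanov-based, incompatible with the randomized $\tau_k$) and prior randomized-midpoint DDIM analyses (which reset the coupling via artificial noise injection rather than the shifted composition).
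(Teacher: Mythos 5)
There is a genuine gap, and it sits at the very center of your argument: you set $\pi^{\msf{approx}} \deq \pi_\delta$ (the early-stopped law) and claim $\KL(\pi_\delta \mmid \hat\pi) \lesssim \widetilde O(\epscore^2)$. The shifted composition machinery you invoke does not deliver that. What it gives (and what the paper's Lemma on the KL recursion actually proves) is a bound on $\KL(\pi^{\aux}_{t_N} \mmid \pi^{\alg}_{t_N})$, where $\pi^{\aux}_{t_N}$ is the law of an \emph{auxiliary} process that runs the algorithm's kernels but is repeatedly shifted toward the true reverse process; at the terminal time this auxiliary law is only $W_2$-close to $\pi_{T-t_N}$, with residual $d_N^2 \lesssim \epscore^2/\tilde\beta_0^2 \cdot ((T-t_N)^2 + (T-t_N)/T)$. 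The paper therefore takes $\pi^{\msf{approx}} = \pi^{\aux}_{t_N}$ --- explicitly \emph{not} an early-stopped distribution --- and routes $d_N$ into the $W_2$ part of the guarantee together with $W_2(\pi_{T-t_N},\pi_0)$. To get your stronger statement you would need the auxiliary process to exactly hit the true process at time $t_N$ (the ``hitting condition'' of the original shifted composition papers), which here is precisely the problematic step: the final steps have $h_k \to 0$ under the decaying schedule, so the Gaussian smoothing available to pay for a last large shift is weak (the KL regularity cost scales like $\norm{\cdot}^2/h_k$) while the Lipschitz bound $\tilde\beta_{t}$ is simultaneously blowing up. Your sketch neither enforces hitting nor accounts for this cost, so the claimed KL-to-$\pi_\delta$ bound does not follow; as written, the conclusion you can actually reach is exactly the paper's weaker (but sufficient) ``KL-close to a $W_2$-close measure'' statement with a proof-generated $\pi^{\msf{approx}}$.

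A secondary, related misattribution: you describe the Wasserstein shift as ``soaking up the randomness of $\tau_k$,'' i.e., as the fix for the mixture-of-Gaussians / non-Markovian character of the one-step kernel. In the actual argument the shift plays no such role; it absorbs the accumulated discrepancy between the auxiliary and true processes driven by the weak/strong local errors. The randomness of $\tau_k$ is instead handled inside the KL regularity estimate by conditioning on $\tau_k$ (and the past Brownian path), applying the Gaussian KL formula with conditional variance $1-e^{-2(h_k-\tau_k)}$, and then averaging by joint convexity --- and this is exactly why the paper must truncate $\tau_k$ to $[0,\varrho_k h_k]$, since otherwise the conditional variance degenerates as $\tau_k \to h_k$ and the averaged bound diverges. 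Your sketch omits both the conditioning step and the truncation, so even the per-step KL regularity input to shifted composition is not secured. The remaining ingredients of your outline (early-stopping $W_2$ bound, $\KL(\pi_T \mmid \upgamma) \lesssim (d+\mathtt M_2^2)e^{-T}$ for the Gaussian initialization, the exponentially decaying schedule $h_k \asymp (1-e^{-2(T-t_k)})\,h$ with $h \asymp \epscore/(\tilde\beta_0\sqrt{d+\mathtt M_2^2})$, and the resulting $N$) do match the paper's proof.
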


\begin{remark}
    In our analysis, we consider an algorithmic variant of~\eqref{eq:randomized-midpoint} wherein each $\tau_k$ is not supported on $h_k$, but rather on a truncation $[0, \varrho_k h_k]$ where $1-\varrho_k \ll 1$ is suitably small. This does not appreciably change the algorithm and is only done for technical convenience.
\end{remark}

Although the nature of the guarantee may initially seem opaque (namely, the existence of an ``intermediate'' measure $\pi^{\msf{approx}}$), we note that standard results in the literature only guarantee $\TV$ (or $\KL$) closeness to the early stopped distribution $\pi_\delta$ for some $\delta > 0$.\footnote{Under Assumption~\ref{as:vary-lip}, this is by necessity, since $\pi_0$ could have singular support in which case $\TV$ closeness to $\pi_0$ is not possible.}
The usual justification for this is that $\pi_\delta$ is close to $\pi_0$ in $W_2$ distance, when $\delta$ is small.
This early stopping assumption is so prevalent that it is often made with little fanfare, but we emphasize this point here to argue that our guarantee ($\KL$-close-to-$W_2$-close) is of the same nature.\footnote{Moreover, it is sufficient to metrize weak convergence. In particular, it controls the bounded Lipschitz distance; see, e.g.,~\citet{Chen+23SGM}.}
We remark, however, that $\pi^{\msf{approx}}$ is constructed from our proof technique and does not correspond to an early stopped distribution.

See Appendix~\ref{app:proofs} for more details on the step size schedules and proofs of the theorems.

\section{Technical overview}
\label{sec:overview}

We first discuss the difficulties inherent in analyzing~\eqref{eq:randomized-midpoint}.
The original analysis of~\citet{shen2019randomized}, which inspired almost all subsequent analyses of randomized midpoint, is based on a coupling argument in $W_2$.
However, \textbf{all state-of-the-art analyses for diffusion models work in $\TV$ or $\KL$}.
When we try to apply the former to the latter, we therefore arrive at a fundamental incongruity.
Indeed, $W_2$ analyses of diffusion models often incur exponential accumulation of errors, unless overly restrictive assumptions such as strong log-concavity are imposed on $\pi_0$, e.g.,~\cite{Bru+25LogConcave, GaoNguZhu25Wass, GaoZhu25ODEWass, YuYu25Wass}.
One way in which existing works achieving sublinear complexity have circumvented this is to introduce \emph{corrector steps} which periodically inject randomness into the dynamics to essentially convert $W_2$ bounds into $\KL$ bounds~\citep{Chen+23FlowODE,guptafaster,JiaLi25InstanceDiffusion, LiJia25ImprovedDiffusion}. This is an option that we cannot afford in this work, as our goal is to simply analyze a discretization of the vanilla DDPM reverse process without further algorithmic modifications.

In light of this, how can we analyze randomized midpoint discretization in $\TV$ or $\KL$?
There are two main challenges.
The first is that standard approaches, such as Girsanov's theorem, do not readily apply to~\eqref{eq:randomized-midpoint}, because natural interpolations of~\eqref{eq:randomized-midpoint} are not Markovian: the intermediate point $X^+_{t_{k-1}+\tau_k}$ ``sees into the future'' for times $t \le t_{k-1} +\tau_k$.

The second challenge is that the analysis should be fairly sharp in order to see a tangible benefit from~\eqref{eq:randomized-midpoint}. Indeed, the intuition behind~\eqref{eq:randomized-midpoint} is that the use of a randomized step size to define $X_{t_{k-1}+\tau_k}^+$ effectively ``debiases'' the algorithm.
This is formalized via the notions of weak and strong errors~\citep{MilTre21StochNum}.
Consider a single iteration on $[t_k, t_{k+1})$, with the random variables $X_{t_{k+1}}^\alg$ and $X_{t_{k+1}}$ obtained by solving~\eqref{eq:randomized-midpoint} and~\eqref{eq:OU-reverse} respectively, from the same initial condition $X_{t_k}^\alg = X_{t_k} = x$.
The weak and strong errors are defined as follows:
\begin{align*}
    \text{Weak error:}&& \norm{\E X^\alg_{t_{k+1}} - \E X_{t_{k+1}}} &\leq \mc E_\weak(x)\,, \\
    \text{Strong error:}&& \norm{X^\alg_{t_{k+1}} - X_{t_{k+1}}}_{L^2} &\leq \mc E_\strong(x)\,.
\end{align*}
These two notions loosely capture the squared ``bias'' and ``variance'' of the discretization scheme at a single step.
When the weak error is substantially smaller than the strong error---as is the case for~\eqref{eq:randomized-midpoint}---then one can prove improved discretization bounds, basically because stochastic fluctuations cancel out \`a la the central limit theorem.\footnote{A simple analogy is that the sum of $N$ i.i.d.\ random variables, each with mean $\mu$ and standard deviation $\sigma$, has size roughly $N\mu+N^{1/2}\sigma$; think of $\mu$ as the weak error and $\sigma$ as the strong error.}
Unfortunately, as can be seen from the definitions, the weak and strong errors are most easily controlled via coupling methods, which are most easily handled in $W_2$.

In summary, we require an analysis framework that works in $\TV$ or $\KL$, which is flexible enough to handle discretizations without Markovian interpolations, and which can witness the benefits of smaller weak errors.

\paragraph{Shifted composition.}
In the literature on log-concave sampling, in which the randomized midpoint discretization first arose, obtaining $\KL$ guarantees was also a longstanding challenge until very recently.
A series of papers~\citep{altschuler2024SC1, altschuler2024SC3, altschuler2025SC4} has developed a new framework, known as \emph{shifted composition}, which satisfies our desiderata above.
We therefore aim to adapt it to the setting of diffusion models.

Briefly, the idea behind shifted composition is that in order to control the $\KL$ divergence between two processes (taken to be the algorithm and the ``ideal'' process it approximates), we can introduce a third process---called the auxiliary process---which is initialized at one of the two processes but is \emph{shifted} to hit the second process at a terminal time.
The hitting condition ensures that the $\KL$ divergence between the two original processes at the terminal time is controlled by the $\KL$ between the first process and the auxiliary process.
Due to the definition of the auxiliary process, this latter $\KL$ can be controlled in terms of a distance recursion which incorporates the weak and strong errors.

\paragraph{Adaptation to diffusion models.}
Although shifted composition is well-suited for our needs, we stress that there are additional technical challenges in the diffusion model setting.
Namely, under Assumption~\ref{as:vary-lip}, the Lipschitz constant is changing with time; moreover, Theorem~\ref{thm:main-varying} uses a non-uniform step size schedule. Accommodating these complications requires an extension of the original shifted composition framework; see Appendix~\ref{app:proofs} for details.

\section{Experiments}
In this section, we perform several experiments in image synthesis using pre-trained models from the EDM codebase~\citep{karras2022elucidating} and the EDM2 evaluation~\citep{karras_analyzing_2024} to validate and extend our theoretical predictions.
We first conduct a baseline comparison demonstrating that~\eqref{eq:rmd-alg} outperforms Euler--Maruyama as well as the exponential Euler integrator applied to~\eqref{eq:OU-reverse}, consistent with the prediction of~\ref{thm:main-varying}.
We then highlight some of the design decisions that go into applying~\eqref{eq:rmd-alg} in practice, where state-of-the-art implementations use stochastic processes distinct from the OU process considered in the theoretical portion of our work.
In this setting, we demonstrate how a tailored adaptation of DDRaM can outperform the Heun sampler introduced in~\citep{karras2022elucidating} even for deterministic ODE sampling. Code for all numerical experiments can be found at \url{https://github.com/stephen-huan/edm_rmd}.

\paragraph{Baseline comparison.}
\begin{figure}[t]
    \centering
    \includegraphics[width=0.31\textwidth]{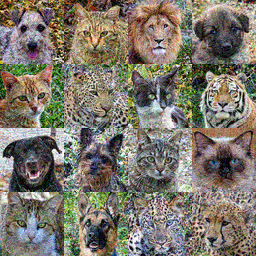}
    \quad%
    \includegraphics[width=0.31\textwidth]{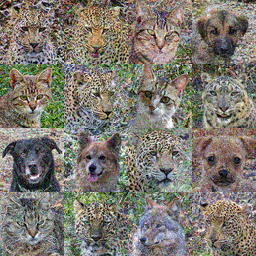}
    \quad%
    \includegraphics[width=0.31\textwidth]{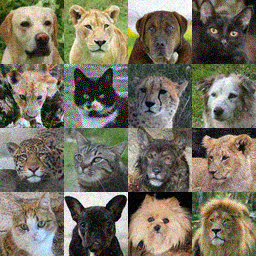}
    \caption{
    \textbf{Qualitative baseline comparison.} 
    Listing from left to right, we show a qualitative comparison between the Euler--Maruyama sampler~\eqref{eq:euler_maru}, the Euler exponential integrator~\eqref{eq:exp_integrator_exact}, and~\eqref{eq:rmd-alg} on the AFHQv2 dataset \citep{choi2020starganv2}. All samplers use 64 score function evaluations (64 Euler integration steps, 32 midpoint steps) and leverage the EDM pre-trained unconditional VP model from~\citep{karras2022elucidating} at \( 64 \times 64 \) resolution over the OU process~\eqref{eq:OU-reverse}.
    Clearly~\eqref{eq:rmd-alg} attains the best visual performance, which we quantify in~\ref{fig:sde_ou}.}
    \label{fig:ou}
\end{figure}
We first compare~\eqref{eq:rmd-alg} to two common baselines---a standard Euler--Maruyama sampler and the exponential Euler sampler.
Specifically, Euler--Maruyama reads
\begin{align}\label{eq:euler_maru}\tag{EMD}
    X_{t_k} = (1 - h_k) X_{t_{k-1}} + 2 h_k\, \bigl(\Ms_{T - t_{k-1}}(X_{t_{k-1}}) + X_{t_{k-1}}\bigr) + \sqrt{2h_k}\,\xi_k\,, \quad \xi_k \sim \gamma \:\text{ i.i.d.}\,,
\end{align}
where the factor $2X_{t_{k-1}}$ originates from the relative score to the standard Gaussian used in~\eqref{eq:OU-reverse}.
Here, we write~\eqref{eq:euler_maru} in terms of the non-relative score because this is what is available as a pre-trained model.
The exponential integrator is given by the analytic solution of~\eqref{eq:exponential-integrator}, which reads
\begin{align}\label{eq:exp_integrator_exact}\tag{EED}
\begin{aligned}
     X_{t_{k}} &= e^{-h_k}X_{t_{k- 1}} + 2\,(1 - e^{-2h_k})\,\bigl(\Ms_{T-t_{k-1}}(X_{t_{k - 1}}) + X_{t_{k - 1}}\bigr) + \sqrt{1 - e^{-2h_k}}\,\xi_k\,,\\
     \xi_k &\sim \gamma\: \text{ i.i.d.}\,.   
\end{aligned}
\end{align}
We note that~\eqref{eq:exp_integrator_exact} can be viewed as a subset of~\eqref{eq:rmd-alg} where we choose $\tau_k = h_k$ deterministically, and where we take $X_{t_{k-1} + \tau_k}^+$ as the next step $X_{t_{k}}$ without an intermediate.
\begin{figure}{r}
\vspace{-7mm}
  \centering
  \includegraphics[width=\linewidth]{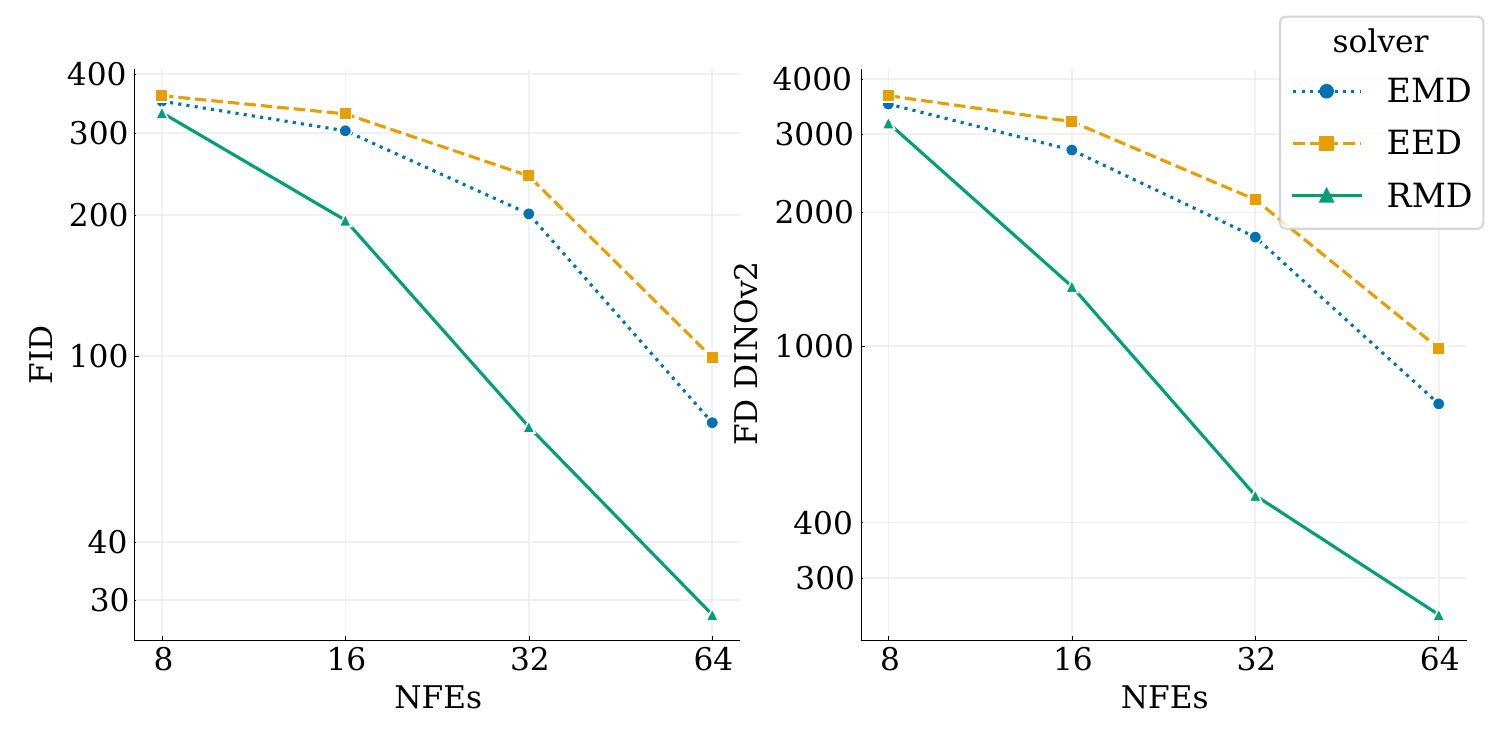}
  \caption{
  \textbf{Quantitative baseline comparison.}
  Image quality measured by FID (top) and \( \fddino \) (bottom) versus number of score function evaluations (NFEs) for the~\eqref{eq:euler_maru},~\eqref{eq:exp_integrator_exact}, and~\eqref{eq:randomized-midpoint} methods run on the OU process.
  Supporting~\ref{fig:ou},~\eqref{eq:randomized-midpoint} obtains the best quantitative results.}
  \label{fig:sde_ou}
\end{figure}
Results for the comparison between~\eqref{eq:rmd-alg},~\eqref{eq:euler_maru} and~\eqref{eq:exp_integrator_exact} are shown in~\ref{fig:ou} and~\ref{fig:sde_ou} on the AFHQv2 dataset~\citep{choi2020starganv2}.
Visually and quantitatively,~\eqref{eq:rmd-alg} performs best of the three.

\paragraph{Beyond the OU process.}

Although theoretical works uniformly analyze the OU process, practitioners often prefer time and space reparametrizations for both training and sampling.
Examples of these include the 
``variance preserving'' (VP) and ``variance exploding'' (VE) SDEs introduced by~\cite{songscore}, as well as the continuous limit of the DDPM schedule \citep{ho2020denoising} suggested by \cite{karras2022elucidating}.
It is \textit{a priori} unclear how to adapt~\eqref{eq:rmd-alg} to these settings, though we may expect to attain similar practical gains to those seen on the OU process given a suitable extension.
In order to extend to these new processes, we use a generalization of the key idea behind~\eqref{eq:rmd-alg} to handle a time-dependent scaling factor \( \scalef(t) \), treating SDEs of the form
\begin{align}\label{eq:general_sde}\tag{SDE}
  \D X_t &= (\scalef(t) X_t + f_t(X_t)) \, \D t + g(t) \, \D B_t\,.
\end{align}
In~\eqref{eq:general_sde}, we have the flexibility to choose \( \scalef(t) \) by appropriate re-definition of $f_t$, which leads to a family of ``randomized midpoint'' methods parameterized by its choice.
The resulting discretization scheme depends on various integrated quantities of \( \scalef(t) \).
For example, the choice \( \scalef(t) = 0 \) generates the randomized midpoint with Euler updates as opposed to the randomized midpoint with \emph{exponential} Euler updates considered in~\eqref{eq:rmd-alg}.
Furthermore, when \( g(t) = 0 \), we notably recover a second-order ODE solver as a special case of the SDE solver.
We provide further details in~\ref{app:rmd_variants}.

Armed with this additional flexibility, we turn to the concrete setting of~\cite{karras2022elucidating}, which considers a reparameterization of \eqref{eq:general_sde} of the form
\begin{align}\label{eq:edm_sde}\tag{EDM}
  \D X_t &= \left[
    \frac{\dot{c}(t)}{c(t)} X_t
    - (c(t)^2 \dot{\sigma}(t) \sigma(t)
    + \beta(t) \sigma(t)^2 c(t)^2)\, \hat{\Ms}_t(X_t)
  \right] \D t + \sqrt{2 \beta(t)} \sigma(t) c(t) \, \D B_t\,,
\end{align}
where we have written \( \hat{\Ms}_t(X_t) \deq \nabla \log \pi(X_t / c(t); \sigma(t)) \) for the (\( c, \sigma \))-parameterized score\footnote{\citet{karras2022elucidating} uses $s(t)$ for the scaling factor rather than $c(t)$. To avoid clash with our notation for the score, we opt to use $c$.}. Note that the VP SDE, VE SDE, and OU process can all be recovered with appropriate choices of \( c \) and \( \sigma \).

We observe that for any choice of \( c \) and \( \sigma \), \eqref{eq:edm_sde} will have: (1) a term with a time-dependent scaling of \( X_t \);
(2) a time-scaling of the score; and (3) a noise term which is a time-dependent scaling of the Wiener process (which is independent of \( X_t \)). 
In our experiments, we mainly consider two natural choices for \( \lambda(t) \) such that the remaining drift of~\eqref{eq:edm_sde} is either a scaling of the score \( \hat{\Ms}_t \) or the relative score \( \tsco_t \), with the time-dependent scaling of \( X_t \) integrated exactly (see~\ref{app:concrete_scale}). 
Our experiments suggest that $\lambda(t)$ should be chosen so the remaining drift is written entirely in terms of the score for the ODE and in terms of the relative score for the SDE.
\begin{figure}[t]
  \centering
  \includegraphics[width=\textwidth]{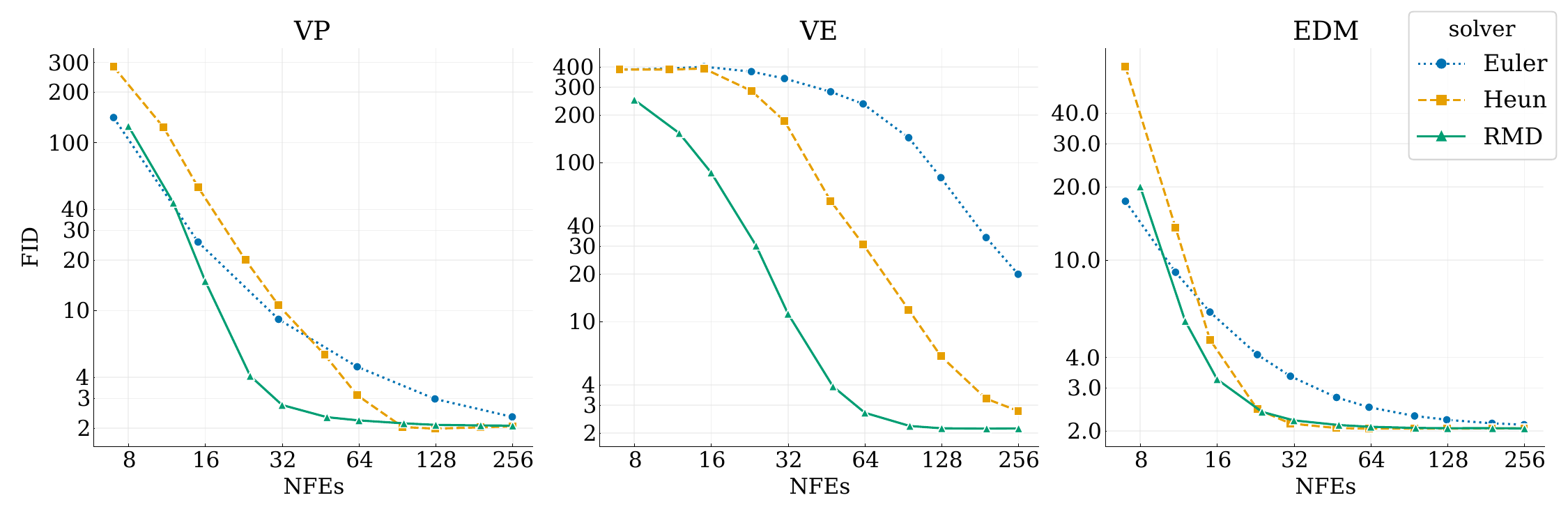}
  \caption{
  \textbf{Quantitative results: Deterministic sampling.}
  Image quality measured by Fr{\'e}chet inception distance (FID$\downarrow$) with number of score function evaluations (NFEs) for the Euler, Heun, and randomized midpoint methods. 
  Columns correspond to the VP, VE, and EDM processes. For \( n \) steps of the solver, Euler takes \( n \) NFEs, Heun takes \( 2 n - 1 \) (since \cite{karras2022elucidating} run Euler on the last step to avoid the singularity at 0), and~\eqref{eq:randomized-midpoint} takes \( 2 n \). As a result, \eqref{eq:randomized-midpoint} has one extra NFE compared to Euler and Heun in these plots. \ref{fig:ode_fd_dinov2} measures using \( \fddino \) on the same images and shows similar results. \ref{fig:ode_fid_combine} shows the NFE curves on a shared \( y \)-axis.}
  \label{fig:ode_fid}
\end{figure}
For our evaluations, we measure the Fr{\'e}chet inception distance (FID) and Fr{\'e}chet distance in the DINOv2 (\( \fddino \)) latent space \citep{Oquab+24DINOv2} as suggested by \cite{stein2023exposing, karras_analyzing_2024} over a batch of $50$k generated samples.
Results are shown in \ref{fig:ode_fid} and \ref{fig:ode_fd_dinov2}, respectively.

We test deterministic sampling ($\beta(t) = 0$) on the AFHQv2 dataset \citep{choi2020starganv2} using the pre-trained VP model from \cite{karras2022elucidating} over the VP, VE, and EDM processes. 
As shown in~\ref{fig:ode_fid} and~\ref{fig:ode_fd_dinov2}, we find that~\eqref{eq:rmd-alg} outperforms both the Euler and Heun samplers at essentially every NFE for all three settings considered in \cite{karras2022elucidating}, highlighting the advantages of DDRaM over widely-adopted diffusion solvers.
We further note that DDRaM empirically seems to be far more robust to the choice of noise scheduler compared to Euler and Heun, where the NFE curves do not vary as much between processes. 
This is clearly seen in~\ref{fig:ode_fid_combine}.

\section{Conclusion}

In this paper, we have shown that stochastic diffusion model samplers can break the $O(d)$ complexity barrier given the right discretization and a natural Lipschitz assumption for the score estimator. Empirically, we find the randomized midpoint performs well in a variety of settings, outperforming both Euler and Heun for both stochastic SDE and deterministic ODE sampling. 
Several interesting lines of exploration remain for future work. 
First, it may be possible to combine our analysis with works that establish discretization guarantees depending only on the \emph{intrinsic} dimension.
Second, it would be quite interesting if Assumptions~\ref{as:vary-lip} on the score estimator could be removed, thereby providing an analysis under the minimal assumptions of~\citet{Ben+24Diffusion, conforti2025kl}.
This may be challenging, as it seems incompatible with our proof technique.
Another possible avenue would be to replace our Lipschitz conditions with a relaxed Lipschitz condition, similarly to~\citet{JiaLi25InstanceDiffusion}, which would imply substantially better guarantees for Gaussian mixtures.

\section*{Acknowledgements}

We thank Jason M.\ Altschuler, Linda Cai, and Shivam Gupta for insightful discussions about randomized midpoint for diffusion models.
MSZ was supported by a NSERC CGS-D award. SC was supported by NSF CAREER award CCF-2441635.
SH was supported by the U.S. Department of Energy, Office of Science, Office of Advanced Scientific Computing Research, Department of Energy Computational Science Graduate Fellowship under Award Number DE-SC0026073.

\paragraph{Disclaimer.} This report was prepared as an account of work sponsored by an agency of the United States Government. Neither the United States Government nor any agency thereof, nor any of their employees, makes any warranty, express or implied, or assumes any legal liability or responsibility for the accuracy, completeness, or usefulness of any information, apparatus, product, or process disclosed, or represents that its use would not infringe privately owned rights. Reference herein to any specific commercial product, process, or service by trade name, trademark, manufacturer, or otherwise does not necessarily constitute or imply its endorsement, recommendation, or favoring by the United States Government or any agency thereof. The views and opinions of authors expressed herein do not necessarily state or reflect those of the United States Government or any agency thereof.

\bibliography{bib}

\appendix

\section{Deferred proofs}\label{app:proofs}

\subsection{Preliminary lemmas}

Before proceeding, the following two generic lemmas will be useful in both regimes. Define
\begin{align*}
    \Mg_t \deq \E_{\pi_{T-t}}\bigl[\bigl\lVert\nabla \log \frac{\pi_{T-t}}{\upgamma}\bigr\rVert^2\bigr]\,.
\end{align*}
We note that $\Mg_t = \FI(\pi_{T-t}\mmid \upgamma)$.

\begin{lemma}[{Magic lemma I, adapted from~\citet[Lemma 5]{conforti2025kl}}]\label{lem:magic-1}
    It holds that, letting $\mathtt M_2^2 \deq \E_{\pi_0}[\norm{\cdot}^2]$ be the initial second moment, 
    \begin{align*}
        \Mg_{t} \lesssim \frac{d}{1-e^{-2(T-t)}} + \mathtt M_2^2 \,.
    \end{align*}
\end{lemma}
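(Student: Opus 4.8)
The quantity $\Mg_t=\FI(\pi_{T-t}\mmid\upgamma)$ is just the relative Fisher information of the forward process to the standard Gaussian, and the plan is to derive an \emph{exact} formula for the relative score $\nabla\log\tfrac{\pi_{T-t}}{\upgamma}$ as a conditional expectation and then bound its second moment by Jensen. Write $s\deq T-t$ (which is positive, so that $\pi_s$ is a smooth positive density and $\Mg_t$ is well-defined), and $\sigma_s^2\deq 1-e^{-2s}$. By the OU transition kernel, $X_s^\rightarrow=e^{-s}X_0+\sigma_s Z$ with $X_0\sim\pi_0$ and $Z\sim\upgamma$ independent, so $\pi_s$ is the Gaussian convolution $\pi_s(x)=\E_{X_0}[\phi_{\sigma_s^2}(x-e^{-s}X_0)]$, where $\phi_{\sigma^2}$ denotes the $\mathcal N(0,\sigma^2 I_d)$ density, and $\Mg_t=\E\lVert\nabla\log\tfrac{\pi_s}{\upgamma}(X_s^\rightarrow)\rVert^2$.

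First I would differentiate under the integral sign (Tweedie's formula) to get $\nabla\log\pi_s(x)=-\sigma_s^{-2}\,\E[\,x-e^{-s}X_0\mid X_s^\rightarrow=x\,]$, and then subtract $\nabla\log\upgamma(x)=-x$; using the identity $1-\sigma_s^{-2}=-e^{-2s}\sigma_s^{-2}$ this simplifies to
\[
  \nabla\log\tfrac{\pi_s}{\upgamma}(x)\;=\;\tfrac{e^{-s}}{\sigma_s^2}\bigl(\E[X_0\mid X_s^\rightarrow=x]-e^{-s}x\bigr).
\]
Next I would substitute $x=X_s^\rightarrow=e^{-s}X_0+\sigma_s Z$ inside the conditional expectation and collect terms; using $1-e^{-2s}=\sigma_s^2$ this collapses to the clean representation
\[
  \nabla\log\tfrac{\pi_s}{\upgamma}(X_s^\rightarrow)\;=\;\E\bigl[\,e^{-s}X_0-\tfrac{e^{-2s}}{\sigma_s}\,Z\;\mid\;X_s^\rightarrow\,\bigr].
\]

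From here the conclusion is one line. By Jensen's inequality (conditional expectation is an $L^2$-contraction), $\Mg_t=\E\lVert\E[\,e^{-s}X_0-e^{-2s}\sigma_s^{-1}Z\mid X_s^\rightarrow]\rVert^2\le\E\lVert e^{-s}X_0-e^{-2s}\sigma_s^{-1}Z\rVert^2$. Expanding the square, the cross term equals $\langle\E X_0,\E Z\rangle=0$ by independence and $\E Z=0$, leaving $e^{-2s}\,\mathtt M_2^2+e^{-4s}\sigma_s^{-2}\,d$; bounding $e^{-2s}\le 1$ and $e^{-4s}\le 1$ yields $\Mg_t\le\mathtt M_2^2+d/(1-e^{-2(T-t)})$, which is the claimed bound (indeed with absolute constant $1$).

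The only place requiring care is the algebraic bookkeeping in the middle step: passing correctly from the raw score to the relative score and re-expressing the answer as a conditional expectation of a simple linear combination of $X_0$ and $Z$, since getting the coefficients exactly right is what makes the ensuing Jensen-plus-independence computation trivial. One should also note, for rigor, that the differentiation under the integral is legitimate because $\phi_{\sigma_s^2}(x-e^{-s}x_0)$ and its $x$-gradient are dominated, locally uniformly in $x$, by an integrable function of $x_0$, using only $\E\lVert X_0\rVert^2<\infty$ (Assumption~\ref{as:bdd-moment}).
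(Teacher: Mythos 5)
Your proof is correct and is essentially the standard argument behind the cited result (the paper itself gives no in-text proof, deferring to Conforti et al.): represent the relative score as a conditional expectation via Tweedie's formula and apply Jensen, using independence of $X_0$ and the Gaussian noise to kill the cross term. The algebra checks out, and your version even gives the bound with absolute constant $1$ (indeed with the sharper prefactors $e^{-2(T-t)}\,\mathtt M_2^2 + e^{-4(T-t)}\,d/(1-e^{-2(T-t)})$), so nothing further is needed.
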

The next lemma follows from a computation based on It\^o's lemma.
\begin{lemma}[{Magic lemma II, adapted from~\citet[Proof of Lemma 2]{conforti2025kl}}]\label{lem:magic-2}
    It holds that, for $s < t$, letting $\pi_{T-s,T-t}$ be the joint law of the particle from~\eqref{eq:OU-reverse} at times $\{s, t\}$,
    \begin{align*}
        \E_{(X_s, X_t) \sim \pi_{T-s, T-t}}\bigl[\bigl\lVert\nabla \log \frac{\pi_{T-t}}{\upgamma}(X_t) - \nabla \log \frac{\pi_{T-s}}{\upgamma}(X_s)\bigr\rVert^2\bigr] \lesssim \Mg_{t} - \Mg_{s}\,.
    \end{align*}
\end{lemma}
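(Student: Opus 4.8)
\emph{Proof proposal.} The idea is to view the relative score, evaluated along the reverse process, as an It\^o process whose drift is \emph{itself}, so that the integrating factor $e^{-\cdot}$ turns it into a martingale whose quadratic variation integrates up to exactly $\Mg_\cdot$. Write $\phi_u \deq \log\frac{\pi_u}{\upgamma}$, run~\eqref{eq:OU-reverse} (so that $(X_s,X_t)\sim\pi_{T-s,T-t}$ and $\law(X_r)=\pi_{T-r}$), and set $G_r \deq \nabla\phi_{T-r}(X_r)$, whence $\E\norm{G_r}^2 = \Mg_r$. The Fokker--Planck equation for~\eqref{eq:OU-forward}, namely $\partial_u \pi_u = \nabla\cdot(\pi_u\,\nabla\phi_u)$, gives $\partial_u \phi_u = \Delta\phi_u + \norm{\nabla\phi_u}^2 - x\cdot\nabla\phi_u$. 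Differentiating this identity in $x$ and applying It\^o's formula to $r\mapsto\nabla\phi_{T-r}(X_r)$ (the reverse drift is $-x+2\,\nabla\phi_{T-t}$ and the diffusion matrix is $2I_d$), all third-order and cross terms cancel, leaving
\begin{align*}
    \D G_r = G_r\,\D r + \sqrt 2\,\nabla^2\phi_{T-r}(X_r)\,\D B_r\,, \qquad\text{so}\qquad \D\bigl(e^{-r}G_r\bigr) = \sqrt 2\, e^{-r}\,\nabla^2\phi_{T-r}(X_r)\,\D B_r\,.
\end{align*}
Thus $\tilde G_r \deq e^{-r}G_r$ is a (local) martingale.

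From here I would extract two consequences. First, applying It\^o to $\norm{\tilde G_r}^2$ and taking expectations,
\begin{align*}
    \E\norm{\tilde G_t - \tilde G_s}^2 = \E\norm{\tilde G_t}^2 - \E\norm{\tilde G_s}^2 = e^{-2t}\Mg_t - e^{-2s}\Mg_s = 2\int_s^t e^{-2r}\,\E\norm{\nabla^2\phi_{T-r}(X_r)}_{\mathrm{HS}}^2\,\D r \geq 0\,,
\end{align*}
the first equality being orthogonality of martingale increments and the middle one $\E\norm{\tilde G_r}^2 = e^{-2r}\Mg_r$. Second, since $\tilde G_t - \tilde G_s$ is a martingale increment over $[s,t]$ while $\tilde G_s$ is $\mathcal F_s$-measurable, the two are $L^2$-orthogonal; writing $G_t - G_s = e^t(\tilde G_t - \tilde G_s) + (e^t - e^s)\,\tilde G_s$ and expanding,
\begin{align*}
    \E\norm{G_t - G_s}^2 = e^{2t}\,\E\norm{\tilde G_t - \tilde G_s}^2 + (e^t - e^s)^2\,\E\norm{\tilde G_s}^2 = \Mg_t + (1 - 2e^{t-s})\,\Mg_s \leq \Mg_t - \Mg_s\,,
\end{align*}
where the last step uses $e^{t-s}\geq 1$ and $\Mg_s\geq 0$. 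This proves the lemma (in fact with implied constant $1$).

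The one genuine obstacle is rigor: one must justify that $\tilde G$ is a true martingale and that $r\mapsto e^{-2r}\Mg_r$ is absolutely continuous with the displayed derivative, which reduces to $\int_s^t \E_{\pi_{T-r}}\norm{\nabla^2\log\frac{\pi_{T-r}}{\upgamma}}_{\mathrm{HS}}^2\,\D r < \infty$. This holds for $[s,t]\subset[0,T)$ by standard smoothing estimates for the OU semigroup ($\pi_u$ is a Gaussian convolution for $u>0$) together with the a priori bound $\Mg_r \lesssim \frac{d}{1-e^{-2(T-r)}}+\mathtt M_2^2$ from Lemma~\ref{lem:magic-1}; alternatively one runs the whole computation along a localizing sequence of stopping times and passes to the limit by monotone convergence, every quantity involved being nonnegative. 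Everything else is just bookkeeping of the It\^o expansion above, for which the explicit $\partial_u\phi_u$ identity is the key input.
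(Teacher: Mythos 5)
Your proposal is correct and follows essentially the same route as the paper, which defers to the It\^o-based computation in the cited proof of \citet[Lemma 2]{conforti2025kl}: apply It\^o's formula to the relative score along~\eqref{eq:OU-reverse}, observe that $e^{-r}\,\nabla\log\frac{\pi_{T-r}}{\upgamma}(X_r)$ is a (local) martingale, and conclude by orthogonality of increments. Your martingale identity checks out (it is the stochastic counterpart of the Bakry--\'Emery/de Bruijn identity for the OU semigroup), and the integrability caveat you raise is handled exactly as you suggest, via localization together with the a priori bound of Lemma~\ref{lem:magic-1}.
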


\subsection{Review of the shifted composition framework}\label{app:review_shift}

The shifted composition framework for proving discretization bounds for Markov processes, developed in the sequence of works~\cite{altschuler2024SC1, altschuler2024SC3, altschuler2025SC4}, allows for the translation of Wasserstein/coupling-based errors to $\KL$ guarantees. They also allow the user to account for the difference between ``weak'' and ``strong'' errors. Suppose first that the following assumptions hold.

\begin{assumption}[Wasserstein regularity results]\label{as:wasserstein-reg}
    Suppose that we have a sequence of kernels $(P_n)_{n \geq 0}$, $(P_n^\alg)_{n \geq 0}$ for which the following properties hold. Namely, for any $n \in \mathbb N$, let $x, y \in \R^d$, and let $X^\alg \sim \delta_x P_n^\alg$, $Y \sim \delta_y P_n$, $Y^\alg \sim \delta_y P_n^\alg$ be coupled. Then, for functions $\Ew, \Es: \R^d \to \R_+$, constants $L, \gamma \geq 0$, assume that the following hold:
    \begin{enumerate}
        \item \textbf{Weak error:} $\norm{\E Y^\alg - \E Y} \leq \Ew(y)$.

        \item \textbf{Strong error:} $\norm{Y^\alg - Y}_{L^2} \leq \Es(y)$ for some coupling of $(Y^\alg, Y)$.

        \item \textbf{Wasserstein Lipschitzness:} $\norm{X^\alg-Y^\alg}_{L^2} \leq L\, \norm{x-y}$ for some coupling of $(X^\alg, Y^\alg)$.

        \item \textbf{Coupling:} $\norm{X^\alg-x-(Y^\alg-y)}_{L^2} \leq \gamma\, \norm{x-y}$ for some coupling of $(X^\alg, Y^\alg)$.
    \end{enumerate}
    Without loss of generality in this work, assume $L \geq 1$.
\end{assumption}

Additionally, some conditions on the $\KL$ divergence are necessary to obtain guarantees.

\begin{assumption}[KL regularity]\label{as:kl-reg}
    With the same notation as Assumption~\ref{as:wasserstein-reg}, assume that the following holds: for a parameter $c \geq 0$ and all $n \in \mathbb N$, $\KL(\delta_x P_n^\alg \mmid \delta_y P_n^\alg) \leq c\, \norm{x-y}^2$.
\end{assumption}

Then, the following guarantee holds.
\begin{lemma}
    Under Assumptions~\ref{as:wasserstein-reg} and~\ref{as:kl-reg}, if $L \geq 1 + \frac{\log N}{N}$, we have for some measure $\pi^{\mathrm{approx}}$ and any initial measure $\pi \in \mc P(\R^d)$ that, defining $\bar P_k^\alg \deq P_1^\alg \dotsm P_k^\alg$ and $\bar P_k \deq P_1 \dotsm P_k$,
    \begin{align*}
        \KL(\pi^{\mathrm{approx}} \mmid \pi \bar P^\alg_N) \lesssim c\,\bigl(\{(L-1)N \vee \log N\}\,\bar{\mc E}_\strong^2 + (N-1)\, (\bar{\mc E}_\weak^2 + \gamma \bar{\mc E}_\strong^2)\bigr) \,.
    \end{align*}
    Furthermore, we have for $N \geq 2L/(L-1)$,
    \begin{align*}
        W_2^2(\pi^{\mathrm{approx}}, \pi \bar P^\alg_N) \lesssim \bar{\mc E}_\strong^2 + \log\bigl(\frac{L}{L-1}\bigr)\, (\bar{\mc E}_\weak^2 + \gamma \bar{\mc E}_\strong^2)\,.
    \end{align*}
    Here, $\bar{\mc E}_\strong^2 = \max_{k \in [N-1]} \E_{\mu \bar P_k}[\mc E_\strong^2]$ and $\bar{\mc E}_\weak^2$ is similarly defined.
\end{lemma}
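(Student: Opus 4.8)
The plan is to instantiate the shifted composition framework: introduce an \emph{auxiliary process} which interpolates between the algorithm's law $\pi\bar P_N^\alg$ and the ideal process's law $\pi\bar P_N$, and control the $\KL$ divergence along the way by tracking a coupled ``shift'' that contracts at the rate governed by the Wasserstein Lipschitzness constant $L$. Concretely, I would define for each $k$ an auxiliary kernel that agrees with $P_k^\alg$ except that its output is translated by a vector $v_k$; the key algebraic identity is that $\delta_x P_k^\alg$ shifted by $v_k$ equals $\delta_{x+w}P_k^\alg$ shifted by a smaller vector $v_{k-1}$, where $w$ is the discrepancy between the algorithm step and the true step and $v_{k-1}$ shrinks geometrically by $1/L$ (this is where $L \geq 1$ and the Lipschitzness assumption enter). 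Running this identity backward from the terminal time, one gets a telescoping bound in which the $\KL$ between $\pi\bar P_N^\alg$ and its shifted version is bounded by a sum of per-step terms, each of the form $c$ times a squared shift increment.

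The next step is to bound the per-step contributions. Here I would split the shift increment into its mean part (the weak error $\Ew$) and its fluctuation part (the strong error $\Es$, together with the coupling parameter $\gamma$ that measures how the fluctuation propagates through one step). The mean parts accumulate like a deterministic sum over $N-1$ steps, contributing $(N-1)\bar{\mc E}_\weak^2$; the fluctuation parts, because the geometric factor $1/L$ damps their propagation, contribute $\{(L-1)N \vee \log N\}\bar{\mc E}_\strong^2$ from the ``unabsorbed'' piece plus $(N-1)\gamma\bar{\mc E}_\strong^2$ from the coupling interaction — matching the claimed bound. The condition $L \geq 1 + \tfrac{\log N}{N}$ is exactly what is needed to ensure the geometric series $\sum_k L^{-k}$ over $N$ steps is dominated by the ``$\log N$'' regime versus the ``$(L-1)N$'' regime, i.e., to make the $\max$ in the statement legitimate. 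One must also verify that, at the terminal step, the shift can be driven exactly to zero so that $\pi^{\mathrm{approx}} \deq $ (the auxiliary process's terminal law) genuinely satisfies the hitting condition; this uses Assumption~\ref{as:kl-reg} to convert the terminal coincidence of means into a $\KL$ bound and is where $c$ appears as the overall prefactor.

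Finally, for the $W_2$ bound I would use the same auxiliary construction but track the $L^2$ size of the shift vector rather than its $\KL$ cost: since the auxiliary process hits $\pi\bar P_N$ exactly, $W_2(\pi^{\mathrm{approx}}, \pi\bar P_N^\alg)$ is at most the $L^2$-norm of the total shift $v_0$, which is again a geometric sum of weak and strong error increments, giving $\bar{\mc E}_\strong^2 + \log(\tfrac{L}{L-1})(\bar{\mc E}_\weak^2 + \gamma\bar{\mc E}_\strong^2)$ once $N \geq 2L/(L-1)$ ensures the tail of the geometric series is negligible. I expect the main obstacle to be the bookkeeping of the shift recursion with the correct constants — in particular, keeping the weak/strong/$\gamma$ contributions cleanly separated through the telescoping and correctly identifying which terms are damped by $L^{-1}$ and which are not, since a naive bound would lose the crucial distinction between weak and strong error that makes randomized midpoint worthwhile. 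The subtlety that $\Ew, \Es$ are functions of the base point (hence the averaging $\bar{\mc E}_\strong^2 = \max_k \E_{\mu\bar P_k}[\mc E_\strong^2]$ over intermediate laws) also requires care: one must propagate these expectations through the coupled processes rather than working pointwise.
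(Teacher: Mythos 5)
Your high-level architecture (an auxiliary process run under the algorithm's kernels $P_k^\alg$, interleaved with shifts toward the other process, with the $\KL$ paid step-by-step via the chain rule and Assumption~\ref{as:kl-reg}, and the $W_2$ statement coming from the residual shift) does match the paper, which itself only sketches this lemma and defers the details to the shifted composition papers of Altschuler and Chewi. However, several of the mechanisms you describe are wrong in ways that would derail an actual execution. First, there is no ``key algebraic identity'': $\delta_x P_k^\alg$ translated by $v_k$ does \emph{not} equal $\delta_{x+w}P_k^\alg$ translated by a smaller vector for these kernels. The framework instead couples trajectories using properties (3)--(4) of Assumption~\ref{as:wasserstein-reg} and converts each explicit shift into a $\KL$ cost $\lesssim c\,\upeta_n^2\,d_n^2$ through Assumption~\ref{as:kl-reg}, where $d_n$ is the $L^2$ distance between the auxiliary and the true process. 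Second, the contraction direction is backwards: here $L \ge 1$, so the kernels are (mildly) \emph{expansive}, and nothing is damped ``by $1/L$''. The damping is engineered entirely by the chosen shift fractions $\upeta_n$, against which $L-1$ acts as an expansion penalty; the quantitative core is the recursion
\begin{align*}
    d_n^2 \;\le\; L^2\,(1-\upeta_n)^2\, d_{n-1}^2 \;+\; 2\,\bigl(\bar{\mc E}_\weak + \gamma\,\bar{\mc E}_\strong\bigr)\,(1-\upeta_n)\, d_{n-1} \;+\; \bar{\mc E}_\strong^2\,,
\end{align*}
whose balancing (via Young's inequality and the tuning of $\upeta_n$ against $L-1$) is what produces the factors $(N-1)$ and $\{(L-1)N \vee \log N\}$. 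In particular, $(L-1)N$ is an expansion penalty, not the ``unabsorbed piece of a geometric series $\sum_k L^{-k}$'', and your reading of the hypothesis $L \ge 1 + \tfrac{\log N}{N}$ as governing such a series is not the role it plays.

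Third, your claim that ``the shift can be driven exactly to zero so that $\pi^{\mathrm{approx}}$ genuinely satisfies the hitting condition'' contradicts the structure of the statement you are proving. A shift is only paid for in $\KL$ through a \emph{subsequent} application of a kernel, so the fresh strong error generated in the final step(s) cannot be absorbed; this is precisely why the lemma outputs an intermediate measure $\pi^{\mathrm{approx}}$ at all, and why the $W_2$ bound has the un-dampable leading term $\bar{\mc E}_\strong^2$ (plus a $\log\frac{L}{L-1}$ horizon factor reflecting over how many trailing steps older errors can be amortized, not a ``negligible geometric tail''). If exact hitting were achievable you would be proving the much stronger bound $\KL(\pi\bar P_N \mmid \pi\bar P_N^\alg) \lesssim \cdots$, which Assumptions~\ref{as:wasserstein-reg}--\ref{as:kl-reg} do not support and which the paper deliberately avoids claiming. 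Finally, your accounting of the weak/strong split is only heuristic: in the actual argument the weak error and the coupling term $\gamma\bar{\mc E}_\strong$ enter the recursion \emph{linearly} against $d_{n-1}$ and acquire their $(N-1)$ prefactor after Young's inequality and the choice of $\upeta_n$, rather than by ``means adding over $N-1$ steps''. So while the skeleton is right, the proposal as written lacks the distance recursion and misidentifies where $L$, $\gamma$, and the hypothesis on $L$ actually enter.
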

The proof of this theorem is accomplished by considering, if $(X^\alg_{t_k})_{k\in [N]}, (Y_{t_k})_{k\in [N]}$ are two processes which are started at the same random variable $X^\alg_0 = Y_0$ and which evolve according to the kernels $(P_k^\alg)_{k \in [N]}$ and $(P_k)_{k \in [N]}$ respectively, a third random variable
\begin{align*}
    \tilde Y_0^\aux \deq Y_0\,, \qquad
    \tilde Y_{t_n}^\aux \deq Y_{t_n}^\aux + \upeta_n\,(Y_{t_n} - Y_{t_n}^\aux)\,, \qquad Y_{t_{n+1}}^\aux \sim P^\alg_{n+1}(\tilde Y_{t_n}^\aux, \cdot)\,,
\end{align*}
for an appropriate sequence of shifts $(\upeta_n)_{n \in [N]}$, and then judiciously applying Assumptions~\ref{as:wasserstein-reg} and~\ref{as:kl-reg}. Note that the framework above does not account for the case where the constants vary between the different indices of the kernels $k \in \mathbb N$. This is the cause of substantial difficulties in our analysis, and will be focal point of our technical efforts.

\subsection{Local error analysis}

We start by establishing local error estimates.
When performing our analysis, we actually consider $\tau_k$ having the distribution function with density $f(\tau) \propto e^{\tau-h_k}$ for $\tau \in [0, \varrho_k h_k)$ for technical reasons. In practice, the choice of $\varrho_k \in (0, 1)$ makes little difference in the resulting bounds, and a more streamlined proof would not require such a truncation. We leave the clarification of this detail to future work. We define $(\tilde \beta_s)_{s \in [0, T]}$ to be an upper bound on the Lipschitz constant for $\tsco$, given by
\begin{align*}
    \tilde \beta_t = \frac{\tilde\beta_0}{1-e^{-2(T-t)}}\,, 
\end{align*}
We assume throughout that $\tilde\beta_0\ge 1$.

We first observe that~\eqref{eq:OU-reverse} can be written
\begin{align*}
    X_{t_k}
    &= e^{-h_k} X_{t_{k-1}} + 2 \int_{t_{k-1}}^{t_k} e^{t-t_k} \,\nabla \log \frac{\pi_{T-t}}{\upgamma}(X_t)\,\D t + \sqrt 2 \int_{t_{k-1}}^{t_k} e^{t-t_k}\,\D B_t\,.
\end{align*}

\begin{lemma}[Pointwise local errors]\label{lem:error-est}
    Consider a fixed iteration $k \in [N]$. Under our previous assumptions, we have the following weak error bound, where $X_{t_k}^\alg(x)$ is from~\eqref{eq:randomized-midpoint} with truncation, and $X_{t_k}(x)$ from~\eqref{eq:OU-reverse}, conditional on $X_{t_{k-1}}^\alg = X_{t_{k-1}} = x$ and solving both equations over $t \in [t_{k-1}, t_k)$ for $h_k \ll 1$:
    \begin{align*}
        \norm{\E X_{t_k}^\alg(x) - \E X_{t_k}(x)}^2 &\lesssim h_k \int_{t_{k-1}}^{t_{k}} \Bigl(\MF_t^2(x) + \tilde\beta_{t_k}^2 h_k \int_{t_{k-1}}^t( \MG_{t_{k-1}, s}^2(x) + \MF_{t_{k-1}}^2(x)) \, \D s \Bigr) \, \D t \\
        &\qquad + (1-\varrho_k)^2\,h_k^2\,\sup_{t \in [t_{k-1}, t_{k}]}\E \norm{\tsco_{T-t}(X_t^+(x))}^2 \,,
    \end{align*}
    where $\MG_{s,t}(x)$, $\MF_t(x)$ are defined as
    \begin{align*}
        \MG_{s,t}^2(x) &\deq \E\bigl[\bigl\lVert\nabla \log \frac{\pi_{T-t}}{\upgamma}(X_t(x)) - \nabla \log \frac{\pi_{T-s}}{\upgamma}(X_s(x))\bigr\rVert^2\bigr]\,, \\
        \MF_t^2(x) &\deq \E[\norm{\Ms_{T-t}(X_t(x)) - \nabla \log \pi_{T-t}(X_t(x))}^2]\,,
    \end{align*}
    for $t \in [t_{k-1}, t_{k}]$, starting from $X_{t_{k-1}} = X_{t_{k-1}}^\alg = x$. Also,
    \begin{align*}
        \E[\norm{X^\alg_{t_{k}}(x) - X_{t_{k}}(x)}^2] &\lesssim h_k^2\, \Bigl( \E\MF_{t_{k-1} + \tau_k}^2(x) + \tilde \beta_{t_k}^2 h_k\, \E \int_{t_{k-1}}^{t_{k-1}+\tau_k}( \MG_{t_{k-1}, s}^2(x) + \MF_{t_{k-1}}^2(x)) \, \D s \Bigr) \\
        &\qquad+ h_k \int_{t_{k-1}}^{t_{k}} \E \MG^2_{t_{k-1} +\tau_k, t}(x) \, \D t\,.
    \end{align*}
\end{lemma}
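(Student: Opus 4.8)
The plan is to expand both $X_{t_k}^\alg(x)$ and $X_{t_k}(x)$ using the exponential-integrator form of~\eqref{eq:OU-reverse} recalled just above, and to track the discrepancy term by term. Writing out the true evolution as $X_{t_k}(x) = e^{-h_k} x + 2\int_{t_{k-1}}^{t_k} e^{t-t_k}\,\nabla\log\tfrac{\pi_{T-t}}{\upgamma}(X_t)\,\D t + \sqrt 2\int_{t_{k-1}}^{t_k} e^{t-t_k}\,\D B_t$, and recalling that the randomized-midpoint iterate replaces the drift integrand by the single evaluation $\tsco_{T-t_{k-1}-\tau_k}(X_{t_{k-1}+\tau_k}^+)$ scaled by $2(1-e^{-h_k})$, the Brownian parts of the two processes can be coupled exactly (this is the point of the explicit Gaussian structure in the first lemma of the paper), so the difference reduces to a drift discrepancy. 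I would decompose this drift discrepancy into: (i) the ``quadrature'' error between $\int_{t_{k-1}}^{t_k} e^{t-t_k}\,\nabla\log\tfrac{\pi_{T-t}}{\upgamma}(X_t)\,\D t$ and its randomized one-point approximation $(1-e^{-h_k})\,\nabla\log\tfrac{\pi_{T-t_{k-1}-\tau_k}}{\upgamma}(X_{t_{k-1}+\tau_k})$; (ii) the score-estimation error, i.e.\ replacing $\nabla\log\pi$ by $\Ms$, which contributes the $\MF$ terms; and (iii) the error from evaluating the estimated score at the approximate midpoint $X_{t_{k-1}+\tau_k}^+$ rather than the true $X_{t_{k-1}+\tau_k}$, which is controlled by Assumption~\ref{as:vary-lip} (hence the $\tilde\beta_{t_k}^2$ prefactor) times the strong error of the midpoint prediction.

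For the \textbf{weak error}, the key mechanism is that the randomized time $\tau_k$ is chosen precisely so that $\E_{\tau_k}[(1-e^{-h_k})\,e^{\,(t_{k-1}+\tau_k)-t_k}\cdot(\text{something})]$ reproduces the exact integral $\int_{t_{k-1}}^{t_k} e^{t-t_k}(\cdot)\,\D t$ when the integrand is evaluated along the \emph{true} reverse process; this is why the density $f_k(\tau)\propto e^{\tau-h_k}$ appears. So after taking expectations, the dominant part of the quadrature error (i) vanishes in the untruncated case, and one is left with: the truncation defect, which is the $(1-\varrho_k)^2 h_k^2 \sup_t \E\|\tsco_{T-t}(X_t^+)\|^2$ term; the mismatch between evaluating at $X_{t_{k-1}+\tau_k}^+$ versus $X_{t_{k-1}+\tau_k}$, bounded via Lipschitzness by $\tilde\beta_{t_k}^2$ times the $L^2$ error of the midpoint prediction, which in turn (expanding $X^+$ the same way) is $\lesssim h_k \int_{t_{k-1}}^{\cdot}(\MG_{t_{k-1},s}^2 + \MF_{t_{k-1}}^2)\,\D s$; and the score-estimation error, contributing $\MF_t^2$. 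Packaging these with Cauchy--Schwarz in $t$ over the interval of length $h_k$ gives the stated $h_k\int_{t_{k-1}}^{t_k}(\cdots)\,\D t$ form. For the \textbf{strong error}, one does not get the cancellation, so one simply bounds the $L^2$ norm of the drift discrepancy directly: the one-point-versus-integral error is now $O(h_k)$ times the fluctuation of the integrand, which is $\E\,\MG^2_{t_{k-1}+\tau_k,t}$ averaged over $t$ (the last line); plus the same midpoint-mismatch and score-estimation contributions as before, but now evaluated at the random time $\tau_k$ and carrying an $h_k^2$ prefactor since the whole drift term is multiplied by $1-e^{-h_k} = O(h_k)$.

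Concretely, the steps I would carry out are: (1) write both iterates in integrated form and couple the Brownian motions, reducing everything to a drift difference; (2) for the weak error, take expectations, use the defining property of $f_k$ to cancel the leading quadrature term, and bound the three residual contributions --- truncation, midpoint mismatch (via Assumption~\ref{as:vary-lip} plus an $L^2$ expansion of $X^+ - X$), and score error --- finally applying Cauchy--Schwarz to put it in integral form; (3) for the strong error, bound the $L^2$ norm of the drift difference directly without taking expectations first, splitting into the integrand-fluctuation term $\MG^2_{t_{k-1}+\tau_k,t}$ and the same midpoint/score contributions with the $h_k^2$ scaling; (4) invoke Magic Lemma~\ref{lem:magic-2} implicitly only later --- here we keep things in terms of $\MG$, $\MF$. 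The main obstacle I expect is step (2): making the cancellation rigorous requires carefully disentangling the randomness in $\tau_k$ from the randomness in the Brownian path and in $X_{t_{k-1}+\tau_k}^+$, and tracking how the Lipschitz bound from Assumption~\ref{as:vary-lip}, which blows up like $(1-e^{-2(T-t)})^{-1}$, interacts with the evaluation time $T-t_{k-1}-\tau_k$ ranging over an interval --- one must argue that $\tilde\beta_{T-t_{k-1}-\tau_k}\lesssim\tilde\beta_{t_k}$ uniformly over the (truncated) support of $\tau_k$, which is exactly where the truncation parameter $\varrho_k$ earns its keep. A secondary technical point is controlling the second-moment quantities $\E\|\tsco_{T-t}(X_t^+)\|^2$ uniformly, which one does by comparing $X_t^+$ to the true process and using Magic Lemma~\ref{lem:magic-1}.
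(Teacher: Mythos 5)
Your plan matches the paper's proof essentially step for step: synchronous coupling of the Brownian terms, using the unbiasedness of the $e^{\tau-h_k}$-distributed midpoint time to cancel the quadrature bias in the weak error (leaving only the truncation defect, the score-estimation term $\MF$, and the Lipschitz-controlled mismatch between $X^+$ and $X$, the latter bounded by exactly the $h_k\int(\MG^2_{t_{k-1},s}+\MF^2_{t_{k-1}})\,\D s$ estimate the paper records as its interpolation bound), and bounding the strong error directly via the $\MG^2_{t_{k-1}+\tau_k,t}+\MF^2_{t_{k-1}+\tau_k}$ split plus the same midpoint-mismatch term with an $h_k^2$ prefactor. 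The only small inaccuracy is your closing remark on the truncation: since $\tilde\beta_t$ is monotone on $[t_{k-1},t_k]$, the bound $\tilde\beta_{T-t_{k-1}-\tau_k}\le\tilde\beta_{t_k}$ holds without truncation, and $\varrho_k$ is really needed later for the KL-regularity estimate in the shifted-composition lemma, not here.
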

\begin{proof}
    We will suppress the argument in $X_t^\alg(x)$, $X_t(x)$, $X_t^+(x)$, considering always a fixed starting point $x$. Note that for $h_k \ll 1$,
    \begin{align*}
       \Bigl\lvert \frac{e^{t-h_k}}{1-e^{-h_k}} - \frac{e^{t-h_k}} {\int_0^{\varrho_k h_k} e^{s-h_k} \, \D s }\Bigr\rvert \lesssim \frac{1-\varrho_k}{\varrho_k} \cdot \frac{1}{h_k}\,,
    \end{align*}
    for $t \in [0, \varrho_k h_k)$. On the other hand, the maximum of $\frac{e^{t-h_k}}{1-e^{-h_k}}$ on $[\varrho_k h_k, h_k)$ is bounded by at most a constant times $h_k^{-1}$. It follows that, taking $X^\untrc_{t_k}$ from~\eqref{eq:randomized-midpoint} without truncation of the distribution for $\tau_k$, that
    \begin{align*}
        &\norm{\E X^\alg_{t_{k}} - \E X_{t_{k}}}
        \le \norm{\E X^\untrc_{t_{k}} - \E X^\alg_{t_{k}}} + \norm{\E X^\untrc_{t_{k}} - \E X_{t_{k}}} \\
        &\qquad = \Bigl\lVert 2\,(1-e^{-h_k})\,\E\int_0^{h_k} \tsco_{T-t_{k-1}-\tau}(X^+_{t_{k-1}+\tau})\,\Bigl(\frac{e^{\tau-h_k}}{1-e^{-h_k}} - \frac{e^{\tau-h_k} \one_{\tau\le\varrho_k h_k}}{\int_0^{\varrho_k h_k} e^{\tau-h_k}\,\D\tau} \Bigr)\,\D\tau\Bigr\rVert \\
        &\qquad\qquad{} + \Bigl\lVert 2\,\E \int_0^{h_k} \Bigl(\tsco_{T-t_{k-1} +\tau}(X_{t_{k-1}+\tau}^+) - \nabla \log \frac{\pi_{T-t_{k-1}-\tau}}{\upgamma}(X_{t_{k-1}+\tau})\Bigr)\,e^{\tau-h_k}\,\D\tau \Bigr\rVert \\
        &\qquad \lesssim \frac{1-\varrho_k}{\varrho_k} \int_0^{\varrho_k h_k} \E \norm{\tsco_{T-t_{k-1} - t} (X_{t_{k-1} + t}^+)} \, \D t + \int_{\varrho_k h_k}^{h_k} \E \norm{\tsco_{T-t_{k-1} - t}(X_{t_{k-1}+t}^+)}\, \D t\\
        &\qquad\qquad{} + \int_{t_{k-1}}^{t_{k}} \E \norm{\Ms_{T-t}(X_{t}) - \nabla \log \pi_{T-t}(X_{t})} \, \D t
        + \tilde \beta_{t_k} \, \int_{t_{k-1}}^{t_{k}} \E \norm{X_t^+ - X_t} \, \D t\,.
    \end{align*}
    Now, we have
    \begin{align}\label{eq:interp-diff}
    \begin{aligned}
        \E[\norm{X_t^+ - X_t}^2]
        &= \E\Bigl[ \Bigl\lVert\int_{t_{k-1}}^t \Bigl(\tsco_{T-t_{k-1}}(x) - \nabla \log \frac{\pi_{T-s}}{\upgamma}(X_s)\Bigr) \, e^{s-t_k}\, \D s\Bigr\rVert^2\Bigr] \\
        &\lesssim h_k \int_{t_{k-1}}^t\Bigl( \E[\norm{\Ms_{T-t_{k-1}}(x) - \nabla \log \pi_{T-t_{k-1}}(x)}^2] \\
        &\qquad\qquad\qquad{} + \E\Bigl[\Bigl\lVert \nabla \log \frac{\pi_{T-s}}{\upgamma}(X_s) - \nabla \log \frac{\pi_{T-t_{k-1}}}{\upgamma}(x)\Bigr\rVert^2\Bigr]\Bigr) \, \D s\,.
    \end{aligned}
    \end{align}
    On the other hand,
    \begin{align*}
        \norm{X^\alg_{t_{k}} - X_{t_{k}}}^2 &\lesssim \Bigl\lVert\int_{t_{k-1}}^{t_{k}} \bigl(\tsco_{T-t_{k-1} - \tau_k}(X^+_{t_{k-1} + \tau_k}) - \nabla \log \frac{\pi_{T-t}}{\upgamma}(X_t)\bigr) \, e^{t-t_k} \, \D t\Bigr\rVert^2 \\
        &\lesssim h_k \int_{t_{k-1}}^{t_k} \norm{\nabla \log \pi_{T-t}(X_t) - \Ms_{T-t_{k-1} -\tau_k}(X_{t_{k-1} + \tau_k})}^2 \, \D t \\
        &\qquad + \tilde \beta_{t_k}^2 h_k^2\, \norm{X^+_{t_{k-1} + \tau_k} - X_{t_{k-1} + \tau_k}}^2\,.
    \end{align*}
    We then split the first term into
    \begin{align*}
        \E[\norm{\nabla \log \pi_{T-t}(X_t) - \Ms_{T-t_{k-1} -\tau_k}(X_{t_{k-1} + \tau_k})}^2] &\lesssim \MG^2_{t_{k-1}+\tau_k, t}(x) + \MF^2_{t_{k-1} + \tau_k}(x)\,.
    \end{align*}
    For the second term, we can reuse~\eqref{eq:interp-diff}. This gives our desired bound.
\end{proof}

The following lemma follows from applying the Lipschitz property of the estimator, as well as the bound~\eqref{eq:interp-diff} that we previously derived.

\begin{lemma}[Score estimator bounds]\label{lem:score-estimator}
    We have, for $X_t^+(x)$ obtained from~\eqref{eq:randomized-midpoint} conditional on $X_{t_{k-1}}^\alg = x$, for $t \in [t_{k-1}, t_{k}]$,
    \begin{align*}
        \E[\norm{\tsco_{T-t}(X_t^+(x))}^2] &\lesssim \tilde \beta_{t_k}^2 h_k\int_{t_{k-1}}^t \bigl(\MG_{t_{k-1}, s}^2(x) + \MF_{t_{k-1}}^2(x) \bigr) \,\D s \\
        &\qquad + \MG_{t_{k-1}, t}^2(x) +  \MF_t^2(x) + \bigl\lVert\nabla \log \frac{\pi_{T-{t_{k-1}}}}{\upgamma}(x)\bigr\rVert^2\,.
    \end{align*}
\end{lemma}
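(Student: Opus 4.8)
The plan is to bound $\E[\norm{\tsco_{T-t}(X_t^+(x))}^2]$ by comparing the interpolant $X_t^+(x)$ to the true reverse process $X_t(x)$ of~\eqref{eq:OU-reverse} started from the same point $x$ at time $t_{k-1}$ and driven by the same Brownian motion, exactly as in the derivation of~\eqref{eq:interp-diff}. First I would split via the triangle inequality,
\begin{align*}
    \E[\norm{\tsco_{T-t}(X_t^+)}^2] \lesssim \E[\norm{\tsco_{T-t}(X_t^+) - \tsco_{T-t}(X_t)}^2] + \E[\norm{\tsco_{T-t}(X_t)}^2]\,,
\end{align*}
and treat the two terms separately.

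For the first term I would invoke Assumption~\ref{as:vary-lip}: pathwise, $\norm{\tsco_{T-t}(X_t^+) - \tsco_{T-t}(X_t)} \le \tilde\beta_t\,\norm{X_t^+ - X_t}$, and since $\tilde\beta_t = \tilde\beta_0/(1-e^{-2(T-t)})$ is nondecreasing in $t$ on $[t_{k-1},t_k]$, I may replace $\tilde\beta_t$ by $\tilde\beta_{t_k}$. Taking expectations and applying~\eqref{eq:interp-diff} — recalling $X_{t_{k-1}}(x) = x$, so the first summand there is $\MF_{t_{k-1}}^2(x)$ (a deterministic function of $x$) and the second is $\MG_{t_{k-1},s}^2(x)$ — produces precisely the contribution $\tilde\beta_{t_k}^2 h_k \int_{t_{k-1}}^t (\MG_{t_{k-1},s}^2(x) + \MF_{t_{k-1}}^2(x))\,\D s$.

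For the second term I would expand $\tsco_{T-t} = \nabla\log\tfrac{\pi_{T-t}}{\upgamma} + (\Ms_{T-t} - \nabla\log\pi_{T-t})$, so that after a triangle inequality
\begin{align*}
    \E[\norm{\tsco_{T-t}(X_t)}^2] \lesssim \E\bigl[\bigl\lVert\nabla\log\tfrac{\pi_{T-t}}{\upgamma}(X_t)\bigr\rVert^2\bigr] + \MF_t^2(x)\,,
\end{align*}
the second piece being $\MF_t^2(x)$ by definition. For the remaining piece I would subtract and add $\nabla\log\tfrac{\pi_{T-t_{k-1}}}{\upgamma}(X_{t_{k-1}}) = \nabla\log\tfrac{\pi_{T-t_{k-1}}}{\upgamma}(x)$, which is deterministic given $x$, and apply the triangle inequality once more to obtain $\MG_{t_{k-1},t}^2(x) + \bigl\lVert\nabla\log\tfrac{\pi_{T-t_{k-1}}}{\upgamma}(x)\bigr\rVert^2$. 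Summing the three contributions gives the claimed bound.

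I do not anticipate a substantive obstacle: this lemma is essentially a bookkeeping consequence of~\eqref{eq:interp-diff} together with the Lipschitz hypothesis. The only points demanding care are (i) using the monotonicity of $t \mapsto \tilde\beta_t$ to pass to the uniform constant $\tilde\beta_{t_k}$ over $[t_{k-1},t_k]$, and (ii) consistently exploiting that the true process in the definitions of $\MG$ and $\MF$ is initialized at $X_{t_{k-1}}(x) = x$, which is what collapses the time-$t_{k-1}$ evaluations into deterministic functions of $x$ and lines the error terms up with $\MF_{t_{k-1}}^2(x)$ and $\MG_{t_{k-1},\,\cdot\,}^2(x)$.
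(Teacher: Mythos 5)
Your proposal is correct and follows exactly the route the paper intends: the paper's proof is just the remark that the lemma ``follows from applying the Lipschitz property of the estimator, as well as the bound~\eqref{eq:interp-diff},'' and your argument --- comparing $X_t^+(x)$ to the synchronously coupled true process, using $\tilde\beta_t \le \tilde\beta_{t_k}$, and then decomposing $\tsco_{T-t}(X_t)$ into the score error, the score increment $\MG_{t_{k-1},t}$, and the deterministic term $\nabla\log\frac{\pi_{T-t_{k-1}}}{\upgamma}(x)$ --- is precisely that computation spelled out. No gaps.
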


Recall that the local errors $(\bar{\mc E}^\weak_k)^2$, $(\bar{\mc E}^\strong_k)^2$ are simply the pointwise local errors from Lemma~\ref{lem:error-est}, averaged over $x \sim \pi_{T-t_{k-1}}$.

\begin{lemma}[Local errors]\label{lem:loc_errs}
    For all $k \in \mathbb N$, we have the following errors, taking $1-\varrho_k \asymp h_k^{r}$ for some power $r \geq 2$ at each step (treated as an absolute constant), with $h_k \ll 1/\tilde\beta_{t_k}$ always,
    \begin{enumerate}[label=(\alph*)]
        \item \textbf{Weak error:} 
        \begin{align*}
            (\bar{\mc E}^\weak_k)^2 &\lesssim h_k^2 \epscore^2 + {\tilde \beta_{t_k}^2 h_k^4}\, (\Mg_{t_k}  -\Mg_{t_{k-1}}) + h_k^{2+2r} \Mg_{t_{k}}\,.
        \end{align*}

        \item \textbf{Strong error:}
        \begin{align*}
            (\bar{\mc E}^\strong_k)^2 &\lesssim h_k^2 \epscore^2 + h_k^2\,(\Mg_{t_{k}}  -\Mg_{t_{k-1}}) \,.
        \end{align*}
    \end{enumerate}
    Note that the main difference between the two errors is the additional error term $h_k^2\,(\Mg_{t_k} - \Mg_{t_{k-1}})$ in the strong error.
\end{lemma}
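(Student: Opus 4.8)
The goal is to average the pointwise local error bounds of Lemma~\ref{lem:error-est} over $x \sim \pi_{T-t_{k-1}}$ and simplify using the two ``magic lemmas''. The key structural point is that, because $X_{t_{k-1}}$ is started at $x \sim \pi_{T-t_{k-1}}$, the marginal law of $X_t(x)$ along \eqref{eq:OU-reverse} is exactly $\pi_{T-t}$, so the averaged quantities $\E_{\pi_{T-t_{k-1}}} \MF_t^2$ and $\E_{\pi_{T-t_{k-1}}} \MG_{s,t}^2$ become precisely the objects controlled by Assumption~\ref{as:l2-acc} and Lemma~\ref{lem:magic-2} respectively. Concretely, $\E_{\pi_{T-t_{k-1}}} \MF_t^2 = \E_{\pi_{T-t}}[\norm{\Ms_{T-t} - \nabla\log\pi_{T-t}}^2] \le \epscore^2$ by Assumption~\ref{as:l2-acc}, and $\E_{\pi_{T-t_{k-1}}} \MG_{t_{k-1},t}^2 \lesssim \Mg_{t} - \Mg_{t_{k-1}}$ by Lemma~\ref{lem:magic-2} (noting $\MG$ is defined via the joint law $\pi_{T-t_{k-1},T-t}$, which is what the lemma provides). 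The same applies to $\E \MG_{t_{k-1}+\tau_k, t}^2$ and $\E\MF_{t_{k-1}+\tau_k}^2$ after also integrating over the randomness of $\tau_k \in [0,\varrho_k h_k]$.

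\textbf{Strong error.} Taking expectations over $x$ in the strong-error line of Lemma~\ref{lem:error-est}, the first bracketed term gives $h_k^2\,(\E\MF_{t_{k-1}+\tau_k}^2 + \tilde\beta_{t_k}^2 h_k \int_{t_{k-1}}^{t_{k-1}+\tau_k}(\E\MG_{t_{k-1},s}^2 + \E\MF_{t_{k-1}}^2)\,\D s)$. Bounding $\E\MF_{t_{k-1}+\tau_k}^2 \le \epscore^2$, $\E\MF_{t_{k-1}}^2 \le \epscore^2$, and $\E\MG_{t_{k-1},s}^2 \lesssim \Mg_s - \Mg_{t_{k-1}} \le \Mg_{t_k} - \Mg_{t_{k-1}}$, the inner integral is at most $h_k (\Mg_{t_k} - \Mg_{t_{k-1}} + \epscore^2)$, so this contributes $h_k^2 \epscore^2 + \tilde\beta_{t_k}^2 h_k^4 (\Mg_{t_k} - \Mg_{t_{k-1}} + \epscore^2)$, which is $\lesssim h_k^2\epscore^2 + h_k^2(\Mg_{t_k}-\Mg_{t_{k-1}})$ using $h_k \ll 1/\tilde\beta_{t_k}$ (so $\tilde\beta_{t_k}^2 h_k^2 \ll 1$). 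The second term, $h_k \int_{t_{k-1}}^{t_k} \E\MG_{t_{k-1}+\tau_k,t}^2\,\D t$, is bounded by $h_k \int_{t_{k-1}}^{t_k}(\Mg_t - \Mg_{t_{k-1}+\tau_k})\,\D t \lesssim h_k^2(\Mg_{t_k} - \Mg_{t_{k-1}})$ since $\Mg$ is monotone and $\tau_k \ge 0$. Summing the two contributions yields (b).

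\textbf{Weak error.} This is the more delicate one, as it also requires Lemma~\ref{lem:score-estimator} to handle the truncation term $\sup_t \E\norm{\tsco_{T-t}(X_t^+)}^2$. Squaring and averaging the first line of the weak bound in Lemma~\ref{lem:error-est}: $h_k \int_{t_{k-1}}^{t_k} \E\MF_t^2\,\D t \le h_k^2 \epscore^2$, and $h_k \int_{t_{k-1}}^{t_k} \tilde\beta_{t_k}^2 h_k \int_{t_{k-1}}^t (\E\MG_{t_{k-1},s}^2 + \E\MF_{t_{k-1}}^2)\,\D s\,\D t \lesssim \tilde\beta_{t_k}^2 h_k^4 (\Mg_{t_k} - \Mg_{t_{k-1}} + \epscore^2) \lesssim \tilde\beta_{t_k}^2 h_k^4(\Mg_{t_k}-\Mg_{t_{k-1}}) + h_k^2\epscore^2$. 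For the truncation term, $(1-\varrho_k)^2 h_k^2 \asymp h_k^{2+2r}$, and Lemma~\ref{lem:score-estimator} together with the magic lemmas gives $\E\norm{\tsco_{T-t}(X_t^+)}^2 \lesssim \tilde\beta_{t_k}^2 h_k(\Mg_{t_k}-\Mg_{t_{k-1}}+\epscore^2) + (\Mg_{t_k}-\Mg_{t_{k-1}}) + \epscore^2 + \Mg_{t_{k-1}} \lesssim \Mg_{t_k} + \epscore^2$ (absorbing the $\tilde\beta_{t_k}^2 h_k$ factor via $h_k \ll 1/\tilde\beta_{t_k}$ and using $\Mg_{t_k}-\Mg_{t_{k-1}} \le \Mg_{t_k}$). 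Here I also use $\epscore \le 1 \lesssim \Mg_{t_k}$ or simply keep $\epscore^2$ and note $h_k^{2+2r}\epscore^2 \le h_k^2\epscore^2$; this contributes $h_k^{2+2r}\Mg_{t_k}$. Collecting everything gives (a). The main obstacle is bookkeeping: correctly tracking that $\tau_k$ is random and integrating over it, verifying that the joint-law hypothesis of Lemma~\ref{lem:magic-2} is met when the ``base time'' is itself the random $t_{k-1}+\tau_k$ (which holds because conditioning on $\tau_k$ and using the Markov/strong-Markov structure of \eqref{eq:OU-reverse} preserves the marginal $\pi_{T-\cdot}$), and confirming that every place the smallness condition $h_k \ll 1/\tilde\beta_{t_k}$ is invoked is exactly where a $\tilde\beta_{t_k}^2 h_k$ or $\tilde\beta_{t_k}^2 h_k^2$ factor appears so that it can be safely dropped without leaving a spurious term in the final bound.
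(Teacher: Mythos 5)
Your proposal is correct and follows essentially the same route as the paper: average the pointwise bounds of Lemma~\ref{lem:error-est} over $x \sim \pi_{T-t_{k-1}}$, control the averaged $\MF_t^2$ by Assumption~\ref{as:l2-acc} and the averaged $\MG_{s,t}^2$ by Lemma~\ref{lem:magic-2} (with monotonicity of $t\mapsto\Mg_t$), handle the truncation term via Lemma~\ref{lem:score-estimator}, and absorb $\tilde\beta_{t_k}h_k \ll 1$ factors. Your write-up simply makes explicit the bookkeeping (including integrating over the random $\tau_k$) that the paper's terse proof leaves implicit.
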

\begin{proof}
    To bound these in expectation, assuming that $X \sim \pi_{T-t_{k-1}}$, we have from Lemma~\ref{lem:magic-2},
    \begin{align*}
        \sup_{t_{k-1} \leq s\leq t \leq t_{k}} \E_{X \sim \pi_{T-t_{k-1}}}[\MG_{s,t}^2(X)] \leq \Mg_{t_{k}} - \Mg_{t_{k-1}}\,.
    \end{align*}
    Here, we note that $t\mapsto \Mg_t$ is monotonically increasing along the Ornstein--Uhlenbeck semigroup.
    
    On the other hand,
    \begin{align*}
         \sup_{t \in [t_{k-1}, t_{k}]} \E_{X\sim\pi_{T-t_{k-1}}}[\MF_t^2(X)] \lesssim \epscore^2\,.
    \end{align*}
    Substituting these into Lemma~\ref{lem:error-est}, and using Lemma~\ref{lem:score-estimator} concludes the proof.
\end{proof}

\subsection{Verifying the assumptions of shifted composition}

Next, we check the hypotheses of the shifted composition local error framework (see Appendix~\ref{app:review_shift}).

\begin{lemma}[Properties of~\eqref{eq:randomized-midpoint}]\label{lem:shifted_comp_assumptions}
    For all $k \in \mathbb N$, the Markov kernels $P_k^\alg$ corresponding to~\eqref{eq:randomized-midpoint} satisfy the following properties, with the same definitions as Lemma~\ref{lem:error-est}.
    Let $Y^\alg$ denote the output of~\eqref{eq:randomized-midpoint} starting from $y$.
    Assume that $h_k \ll 1/\tilde \beta_{t_k}$, and define $\Mp_k \deq{\tilde \beta_{t_k} h_k}$.
    \begin{enumerate}[label=(\alph*)]
        \item \textbf{Wasserstein Lipschitzness:} $
            \norm{X^\alg - Y^\alg}_{L^2} - \norm{x-y} \lesssim \Mp_k\,\norm{x-y}$.

        \item \textbf{Coupling:} $\norm{X^\alg - Y^\alg - (x-y)}_{L^2} \lesssim \Mp_k\, \norm{x-y}$.

        \item \textbf{Regularity:} Let $\varrho_k \in [0, 1)$ be a parameter which is arbitrarily close to $1$. Then, we have
        \begin{align*}
            \KL(\delta_x P_k^\alg \mmid \delta_y P_k^\alg) \lesssim \frac{\norm{x-y}^2}{h_k} \log \frac{1}{1-\varrho_k}\,.
        \end{align*}
    \end{enumerate}
\end{lemma}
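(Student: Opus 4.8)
The plan is to run $P_k^\alg$ from both $x$ and $y$ under a synchronous coupling, i.e.\ with the same draw of the auxiliary randomness $(\tau_k, Z_1, Z_2)$, and to exploit that under this coupling the noise contributions cancel in the differences of iterates. Writing $X^+(x)$ and $X^\alg(x)$ for the midpoint and full-step outputs started at $x$, \eqref{eq:randomized-midpoint} gives $X^+(x) - X^+(y) = e^{-\tau_k}(x-y) + 2(1-e^{-\tau_k})(\tsco_{T-t_{k-1}}(x) - \tsco_{T-t_{k-1}}(y))$ and $X^\alg(x) - X^\alg(y) = e^{-h_k}(x-y) + 2(1-e^{-h_k})(\tsco_{T-t_{k-1}-\tau_k}(X^+(x)) - \tsco_{T-t_{k-1}-\tau_k}(X^+(y)))$, both of which depend on the realized randomness only through $\tau_k$. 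Since $t_{k-1} \le t_{k-1}+\tau_k \le t_k$ and $s\mapsto\tilde\beta_s$ is nondecreasing, Assumption~\ref{as:vary-lip} bounds each inner difference by $\tilde\beta_{t_k}$ times the corresponding outer norm; combined with $1-e^{-\tau_k}\le\tau_k\le h_k$ and $\Mp_k = \tilde\beta_{t_k}h_k \ll 1$ this yields, pointwise in $\tau_k$ and hence in $L^2$, $\norm{X^+(x)-X^+(y)} \le (1+2\Mp_k)\norm{x-y}$ and then $\norm{X^\alg(x)-X^\alg(y)} - \norm{x-y} \lesssim \Mp_k\norm{x-y}$, which is~(a). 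For~(b) we subtract $x-y$ and use $1-e^{-h_k}\le h_k\le\Mp_k$ (since $\tilde\beta_{t_k}\ge1$) on the first term and the bound $2h_k\tilde\beta_{t_k}(1+2\Mp_k)\norm{x-y}\lesssim\Mp_k\norm{x-y}$ on the $\tsco$-term.

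For~(c) the key is to peel off the mixture structure of $P_k^\alg$ by joint convexity of $\KL$ applied to the mixing over $(\tau_k, Z_1)$. The law of $\tau_k$ and the law of $Z_1\sim\upgamma$ are independent of the starting point, and conditionally on $\tau_k=\tau$ and $Z_1$ the output $X^\alg(x)$ is exactly Gaussian with covariance $(1-e^{2(\tau-h_k)})I$ and mean $m_\tau^x(Z_1) = e^{-h_k}x + 2(1-e^{-h_k})\tsco_{T-t_{k-1}-\tau}(X^+(x,Z_1)) + e^{\tau-h_k}\sqrt{1-e^{-2\tau}}\,Z_1$, where $X^+(x,Z_1)$ denotes the midpoint. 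Thus $\delta_x P_k^\alg$ and $\delta_y P_k^\alg$ are mixtures, against the \emph{same} mixing measure over $(\tau_k,Z_1)$, of Gaussians with \emph{identical} covariance, so joint convexity of $\KL$ together with the Gaussian formula gives $\KL(\delta_x P_k^\alg\mmid\delta_y P_k^\alg) \le \E_{\tau_k,Z_1}[\,\tfrac12(1-e^{2(\tau_k-h_k)})^{-1}\norm{m_{\tau_k}^x(Z_1)-m_{\tau_k}^y(Z_1)}^2\,]$. In the difference $m_\tau^x(Z_1)-m_\tau^y(Z_1)$ the explicit $Z_1$-term cancels and $X^+(x,Z_1)-X^+(y,Z_1)$ is $Z_1$-independent, so exactly the computation in~(a)/(b) gives $\norm{m_\tau^x(Z_1)-m_\tau^y(Z_1)}\lesssim\norm{x-y}$ uniformly in $(\tau,Z_1)$. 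It remains to bound $\E_{\tau_k}[(1-e^{2(\tau_k-h_k)})^{-1}]$: the density of $\tau_k$ on $[0,\varrho_k h_k)$ is $\asymp 1/h_k$ for $h_k\ll1$ and $1-e^{2(\tau-h_k)}\gtrsim h_k-\tau$, so the substitution $u=h_k-\tau$ gives $\tfrac1{h_k}\int_{(1-\varrho_k)h_k}^{h_k}\tfrac{du}{u}=\tfrac1{h_k}\log\tfrac1{1-\varrho_k}$, and hence $\KL(\delta_x P_k^\alg\mmid\delta_y P_k^\alg)\lesssim\tfrac{\norm{x-y}^2}{h_k}\log\tfrac1{1-\varrho_k}$.

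I expect the main obstacle to be~(c): one must resist comparing the genuinely non-Gaussian one-step laws directly, and instead condition on $\tau_k$ and on $Z_1$ (rather than on the midpoint $X^+$, whose law depends on the starting point) so that both laws become mixtures against a common measure of Gaussians with a common covariance, which is precisely what makes joint convexity of $\KL$ applicable. The other point requiring care is that the truncation $\tau_k\le\varrho_k h_k<h_k$ is essential here, since the last-step Gaussian has covariance $(1-e^{2(\tau-h_k)})I$ which degenerates as $\tau\uparrow h_k$, and it is the resulting $\int du/u$ near the endpoint that produces the $\log\tfrac1{1-\varrho_k}$ factor. Parts~(a) and~(b), by contrast, are routine once the synchronous coupling and the cancellation of noise terms are in place, and use only Assumption~\ref{as:vary-lip} and $\Mp_k\ll1$.
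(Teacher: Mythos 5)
Your proposal is correct and follows essentially the same route as the paper: synchronous coupling (fixing $\tau_k$ and the noise) together with Assumption~\ref{as:vary-lip} for (a) and (b), and for (c) conditioning on the midpoint randomness so that the one-step law becomes a Gaussian mixture with common covariance, applying joint convexity of $\KL$, and integrating $\tau\mapsto (h_k-\tau)^{-1}$ over the truncated density to get the $\log\frac{1}{1-\varrho_k}$ factor. The only cosmetic differences are that you prove (a) directly rather than deducing it from (b), and you condition on $(\tau_k,Z_1)$ instead of the Brownian path up to the midpoint, which is the same conditioning in the discrete implementation.
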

\begin{proof}\mbox{}
    \begin{enumerate}[label=(\alph*)]
        \item This follows from (b).

        \item Fixing $\tau_k$ and synchronously coupling the Brownian motions, we have
        \begin{align*}
            \norm{X^\alg - Y^\alg - (x-y)}
            &\le (1-e^{-h_k})\,\norm{x-y} \\
            &\qquad{} + 2\,(1-e^{-h_k})\,\norm{\tsco_{T-t_{k-1}-\tau_k}(X_{t_{k-1}+\tau_k}^+) - \tsco_{T-t_{k-1}-\tau_k}(Y_{t_{k-1}}^+)} \\
            &\le (1-e^{-h_k})\, \norm{x-y} + 2\tilde \beta_{t_k}\, (1-e^{-h_k})\, \norm{X_{t_{k-1} + \tau_k}^+ - Y_{t_{k-1} + \tau_k}^+}\,.
        \end{align*}
        As for the second term, we can bound it again via synchronous coupling:
        \begin{align*}
            \norm{X_{t_{k-1} + \tau_k}^+ - Y_{t_{k-1} + \tau_k}^+}
            &= \norm{e^{-\tau_k}\,(x-y) + 2\,(1-e^{-\tau_k})\,(\tsco_{T-t_{k-1}}(x) - \tsco_{T-t_{k-1}}(y))}\\
            &\lesssim (1+\tilde\beta_{t_k} h_k)\, \norm{x-y}
            \lesssim \norm{x-y}\,.
        \end{align*}

        \item We apply a familiar trick from~\cite{altschuler2024SC3} where we compute the conditional $\KL$ given $\tau_k$, and then integrate. It is for this reason that we need to truncate our random variable $\tau_k$. Condition on $\omega_k \deq \{\tau_k, (B_t)_{t \leq t_{k-1} + \tau_k}\}$. Then, we have
        \begin{align*}
            \delta_x P_{k \mid \omega_k}^\alg = \mc N \bigl(e^{-h_k} x + 2\,(1-e^{-h_k})\, \tsco_{T - t_{k-1} - \tau_k}(X^+_{t_{k-1} + \tau_k}) + \zeta_{k,1},\, (1-e^{-2(h_k-\tau_k)} )\,I_d \bigr)\,,
        \end{align*}
        where
        \begin{align*}
            \zeta_{k, 1} = \sqrt 2\int_{t_{k-1}}^{t_{k-1} + \tau_k} e^{s-t_{k}} \, \D B_s\,.
        \end{align*}
        Using the formula for the $\KL$ divergence between two Gaussians, we find
        \begin{align*}
            &\frac{\norm{e^{-h_k}\, (x-y) + 2\,(1-e^{-h_k})\, (\tsco_{T - t_{k-1} - \tau_k}(X^+_{t_{k-1} + \tau_k}) - \tsco_{T - t_{k-1} - \tau_k}(Y^+_{t_{k-1} + \tau_k}))}^2}{2\,(1-e^{-2(h_k - \tau_k)})} \\
            &\qquad \lesssim \frac{1}{1-e^{-2(h_k - \tau_k)}}\, \norm{x-y}^2 + \frac{\beta_{t_k}^2 h_k^2}{1-e^{-2(h_k-\tau_k)}}\, \norm{X_{t_{k-1}+\tau_k}^+ - Y_{t_{k-1} + \tau_k}^+}^2\\
            &\qquad \lesssim \frac{\norm{x-y}^2}{1-e^{-2(h_k-\tau_k)}}\,.
        \end{align*}
        
        Linearizing the denominator for $h_k \lesssim 1$ and $\tau_k \in [0, \varrho_k h_k]$ for some parameter $\varrho_k$ approaching $1$,
        \begin{align*}
            \KL(\delta_x P_{k \mid \omega_k}^\alg \mmid \delta_y P_{k \mid \omega_k}^\alg) \lesssim \frac{\norm{x-y}^2}{h_k - \tau_k}\,.
        \end{align*}
        Taking expectations and using joint convexity, we find
        \begin{align*}
            \KL(\delta_x P_{k}^\alg \mmid \delta_y P_k^\alg)
            &\lesssim \frac{\norm{x-y}^2}{h_k} \log \frac{1}{1-\varrho_k}\,.
        \end{align*}
    \end{enumerate}
\end{proof}

\subsection{Integral computations}

Now, we need a bespoke version of the original local error recursion from~\citet{altschuler2024SC3} which holds for the time-varying step sizes considered in this work.
We consider the following step size choice, which satisfies $h_k \ll 1/\tilde\beta_{t_k}$.
\begin{align}\label{eq:step-size}
    h_k \deq \frac{C_h \epscore}{\tilde \beta_0 \, \sqrt{(d+ \mathtt M_2^2)\,T}} \min\{1, T-t_k \} \asymp \frac{\epscore}{\tilde \beta_0 \sqrt{(d+ \mathtt M_2^2)\,T} } \cdot (1-e^{-2(T-t_k)}) \,.
\end{align}
Here, $C_h \asymp 1$ is an absolute constant. Let us briefly justify this. When $T-t_k \leq 1/\tilde \beta_0$, then $\frac{1}{1-e^{-2(T-t_k)}} \asymp \frac{1}{T-t_k}$. Otherwise, $\frac{1}{1-e^{-2(T-t_k)}} \asymp 1$.
We also select the shift
\begin{align*}
    \eta_t = \frac{C_\eta \tilde \beta_0}{1-e^{-2 (T-t)}}\,,
\end{align*}
where again $C_\eta \asymp 1$.

The following proof is heavily based on the argument of~\cite{altschuler2024SC3}. Although we briefly describe the high-level idea in the subsequent proof, a detailed discussion of the shifted composition framework is beyond the scope of this paper and we refer to~\citet{altschuler2024SC3}.

\begin{lemma}\label{lem:kl-recursion-time-vary}
    Under Assumptions~\ref{as:l2-acc},~\ref{as:bdd-moment}, and~\ref{as:vary-lip}, with the choice of step-size given in~\eqref{eq:step-size} and for $T \geq 1$ and $t_N \in (T-\frac{1}{6}, T)$, there exists a probability measure $\pi_{t_N}^\aux$ such that
    \begin{align*}
        \KL(\pi_{t_N}^\aux \mmid \pi_{t_N}^\alg) &\lesssim \KL(\pi_T \mmid \upgamma) + \Bigl(T + \frac{1}{T} \log \frac{1}{T-t_N}\Bigr)\,\epscore^2 \log \frac{\tilde \beta_0 \sqrt{(d+ \mathtt M_2^2)\,T}}{\epscore\, (T-t_N)} \,.
    \end{align*}
    Furthermore, if we consider $d_N^2 = \norm{Y^\aux_{t_N} - Y_{t_N}}_{L^2}^2$ where $Y_{t_N}^\aux \sim \pi_{t_N}^\aux$ and $Y_{t_N} \sim \pi_{t_N}$, then
    \begin{align*}
        d_N^2 \lesssim \Bigl( (T-t_N)^2 + \frac{T-t_N}{T}\Bigr)\,\frac{\epscore^2}{\tilde\beta_0^2}\,.
    \end{align*}
\end{lemma}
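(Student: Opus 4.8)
The plan is to instantiate the time-varying extension of the shifted composition framework (Appendix~\ref{app:review_shift}) with the randomized-midpoint kernels $(P_k^\alg)$, the exact reverse-OU transition kernels $(P_k)$, the step size~\eqref{eq:step-size}, and the shift $\eta_t$ fixed above, and then to carry out the resulting sum. I would take the auxiliary chain to start at $\pi_T$, so that it agrees with the exact reverse process $(Y_{t_k})$ at initialization ($d_0=0$), to evolve it under the $(P_k^\alg)$, and to shift it toward $(Y_{t_k})$ at each step; by the shifted-composition analogue of the Girsanov decomposition used in prior DDPM analyses, the initialization mismatch between $\pi_T$ and the Gaussian start $\upgamma$ of the algorithm enters as $\KL(\pi_T \mmid \upgamma)$ (via data processing along $\bar P_N^\alg = P_1^\alg \dotsm P_N^\alg$), while the step-wise shifts contribute the discretization error. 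The inputs to the recursion are Lemma~\ref{lem:shifted_comp_assumptions} (which gives $L_k - 1 \lesssim \Mp_k$, $\gamma_k \lesssim \Mp_k$, and $c_k \lesssim h_k^{-1}\log\frac{1}{1-\varrho_k} \asymp h_k^{-1}\log\frac{1}{h_k}$, using $1-\varrho_k \asymp h_k^{r}$) and Lemma~\ref{lem:loc_errs} (the local errors), while the Fisher-information increments are handled through the telescoping identity $\sum_k (\Mg_{t_k}-\Mg_{t_{k-1}}) = \Mg_{t_N}-\Mg_{t_0}$ together with Lemma~\ref{lem:magic-1}.

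The reason the non-uniform schedule can be tolerated is an observation I would verify first: with the step size~\eqref{eq:step-size}, both $\Mp_k = \tilde\beta_{t_k} h_k$ and the per-step shift contraction $\exp(-\int_{t_{k-1}}^{t_k}\eta_s\,\D s)$ are $k$-independent up to absolute constants, the latter being $1 - \Theta\bigl(\epscore/\sqrt{(d+\mathtt M_2^2)\,T}\bigr)$, since $\eta_{t_k} h_k \asymp \epscore/\sqrt{(d+\mathtt M_2^2)\,T}$ does not depend on $k$. Choosing $C_\eta$ large enough relative to $C_h$ and the constants in Lemma~\ref{lem:shifted_comp_assumptions}(a,b) makes the net per-step factor $(1 - \Theta(\upeta))(1 + O(\Mp_k)) \le 1 - c\upeta$ for a uniform $\upeta \asymp \epscore/\sqrt{(d+\mathtt M_2^2)\,T}$; since $N = \widetilde\Theta\bigl(\tilde\beta_0\sqrt{(d+\mathtt M_2^2)\,T}/\epscore\bigr) \gtrsim \upeta^{-1}$ (using $\tilde\beta_0 \ge 1$), the analogue of the hypothesis $N \ge 2L/(L-1)$ from Appendix~\ref{app:review_shift} holds. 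This uniformity is exactly what keeps the distance recursion and the $\KL$ accumulation in geometric-sum form; re-deriving the recursion of~\citet{altschuler2024SC3} in a form that tolerates the $k$-dependent $L_k, \gamma_k, c_k$ and the noise-scale-dependent local errors, and checking that it still closes, is the main obstacle of the argument -- everything downstream is bookkeeping with the explicit schedule.

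Granting the recursion, the $\KL$ bound splits into three pieces: the initialization piece $\KL(\pi_T \mmid \upgamma)$; a bulk discretization piece dominated by $\sum_k c_k h_k^2 \epscore^2 \asymp \epscore^2 \sum_k h_k \log\frac1{h_k} \lesssim \epscore^2\,T\log\frac{\tilde\beta_0\sqrt{(d+\mathtt M_2^2)\,T}}{\epscore\,(T-t_N)}$, where I use $\sum_k h_k \le t_N \le T$ and $h_k \gtrsim h_{\min} \asymp h_{\max}(T-t_N)$ (so $\log\frac1{h_k}$ is bounded uniformly by the stated logarithm); and a near-terminal piece coming from the strong-error and Wasserstein-Lipschitz terms $\sum_k c_k(L_k-1)\,d_k^2 + \sum_k c_k \gamma_k (\bar{\mc E}_k^\strong)^2$ evaluated over the tail $t_k \to T$, where $h_k \asymp h_{\max}(T-t_k)$ shrinks and $\Mg_{t_k} \asymp d/(T-t_k)$ blows up. Here I would use $\Mp_k = \tilde\beta_{t_k} h_k \asymp \upeta$ to convert factors $\tilde\beta_{t_k}^2 h_k^{j}$ into $\upeta^2 h_k^{j-2}$, telescope the Fisher-information increments over a dyadic decomposition of the tail, bound $\Mg_{t_N} \lesssim d/(1-e^{-2(T-t_N)}) + \mathtt M_2^2$ via Lemma~\ref{lem:magic-1}, and exploit the cancellation $(d+\mathtt M_2^2)\,h_{\max}^2 \asymp \epscore^2/(\tilde\beta_0^2 T)$, which is precisely what produces the coefficient $\frac1T\log\frac1{T-t_N}$ (the $(d+\mathtt M_2^2)$ from the Fisher information cancelling against $h_{\max}^2$).

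For the $W_2$ bound, I would run the distance recursion $d_n^2 \lesssim (1-c\upeta)\,d_{n-1}^2 + C\bigl(\upeta^{-1}(\bar{\mc E}_n^\weak)^2 + (1 + \gamma_n \upeta^{-1})(\bar{\mc E}_n^\strong)^2\bigr)$ with $d_0 = 0$ and sum the geometric series; because the per-step errors grow toward $n=N$ while the contraction dominates that growth (again using $\upeta \gtrsim \Mp_k \asymp \tilde\beta_0 h_{\max} \ge h_{\max}$), $d_N^2$ is controlled by the last few terms, giving $d_N^2 \lesssim h_N^2\,(d+\mathtt M_2^2)\,T + h_N^2\,\Mg_{t_N} \asymp \bigl((T-t_N)^2 + \tfrac{T-t_N}{T}\bigr)\,\tfrac{\epscore^2}{\tilde\beta_0^2}$, where $h_N \asymp h_{\max}(T-t_N)$ and $\Mg_{t_N} \asymp d/(T-t_N) + \mathtt M_2^2$; the first term is the amplified terminal weak/variance contribution and the second is the terminal strong error carrying the Fisher-information blowup. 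Setting $\pi_{t_N}^\aux$ to be the law of the auxiliary chain at time $t_N$ then yields the lemma. I expect the only genuinely hard step to be the first: the original shifted-composition recursion is stated for uniform step sizes and uniform Lipschitz/shift constants, and re-deriving it so that it tolerates the varying constants and the time-dependent local errors, while still closing as a geometric sum and correctly surfacing the boundary layer near $t_N$, is where essentially all the work lies.
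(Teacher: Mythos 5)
Your proposal follows essentially the same route as the paper's proof: the same auxiliary process started at $\pi_T$ with shifts toward the true reverse process, the KL chain rule to surface $\KL(\pi_T \mmid \upgamma)$, the shifted-composition distance recursion fed by Lemmas~\ref{lem:loc_errs} and~\ref{lem:shifted_comp_assumptions}, the key observation that $\Mp_k$ and $\upeta_k$ are $k$-independent up to constants under~\eqref{eq:step-size}, and a telescoping (Abel-summation) treatment of the Fisher-information increments with $C_\eta$ taken large so the contraction weights are monotone. The only difference is presentational: the paper passes to a continuous-time interpolation and evaluates the resulting integrals explicitly (change of variables and integration by parts against $\partial_s\Mg_s$), whereas you keep discrete geometric sums and a dyadic tail decomposition, and your resulting bounds for both the KL and $d_N^2$ match the paper's.
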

\begin{proof}
    The idea is to define an auxiliary process $(Y_{t_n}^\aux)_{n \le N}$ with $Y_{t_n}^\aux \sim \pi_{t_N}^\aux$.
    The auxiliary process is defined as follows:
    \begin{align*}
        \tilde Y_0^\aux \sim \pi_T\,, \qquad
        \tilde Y_{t_n}^\aux \deq Y_{t_n}^\aux + \upeta_n\,(Y_{t_n} - Y_{t_n}^\aux)\,, \qquad Y_{t_{n+1}}^\aux \sim P^\alg_{n+1}(\tilde Y_{t_n}^\aux, \cdot)\,.
    \end{align*}
    Here, $\upeta_n \deq \int_{t_{n-1}}^{t_n} \eta_t\,\D t$, and $(Y_t)_{t\in [0,T]}$ denotes~\eqref{eq:OU-reverse}.
    In other words, the auxiliary process follows the~\eqref{eq:randomized-midpoint} algorithm (i.e., using an estimated score and time discretization), but we interleave steps which shift the auxiliary process toward the true reverse process.

    By the KL chain rule,
    \begin{align*}
        \KL(\pi_{t_N}^\aux \mmid \pi_{t_N}^\alg)
        &\le \KL(\pi_T \mmid \upgamma) + \E_{x\sim \pi_T} \KL(\mb P^{\aux}_x \mmid \mb P^{\alg}_x)\,,
    \end{align*}
    where $\mb P^{\aux}_x$, $\mb P^{\alg}_x$ denote path measures started from $x$.

    Define $d_n^2 \deq \E[\norm{Y^\aux_n - Y_{t_n}}^2]$ and note that $d_0 = 0$. We compute the KL divergence between the auxiliary process and the algorithm using the shifted composition technique and Lemma~\ref{lem:shifted_comp_assumptions}; see~\citet[\S 3]{altschuler2024SC3}.
    \begin{align*}
        \E_{x\sim \pi_T} \KL(\mb P^{\aux}_x \mmid \mb P^{\alg}_x)
        \lesssim \sum_{k=1}^N \frac{\upeta_k^2 d_k^2}{h_k} \log \frac{1}{h_k}
        \lesssim \sum_{k=1}^N h_k \eta_{kh}^2 d_k^2 \log \frac{1}{h_k}\,.
    \end{align*}
    The next step is to simplify the computation by approximating the sum by an integral, as was done in~\citet{altschuler2025SC4}.
    In this proof, we reserve the $\texttt{mathtt}$ font for continuous-time interpolations of discrete quantities appearing in this proof.
    Thus, $\mathtt d_t^2$ interpolates $d_n^2$, i.e., $\mathtt d_t^2 \deq d_{t_n}^2$ where $t_n \le t \le t_{n+1}$. Similarly, $\mathtt h_t$ is defined similarly to $h_k$ in~\eqref{eq:step-size}, replacing $t_k$ with $t$.
    Then,
    \begin{align*}
        \E_{x\sim \pi_T} \KL(\mb P^{\aux}_x \mmid \mb P^{\alg}_x)
        \lesssim \int_0^{t_N} \eta_t^2 \mathtt d_t^2 \log \frac{1}{\mathtt h_t} \, \D t\,.
    \end{align*}
    
    We next write down a recursion for $d_n^2$. This is the usual local error recursion, see~\citet[Lemma B.5]{altschuler2024SC3}.
    \begin{align*}
        d_{n}^2 \leq (1+\Mp_{n})^2\, (1-\upeta_n)^2\, d_{n-1}^2 + 2\,(\bar {\mc E}_{n}^\weak + \Mp_n \bar{\mc E}_n^\strong)\, (1-\upeta_n)\, d_{n-1} + (\bar{\mc E}_n^\strong)^2\,.
    \end{align*}
    Here, we invoke Lemma~\ref{lem:shifted_comp_assumptions}, the conclusion of which involves hidden universal constants. By redefining $\Mp_n$ (so that $\Mp_n = O(\tilde\beta_{t_n} h_n)$), we write the above recursion without any universal constants, which simplifies the following computations.
    
    Applying Young's inequality on the middle term, we find that
    \begin{align*}
        d_{n}^2
        &\leq (1+\Mp_{n})\, (1-\upeta_n)\, d_{n-1}^2
        + O\Bigl(\frac{(\bar {\mc E}_{n}^\weak + \Mp_n \bar{\mc E}_n^\strong)^2}{(1+\Mp_{n})\, (1-\upeta_n) - (1+\Mp_{n})^2\, (1-\upeta_n)^2} + (\bar{\mc E}_n^\strong)^2\Bigr)\,.
    \end{align*}
    To simplify the denominator, let us make the ansatz (which we will verify later) that $\Mp_n, \upeta_n \ll 1$ and $(1+\Mp_n)\,(1-\upeta_n) < 1$.
    This then yields the following recursion, noting that $d_0^2 = 0$ is assumed:
    \begin{align*}
        d_n^2 \lesssim \sum_{k=1}^n \Bigl(\prod_{j = k+1}^{n} (1+\Mp_j)\, (1-\upeta_j) \Bigr) \,\Bigl(\frac{(\bar{\mc E}_k^\weak)^2}{\upeta_k - \Mp_k} + (\bar{\mc E}_k^\strong)^2 \Bigr)\,.
    \end{align*}
    In such a case, given our choice of step size and shift, defining
    \begin{align*}
        \mathtt p_t \asymp \frac{\tilde \beta_0 \mathtt h_t}{1-e^{-2(T-t_k)}}\,, \qquad \mathtt h_t \deq \frac{C_h \epscore\, (1-e^{-2(T-t)})}{\tilde \beta_0  \sqrt{(d+ \mathtt M_2^2) T}}\,, 
    \end{align*}
    so that naturally $\Mp_k = \mathtt p_{t_k}$, $h_k = \mathtt h_{t_k}$, we can write
    \begin{align*}
        \upeta_k - \Mp_k 
        \asymp \frac{\tilde \beta_0 \mathtt h_{t_k}}{1-e^{-2(T-t_k)}} \asymp \frac{\epscore}{ \sqrt{(d+ \mathtt M_2^2)T}}\,,
    \end{align*}
    under our choices as well.
    We indeed have $(1+\Mp_n)\,(1-\upeta_n) < 1$ if we choose $C_\eta$ to be a sufficiently large absolute constant, and $\Mp_n, \upeta_n \ll 1$.

    Furthermore, define the following:
    \begin{align*}
        (\mathtt E_t^{\strong})^2 \deq \epscore^2 \, \frac{\epscore\,(1-e^{-2(T-t)})}{\tilde \beta_0  \sqrt{(d+ \mathtt M_2^2) T}} + \frac{\epscore^2\, (1-e^{-2(T-t)})^2}{\tilde \beta^2_0\, (d+ \mathtt M_2^2)T}\, \partial_t \Mg_t\,, \\
        (\mathtt E_t^{\weak})^2 \deq \epscore^2 \, \frac{\epscore\,(1-e^{-2(T-t)})}{\tilde \beta_0  \sqrt{(d+ \mathtt M_2^2) T}} + \frac{\epscore^4\, (1-e^{-2(T-t)})^2}{\tilde \beta^2_0\, (d+ \mathtt M_2^2)^{2} T^2}\, \partial_t \Mg_t\,.
    \end{align*}
    These are obtained by taking the local errors from Lemma~\ref{lem:loc_errs}, dividing by a factor of $h_k$ (which is helpful when converting from the summation to the integral approximation), and taking the continuous-time interpolation.
    Here, the contribution from the $h_k^{2r}$ term can be seen to be negligible, taking $r \geq 4$ sufficiently large and bounding $\Mg$ using Lemma~\ref{lem:magic-1}. Note that the finite difference $\Mg_{t_k} - \Mg_{t_{k-1}}$ converts into a derivative. Finally, we have for absolute constants $\bar c, c$ that
    \begin{align*}
        \prod_{j = k+1}^{n} (1+\Mp_j)\, (1-\upeta_j) &\leq \exp\Bigl(\sum_{j=k+1}^n \bigl( c\tilde \beta_{t_j} h_j -\frac{C_\eta \tilde \beta_0 h_j}{c\,(1-e^{-2(T-t_j)})}\bigr) \Bigr) \\
        &\leq \exp\Bigl(-\int_{t_{k+1}}^{t_n} \frac{\bar c C_\eta \tilde\beta_0}{1-e^{-2(T-t)}} \, \D t \Bigr)\,,
    \end{align*}
    so long as we choose $C_\eta$ to be a sufficiently large absolute constant.
    We then substitute this into
    \begin{align*}
        \mathtt d_t^2 &\lesssim \int_0^t \exp\Bigl(-\int_s^t \frac{\bar c C_\eta \tilde\beta_0}{1-e^{-2(T-r)}}\, \D r \Bigr)\, \Bigl((\mathtt E_t^\strong)^2 + \frac{ \sqrt{(d+\mathtt M_2^2)T}}{\epscore}\, (\mathtt E_t^\weak)^2\Bigr)\, \D s\,.
    \end{align*}
    If we substitute in the definitions of $\mathtt E_t^\weak$, $\mathtt E_t^\strong$,
    \begin{align*}
        \mathtt d_t^2 &\lesssim \int_0^t \exp\Bigl(-\int_s^t \frac{\bar c C_\eta \tilde\beta_0}{1-e^{-2(T-r)}}\, \D r \Bigr) \, \Bigl(\epscore^2 \, \frac{1-e^{-2(T-s)}}{\tilde \beta_0} + \frac{\epscore^2\, (1-e^{-2(T-s)})^2}{\tilde \beta_0^2\, (d+ \mathtt M_2^2) T}\, \partial_s \Mg_s\Bigr) \, \D s \\
        &=\int_0^t \Bigl(\frac{e^{2(T-t)} - 1}{e^{2(T-s)} - 1} \Bigr)^{\frac{\bar c C_\eta \tilde\beta_0}{2}} \,\Bigl(\epscore^2 \, \frac{1-e^{-2(T-s)}}{\tilde \beta_0} + \frac{\epscore^2 \,(1-e^{-2(T-s)})^2}{\tilde \beta_0^2\, (d+ \mathtt M_2^2) T}\, \partial_s \Mg_s\Bigr) \, \D s\,.
    \end{align*}
    Let us now simplify some of these integrals.
    First, for $K \deq \bar c C_\eta \tilde \beta_0 \gg 1$, and using the change of variables $v = e^{-2(T-s)}$, $\D v = 2v\,\D s$,
    \begin{align*}
        \int_0^t \Bigl(\frac{e^{2(T-t)} - 1}{e^{2(T-s)} - 1} \Bigr)^{\frac{\bar c C_\eta \tilde\beta_0}{2}} (1-e^{-2(T-s)}) \, \D s
        &= (e^{2(T-t)}-1)^K \int_0^t (v^{-1}-1)^{-K}\,v\,(v^{-1} -1)\,\D s \\
        &= \frac{(e^{2(T-t)}-1)^K}{2} \int_{e^{-2T}}^{e^{-2(T-t)}} \bigl(\frac{v}{1-v}\bigr)^{K-1} \D v\,.
    \end{align*}
    Next, let $\omega \deq e^{1/K}$.
    \begin{align*}
        &\int_{e^{-2T}}^{e^{-2(T-t)}} \bigl(\frac{v}{1-v}\bigr)^{K-1} \D v
        = \sum_{j\ge 0} \int_{\omega^j \le (1-v)/(1-e^{-2(T-t)}) \le \omega^{j+1}}\bigl(\frac{v}{1-v}\bigr)^{K-1} \D v \\
        &\qquad{}
        \le \frac{1}{(1-e^{-2(T-t)})^{K-1}} \sum_{j\ge 0} \frac{1}{\omega^{(K-1)j}} \int_{\omega^j \le (1-v)/(1-e^{-2(T-t)}) \le \omega^{j+1}} v^{K-1}\, \D v \\
        &\qquad{} \le \frac{1}{(1-e^{-2(T-t)})^{K-1}} \sum_{j\ge 0} \frac{e^{-2(K-1)(T-t)}}{\omega^{(K-1)j}} \,(1-e^{-2(T-t)})\,\omega^j\,(\omega-1) \\
        &\qquad{} \lesssim \frac{e^{-2(K-1)(T-t)}}{K\,(1-e^{-2(T-t)})^{K-1}} \sum_{j\ge 0} \frac{1-e^{-2(T-t)}}{\omega^{(K-2)j}}
        \lesssim \frac{e^{-2(K-1)(T-t)}\,(1-e^{-2(T-t)})}{K\,(1-e^{-2(T-t)})^{K-1}}\,.
    \end{align*}
    On the other hand, a na\"{\i}ve bound is
    \begin{align*}
        \int_{e^{-2T}}^{e^{-2(T-t)}} \bigl(\frac{v}{1-v}\bigr)^{K-1} \D v
        &\le \frac{\int_{e^{-2T}}^{e^{-2(T-t)}} v^{K-1}\,\D v}{(1-e^{-2(T-t)})^{K-1}}
        \le \frac{e^{-2K(T-t)}}{K\,(1-e^{-2(T-t)})^{K-1}}\,.
    \end{align*}
    Using the na\"{\i}ve bound for $T-t \gtrsim 1$, and the refined bound for $T-t \lesssim 1$, we obtain
    \begin{align*}
        \int_0^t \Bigl(\frac{e^{2(T-t)} - 1}{e^{2(T-s)} - 1} \Bigr)^{\frac{\bar c C_\eta \tilde\beta_0}{2}} (1-e^{-2(T-s)}) \, \D s
        &\lesssim \frac{(1-e^{-2(T-t)})^2}{\tilde\beta_0}\,.
    \end{align*}
    On the other hand, integrating by parts, letting
    \begin{align*}
        f(s,t) \deq \Bigl(\frac{e^{2(T-t)} - 1}{e^{2(T-s)} - 1} \Bigr)^{\frac{\bar c C_\eta \tilde\beta_0}{2}}\, (1-e^{-2(T-s)})^2
        = e^{-4(T-s)}\,\frac{(e^{2(T-t)}-1)^K}{(e^{2(T-s)}-1)^{K-2}}
    \end{align*}
    which is increasing in $s$,
    \begin{align*}
        \int_0^{t} \Bigl(\frac{e^{2(T-t)} - 1}{e^{2(T-s)} - 1} \Bigr)^{\frac{\bar c C_\eta \tilde\beta_0}{2}}\, (1-e^{-2(T-s)})^2\, \partial_s \Mg_s \, \D s &= f(t, t)\, \Mg_t - f(0, t)\, \Mg_0 - \int_0^{t} \partial_s f(s, t)\, \Mg_s \, \D s \\
        &\le f(t,t)\,\Mg_t\,.
    \end{align*}
    Together with Lemma~\ref{lem:magic-1}, it yields
    \begin{align*}
        \int_0^{t} \Bigl(\frac{e^{2(T-t)} - 1}{e^{2(T-s)} - 1} \Bigr)^{\frac{\bar c C_\eta \tilde\beta_0}{2}}\, (1-e^{-2(T-s)})^2\, \partial_s \Mg_s \, \D s
        &\lesssim d\,(1-e^{-2(T-t)}) + \mathtt M_2^2\,(1-e^{-2(T-t)})^2\,.
    \end{align*}
    Finally, this all implies that 
    \begin{align*}
        \mathtt d_t^2
        &\lesssim \frac{\epscore^2\,(1-e^{-2(T-t)})^2}{\tilde\beta_0^2} + \frac{\epscore^2}{\tilde\beta_0^2\,(d+\mathtt M_2^2)T}\,[d\,(1-e^{-2(T-t)}) + \mathtt M_2^2\,(1-e^{-2(T-t)})^2]\,.
    \end{align*}
    Now, note that our bound on the $\KL$ divergence is given by
    \begin{align*}
        \E_{x\sim \pi_T} \KL(\mb P^{\aux}_x \mmid \mb P^{\alg}_x)
        &\lesssim \int_0^{t_N} \eta_t^2 \mathtt d_t^2 \log \frac{1}{\mathtt h_t}\,\D t \\
        &\lesssim \bigl(\epscore^2\log \frac{1}{h_N}\bigr) \int_0^{t_N}\Bigl(1 + \frac{d}{(d+\mathtt M_2^2)T\,(1-e^{-2(T-t)})}\Bigr)\,\D t \\
        &\lesssim \bigl(\epscore^2 \log\frac{1}{h_N}\bigr)\,\Bigl(T + \frac{1}{T} \log \frac{e^{2T}-1}{e^{2(T-t_N)}-1} \Bigr)\,. \qedhere
    \end{align*}
\end{proof}
    
\begin{proof}[Proof of Theorem~\ref{thm:main-varying}]
    Lemma~\ref{lem:kl-recursion-time-vary} states that
    \begin{align*}
        \KL(\pi_{t_N}^\aux\mmid \pi_{t_N}^\alg) \lesssim \bigl(\epscore^2 T +  \frac{\epscore^2}{ T} \log \frac{1}{T-t_N}\bigr) \log \frac{\tilde \beta_0 \sqrt{(d + \mathtt M_2^2) T}}{\epscore\, (T-t_N)} + \KL(\pi_{T} \mmid \upgamma)\,.
    \end{align*}
    On the other hand, we have the following:
    \begin{enumerate}[label=(\arabic*)]
        \item For $T-t_N \lesssim 1$, we have
            \begin{align*}
                W_2^2(\pi_0, \pi_{T-t_N}) \lesssim \mathtt M_2^2\, (T-t_N)^2 + d\, (T-t_N)\,.
            \end{align*}

            \item Via Lemma~\ref{lem:kl-recursion-time-vary} again,
            \begin{align*}
                W_2^2(\pi_{t_N}^\aux, \pi_{T-t_N}) \lesssim \epscore^2\, (T-t_N)^2 + \frac{\epscore^2\, (T-t_N)}{T}\,.
            \end{align*}

            \item Lastly, since we use a Gaussian in place of $\pi_T$ as the initial distribution, we need to pay the additional factor
            \begin{align*}
                \KL(\pi_T \mmid \upgamma) \leq e^{-T}
                (d +\mathtt M_2^2) \,,
            \end{align*}
            using~\citet[Lemma 9]{chen2023improved}. So we take $T \asymp \log \frac{d + \mathtt M_2^2}{\epscore^2}$.
    \end{enumerate}
    Thus, we should take $T \asymp \log \frac{d + \mathtt M_2^2}{\epscore^2} \vee 1$, $T-t_N \asymp \frac{\epscore^2}{d} + \frac{\epscore}{\mathtt M_2}$. This all implies that
    \begin{align*}
        W_2^2(\pi_{t_N}^\aux, \pi_0) \lesssim \epscore^2\,, \qquad \KL(\pi_{t_N}^\aux \mmid \pi_{t_N}^{\alg}) = \widetilde O\bigl(\epscore^2\,(1+\log^2\{\tilde \beta_0(d+ \mathtt M_2^2)\})\bigr)\,.
    \end{align*}
    From our choice of step sizes, we note that this takes $N$ steps with
    \begin{align*}
        N \asymp \frac{\tilde \beta_0\sqrt{d+\mathtt M_2^2}\, T^{3/2}}{\epscore} + \frac{\tilde \beta_0\sqrt{(d+\mathtt M_2^2)T}}{\epscore} \log \frac{1}{T-t_N} = \widetilde{\Theta}\Bigl(\frac{\tilde \beta_0\sqrt{d+\mathtt M_2^2}}{ \epscore}\Bigr)\,.
    \end{align*}
\end{proof}

\section{Examples satisfying Assumption~\ref{as:vary-lip}}\label{app:examples}

We provide some examples of distributions where Assumption~\ref{as:vary-lip} holds for the true scores, i.e., for $\Ms_t = \nabla \log \pi_t$.
The following examples all come from the literature on quantitative Lipschitz estimates of Kim--Milman maps (i.e., flow map for the probability flow ODE) which were originally used to establish log-Sobolev inequalities. For completeness, we provide derivations below.
\begin{itemize}[leftmargin=*,itemsep=0pt,topsep=0pt]
    \item \textbf{Log-concave measures.} Let $\pi_0\propto \exp(-V)$ with $\nabla^2 V\succeq 0$.
    Then, Assumption~\ref{as:vary-lip} holds with $\tilde\beta_0 \le 1$.
    \item \textbf{Lipschitz perturbations of strongly log-concave measures.} Let $\pi_0 \propto \exp(-V-W)$, where $V$ is $\alpha$-strongly convex ($\alpha > 0$) and $W$ is $L$-Lipschitz.
    Then, Assumption~\ref{as:vary-lip} holds with $\tilde\beta_0 \le L^2/\alpha \vee 1$.
    \item \textbf{Semi-log-concave over compact sets.} Let $\pi_0\propto \exp(-V)$ over a compact set with diameter at most $R$, and such that $\nabla^2 V \succeq \alpha I_d$ for some $\alpha < 0$.
    Then, Assumption~\ref{as:vary-lip} holds with $\tilde\beta_0 \lesssim 1 \vee \abs\alpha R^2$.
    \item \textbf{Gaussian convolutions of compactly supported measures.} Let $\pi_0 = \nu * \mc N(0, I_d)$, where $\nu$ has compact support, of diameter at most $R$. Then, Assumption~\ref{as:vary-lip} holds with $\tilde\beta_0 \lesssim 1 \vee R^2$.
    \item \textbf{Strongly log-concave outside a ball.} Let $\pi_0\propto \exp(-V)$, where $V$ satisfies
    \begin{align*}
        \inf_{\norm{x-y}=r}\frac{\langle \nabla V(x) - \nabla V(y), x-y\rangle}{\norm{x-y}^2} \ge \begin{cases}
            \alpha - \beta\,, & \norm{x-y}\le R\,, \\
            \alpha\,, & \norm{x-y} > R\,,
        \end{cases}
    \end{align*}
    for some $\alpha,\beta, R > 0$.
    Then, Assumption~\ref{as:vary-lip} holds with some constant $\tilde\beta_0$ depending only on $\alpha$, $\beta$, and $R$.
\end{itemize}

We remark that in all of these examples except the first, the log-Sobolev constant of $\pi_0$ scales exponentially in $\tilde\beta_0$, whereas our convergence bounds only scale polynomially in $\tilde\beta_0$.
This implies that, given access to an accurate score estimator, diffusion models are far superior to standard MCMC methods such as the Langevin diffusion.

We also provide one instance showing the failure of Assumption~\ref{as:vary-lip}.
\begin{itemize}[leftmargin=*,itemsep=0pt,topsep=0pt]
    \item \textbf{Two point masses.} Consider $\pi_0 = \frac{1}{2}\, \delta_{\mathbf e_1} + \frac{1}{2}\, \delta_{-\mathbf e_1}$, where $\mathbf e_1$ is the vector $[1, 0, \dotsc, 0]$. The Hessian is $-\nabla^2 \log \pi_t(\mathbf x) = \frac{1}{1-e^{-2t}}\, I_d -\frac{e^{-2t}}{(1-e^{-2t})^2}\, \mathbf e_1 \mathbf e_1^\top \sech^2(\frac{\mathrm{csch}(t)}{2}\,{\langle{\mathbf e_1, \mathbf x}\rangle})$. Thus, along and near the critical strip $x_1 = 0$, the Hessian experiences blow-up at rate $1/t^2$ as $t \to 0$. This shows that there is no $\tilde \beta_0$ that suffices for all values of $\epscore$.
\end{itemize}
This reasoning can be generalized to other mixtures of point masses.

\subsection{Proofs}

\paragraph{Log-concave measures.}
    Let $\pi_0 \propto \exp(-V)$ where $V: \R^d \to \R$ is strongly log-concave. The conditional distribution of $X_t^\rightarrow$ given $X_0^\rightarrow = x_0$ is $N(e^{-t} x_0, (1-e^{-2t})I_d)$. Using this, standard calculations give that
    \begin{align}\label{eq:hessian}
        \nabla^2 \log \pi_t(x) = -\frac{I_d}{1-e^{-2t}} + \frac{e^{-2t}}{(1-e^{-2t})^2} \operatorname{cov}(X_0^\rightarrow \mid X_t^\rightarrow = x)\,.
    \end{align}
    Now, the reverse conditional measure has the form
    \begin{align*}
        \pi_{0 \mid t}(x \mid y) \propto \exp\Bigl(-\frac{\norm{y-e^{-t} x}^2}{2\,(1-e^{-2t})} - V(x) \Bigr)\,,
    \end{align*}
    so that
        \begin{align}\label{eq:hessian_2}
            -\nabla^2 \log \pi_{0 \mid t}(x \mid y) = \frac{e^{-2t}}{1-e^{-2t}}\,I_d + \nabla^2 V(x) \succeq \frac{e^{-2t}}{1-e^{-2t}}\, I_d\,.
        \end{align}
    The Brascamp--Lieb inequality~\citep{BraLie1976} then allows us to bound the covariance by the inverse of the matrix above. Thus, after some algebra,
        \begin{align*}
            \lambda_{\max}\Bigl(\nabla^2 \log \frac{\pi_t}{\upgamma} \Bigr) = \lambda_{\max}(\nabla^2 \log \pi_t + I_d) \le 1\,.
        \end{align*}
        The minimum eigenvalue can be lower bounded in~\eqref{eq:hessian} by taking the covariance to be zero, which shows that $\tilde\beta_0 = 1$ is sufficient.
        
\paragraph{Lipschitz perturbations of strongly log-concave measures.}
    Next, suppose $\pi_0 \propto \exp(-V - W)$, where $V$ is $\alpha$-strongly convex and $W$ is $L$-Lipschitz. The previous example showed that
    \begin{align*}
        \nabla^2 \log \frac{\pi_t}{\upgamma} = \frac{1}{e^{2t} - 1}\, \Bigl(\frac{\cov_{\nu_{1-e^{-2t}, e^{-t} y}}}{1-e^{-2t}} - I_d\Bigr)\,,
    \end{align*}
    where
    \begin{align*}
            \nu_{\tau, y}(\D x) \propto \exp\Bigl(-\frac{\norm{x-y}^2}{2\tau} + \frac{\norm{x}^2}{2} \Bigr)\, \pi(\D x)\,.
    \end{align*}
    Following the argument of~\citet{BriPed25HeatFlow},
    \begin{align*}
        \norm{\cov_{\nu_{\tau, y}}}_{\operatorname{op}} \leq (\sqrt{\norm{\cov_{\tilde \nu_{\tau, y}}}} + W_2(\nu_{\tau, y}, \tilde \nu_{\tau, y}))^2\,,
    \end{align*}
    where
    \begin{align*}
        \tilde \nu_{\tau, y}(x) \propto \exp\Bigl(-\frac{\norm{x-y}^2}{2\tau} +\frac{\norm{x}^2}{2} - V(x) \Bigr)\,.
    \end{align*}
    Using Brascamp--Lieb, the first term is bounded by         \begin{align*}
            \norm{\cov_{\tilde \nu_{\tau, y}}} \leq \alpha - 1 + \frac{1}{\tau}\,.
    \end{align*}
    On the other hand, by the $\msf T_2$ inequality and LSI,
    \begin{align*}
        W_2^2(\nu_{t,y}, \tilde \nu_{t, y}) \leq C_{\msf{LSI}}^2(\tilde \nu_{t,y})\, \FI(\nu_{t,y} \mmid \tilde \nu_{t, y})\,. 
    \end{align*}
    The Fisher information is the expectation of the squared gradient norm of a $L$-Lipschitz function (namely $W$), whereas we use Bakry--\'Emery to bound the log-Sobolev constant. This all yields the bound on the covariance, for $\tau = 1-e^{-2t}$:
    \begin{align*}
        \norm{\cov_{\nu_{\tau, y}}}
        &\leq \Bigl(\sqrt{\frac{1}{\alpha -1 + \frac{1}{\tau}}} + \frac{L}{\alpha-1 + \frac{1}{\tau}}\Bigr)^2 \\
        &= \Bigl(\sqrt{\frac{1}{\alpha + e^{-2t}/(1-e^{-2t})}} + \frac{L}{\alpha+ e^{-2t}/(1-e^{-2t})}\Bigr)^2 \\
        &\le \Bigl( \sqrt{\frac{1-e^{-2t}}{e^{-2t}}} + \frac{L}{2\sqrt{\alpha e^{-2t}/(1-e^{-2t})}} \Bigr)^2
        \lesssim \bigl(1\vee\frac{L^2}{\alpha}\bigr)\, \frac{1-e^{-2t}}{e^{-2t}}\,.
    \end{align*}
    In particular, this implies the existence of an estimator in Assumption~\ref{as:vary-lip} with $\tilde\beta_0 \lesssim 1 \vee L^2/\alpha$.

\paragraph{Semi-log-concave measures over compact sets.}
Let $\pi_0\propto \exp(-V)$ over a compact set with diameter at most $R$, and such that $\nabla^2 V \succeq \alpha I_d$ for some $\alpha < 0$.
By~\eqref{eq:hessian} and~\eqref{eq:hessian_2}, when $e^{-2t}/(1-e^{-2t}) \ge -2\alpha$, then $\lambda_{\max}(\nabla^2 \log(\pi_t/\upgamma)) \lesssim 1$.
On the other hand, when $e^{-2t}/(1-e^{-2t}) \le -2\alpha$, then
\begin{align*}
    \lambda_{\max}\bigl(\nabla^2 \log \frac{\pi_t}{\upgamma}\bigr) \le \frac{e^{-2t}}{(1-e^{-2t})^2}\,R^2
    \le \frac{-2\alpha R^2}{1-e^{-2t}}\,.
\end{align*}
Putting together the two cases, $\tilde \beta_0 \lesssim 1 \vee \abs\alpha R^2$.
This example and the next are taken from~\citet{MikShe23HeatFlow, MikShe24BrownianTransport}.

\paragraph{Gaussian convolutions of compactly supported measures.}
Let $\pi_0 = \nu * \mc N(0, I_d)$, where $\nu$ has compact support, of diameter at most $R$.
A similar computation to the above examples readily yields
\begin{align*}
    \nabla^2 \log \frac{\pi_t}{\upgamma}
    &\preceq R^2 e^{-2t}\,I_d\,.
\end{align*}
Therefore, we can take $\tilde\beta_0 \lesssim 1 \vee R^2$.

\paragraph{Strongly log-concave outside a ball.}
This example is taken from~\citet{ConLacPal25Entropic}. The constant was not explicitly computed therein in terms of $\alpha$, $\beta$, and $R$.

\section{Experimental details}\label{app:experiments}

\subsection{Adapting the OU process to the EDM framework}\label{app:ou_edm}

Clearly~\eqref{eq:OU-reverse} fits the general SDE~\eqref{eq:general_sde} by taking \( \scalef(t) = -1 \), \( f_t(X_t) = 2\, \nabla \log (\pi_{T - t}/\upgamma) (X_t) \), and \( g(t) = \sqrt{2} \).
We wish to write \eqref{eq:OU-reverse} in terms of~\eqref{eq:edm_sde}. The EDM forward process is defined as \( X_t = c(t)\, X_0 + c(t)\, \sigma(t)\, z \) while \eqref{eq:OU-forward} admits the closed-form solution \( X_t = e^{-t} X_0 + B_{1 - e^{-2 t}} \) where \( B_{\cdot} \) denotes the Wiener process. By comparison, we read \( c(t) = e^{-t} \), \( \sigma(t) = \sqrt{e^{2t} - 1} \). Alternatively, we realize that the OU process is a special case of the VP SDE \citep{songscore} when \( \betamin = \betamax = 2 \), (\( \betad \deq \betamax - \betamin = 0 \)) and read from Table 1 of \cite{karras2022elucidating}. Matching \( \sqrt{2 \beta(t)} \,\sigma(t)\, c(t) \) to \( \sqrt{2} \), we find that \( \beta(t) = (\sigma(t)\, c(t))^{-2} \). Using the relationship between the forward and reverse SDE~\citep[eq.\ (6)]{karras2022elucidating}, we have recovered~\eqref{eq:OU-forward} and~\eqref{eq:OU-reverse}.

It is helpful to remember that the score \( \hat {\Ms}_t(X_t) \) is internally implemented with \emph{denoising score matching} \citep{hyvarinen05a, vincent2011connectiona} and admits the formula
\begin{align}\label{eq:denoising}\tag{EDM-score}
  \hat{\Ms}_t(x) = \frac{D(x / c(t); \sigma(t)) - x / c(t)}{c(t)\, \sigma(t)^2}\,,
\end{align}
where \( D(\cdot; \sigma) \) is a neural network denoiser trained to predict the unnoised \( x \) given \( x + \sigma z \), \( z \sim \gamma \). Writing~\eqref{eq:edm_sde} in terms of the score instead of the denoiser allows for a cleaner implementation which is closer to the SDE, especially for implementing our suggestions around the time scaling \( \scalef(t) \).

\subsection{Variants of the randomized midpoint}\label{app:rmd_variants}

Our starting point is the semi-linear SDE~\eqref{eq:general_sde}
\begin{align}
  \D X_t &= (\scalef(t) X_t + f_t(X_t)) \, \D t + g(t) \, \D B_t\,.
\end{align} From the intuition that a linear SDE of the form \( \D X_t = \scalef(t) X_t \, \D t + g(t) \, \D B_t \) admits a closed-form solution, we use the ODE integrating factor \( \intf(t) \deq \exp(-\int_{t_0}^t \scalef) \) as an ansatz. By \Ito's rule,
\begin{align*}
  \D (\intf(t) X_t) &= (\D \intf(t))\, X_t + \intf(t) \, \D X_t \\
  &= -\scalef(t)\, \intf(t) X_t \, \D t + \intf(t)\, \left[ 
    (\scalef(t) X_t + f_t(X_t)) \, \D t + g(t) \, \D B_t
  \right] \\
  &= \intf(t)\, f_t(X_t) \, \D t + \intf(t)\, g(t) \, \D B_t\,,
\end{align*}
where we have successfully removed the linear term.
Integrating both sides from some starting time \( t_0 \) to \( t_0 + h \), we have the integral representation
\begin{gather}
  \nonumber
  \intf(t_0 + h) X_{t_0 + h} - \intf(t_0) X_{t_0}
  = \int_{t_0}^{t_0 + h} {\intf(t)} \, f_t(X_t) \, \D t
    + \int_{t_0}^{t_0 + h} {\intf(t)}\, g(X_t) \, \D B_t\,, \\
  \label{eq:sde_int}\tag{INT}
  X_{t_0 + h}
  = \frac{\intf(t_0)}{\intf(t_0 + h)} X_{t_0}
    + \int_{t_0}^{t_0 + h} \frac{\intf(t)}{\intf(t_0 + h)}\, f_t(X_t) \, \D t
    + \int_{t_0}^{t_0 + h} \frac{\intf(t)}{\intf(t_0 + h)}\, g(X_t) \, \D B_t\,.
\end{gather}
In order to approximate~\eqref{eq:sde_int} we perform a two-step discretization scheme. First, we draw a random time $\tau$ from the density proportional to \( t \mapsto \indicat_{[t_0, t_0 + h]}(t)\, \intf(t) \) which serves as our midpoint. Defining \( \intintf(t) \deq \int_{t_0}^t \intf \), explicitly
\begin{align*}
  \tau \sim p(\tau) &= \begin{cases}
      \frac{\intf(t)}{\intintf(t_0 + h) - \intintf(t_0)}\,, & t_0 \leq \tau \leq t_0 + h\,; \\
      0\,, & \text{otherwise}\,.
  \end{cases}
\end{align*}
We can use the plug-in estimator \( (\intintf(t_0 + h) - \intintf(t_0)) f_{t_0 + \tau}(X_{t_0 + \tau}) / \intf(t_0 + h) \) to obtain an unbiased estimate to the integral in the fashion of Monte Carlo quadrature. Unfortunately we do not know the value of \( X_{t_0 + \tau} \), necessitating a second approximation. We use an Euler scheme, assuming that the function is constant on the interval and taking the left endpoint (which we do know). Thus, we have
\begin{align*}
  X^+_{t_0 + \tau} &= \frac{\intf(t_0)}{\intf(t_0 + \tau)}\, X_{t_0} 
    + \frac{\intintf(t_0 + \tau)}{\intf(t_0 + \tau)}\, f_{t_0}(X_{t_0})
    + \int_{t_0}^{t_0 + \tau} \frac{\intf(t)}{\intf(t_0 + \tau)}\, g(X_t) \, \D B_t\,, \\
  X_{t_0 + h} &= \frac{\intf(t_0)}{\intf(t_0 + h)}\, X_{t_0} 
    + \frac{\intintf(t_0 + h)}{\intf(t_0 + h)}\, f_{t_0 + \tau}(X_{t_0 + \tau})
    + \int_{t_0}^{t_0 + h} \frac{\intf(t)}{\intf(t_0 + h)}\, g(X_t) \, \D B_t\,.
\end{align*}
It remains to treat the noise terms. Define the stochastic process
\begin{align*}
  Y_{t} &\deq \int_{t_0}^{t} \frac{\intf(t')}{\intf(t_0 + h)}\, g(t') \, \D B_{t'}\,.
\end{align*}
Clearly \( \E[Y_t] = 0 \) and
\begin{align*}
  \Var[Y_t] = \E[Y_t^2]
  = \int_{t_0}^{t} \frac{\intf(t')^2}{\intf(t_0 + h)^2}\, g(t')^2 \, \D t'\,,
\end{align*}
by an application of \Ito's rule. We will compute \( (\xi^+, \xi) \sim (Y_{t_0 + \tau}, Y_{t_0 + h}) \) by conditional simulation. Defining \( \noisef(t) \deq \int_{t_0}^t (\intf g)^2 \); if \( z^+ \sim \gamma \) then \( \xi^+ = [\sqrt{\noisef(t_0 + \tau) - \noisef(t_0)} / \intf(t_0 + \tau)]\, z^+ \) has the right marginal distribution. Next, we perform the domain decomposition 
\begin{align*}
  Y_{t_0 + h} &= \int_{t_0}^{t_0 + \tau} \frac{\intf(t)}{\intf(t_0 + h)}\, g(t) \, \D B_t
    + \int_{t_0 + \tau}^{t_0 + h} \frac{\intf(t)}{\intf(t_0 + h)}\, g(t) \, \D B_t \\
  &= \frac{\intf(t_0 + \tau)}{\intf(t_0 + h)} \int_{t_0}^{t_0 + \tau} \frac{\intf(t)}{\intf(t_0 + \tau)}\, g(t) \, \D B_t
    + \int_{t_0 + \tau}^{t_0 + h} \frac{\intf(t)}{\intf(t_0 + h)}\, g(t) \, \D B_t \\
  &= \frac{\intf(t_0 + \tau)}{\intf(t_0 + h)}\, Y_{t_0 + \tau}
    + \int_{t_0 + \tau}^{t_0 + h}\, \frac{\intf(t)}{\intf(t_0 + h)}\, g(t) \, \D B_t\,,
\end{align*}
and make use of the fact that the latter term is independent of \( Y_{t_0 + \tau} \) and normally distributed with mean 0 and variance \( (\noisef(t_0 + h) - \noisef(t_0 + \tau)) / \intf(t_0 + h)^2 \). Thus, we can compute for \( z \sim \gamma \) independent of \( z^+ \),
\( \xi = [\intf(t_0 + \tau) / \intf(t_0 + h)]\, \xi^+ + [\sqrt{(\noisef(t_0 + h) - \noisef(t_0+\tau))} / \intf(t_0 + h)]\,z \).

Putting everything together, we have the following generalization of~\eqref{eq:rmd-alg}.

\begin{algorithm}[H]
\caption{Generalized randomized midpoint kernel on $[t_0, t_1]$}
\label{eq:rmd-alg-general}
\KwIn{current state $X_{t_0}\in\mathbb R^d$; step $h \deq t_1 - t_0$; drift $f_t(\cdot)$; noise term \( g(\cdot) \).}
\KwIn{scaling factor \( \scalef(t) \); integrating factor \( \intf(t) \deq \exp(-\int_{t_0}^t \scalef) \); normalizing factor \( \intintf(t) \deq \int_{t_0}^t \intf \); inverse \( \intintf^{-1}(\cdot) \); noise factor \( \noisef(t) \deq \int_{t_0}^t (\intf g)^2 \).}

\BlankLine
\textbf{1. Draw the randomized midpoint}.
Sample $U\sim\mathsf{Unif}(0,1)$ and set
\begin{align*}
    \tau = \intintf^{-1}((1 - U)\, \intintf(t_0) + U\, \intintf(t_1))  \quad\text{i.e., with density}~p(\tau) \propto \indicat_{[t_0, t_1]}(t)\, \intf(t).
\end{align*}

\textbf{2. Midpoint prediction for $X^+_{t_0+\tau}$}.
Draw $Z_1\sim\mathcal N(0,I_d)$ and set the OU noise $\xi^+ \deq [\sqrt{\abs{\noisef(t_0 + \tau)}} / \intf(t_0 + \tau)] \, Z_1$. Then
\begin{align*}
    X^+_{t_0+\tau_k} = \frac{\intf(t_0)}{\intf(t_0 + \tau)}\, X_{t_0} 
      + \frac{\intintf(t_0 + \tau)}{\intf(t_0 + \tau)}\, f_{t_0}(X_{t_0}) 
      + \xi^+\,.
\end{align*}

\textbf{3. Full-step update for $X_{t_1}$}.
Draw $Z_2\sim\mathcal N(0,I_d)$ independent of $Z_1$ and set
\begin{align*}
    \xi = \frac{\intf(t_0 + \tau)}{\intf(t_1)}\, \xi^+ + \frac{\sqrt{\abs{\noisef(t_1) - \noisef(t_0+\tau)}}}{\intf(t_1)} \, Z_2\,.
\end{align*}
Compute the score at the randomized time and update
\begin{align*}
    X_{t_1} = \frac{\intf(t_0)}{\intf(t_1)}\, X_{t_0} 
      + \frac{\intintf(t_1)}{\intf(t_1)}\, f_{t_0 + \tau}(X_{t_0 + \tau}^+)
      + \xi\,.
\end{align*}
\end{algorithm}

Note that we take the absolute value in the computation of \( (\xi^+, \xi) \) so that~\eqref{eq:rmd-alg-general} is valid also in reverse time (i.e., when 
\( t_1 < t_0 \) and \( h < 0 \)).

For example, one concrete instantiation as mentioned in the main text is given by 
\begin{align}\label{eq:rm_euler}\tag{RME}
\begin{aligned}
  X_{t_{k - 1} + \tau_k}^+ &= X_{t_{k - 1}} + \tau_k f_{t_{k - 1}}(X_{t_{k - 1}}) + \operatorname{noise}\,, \\
  X_{t_{k}} &= X_{t_{k - 1}} + h_k f_{t_{k - 1} + \tau_k}(X_{t_{k - 1} + \tau_k}) + \operatorname{noise}\,,
\end{aligned}
\end{align}
which corresponds to randomized midpoint without exponential Euler when $\lambda(t) = 0$.

\subsubsection{Implementation details}

The quantity \( \intf(t) \) is free up to multiplicative factors and \( \intintf(t), \noisef(t) \) are free up to constants, assuming they agree with each other. It sometimes convenient to arbitrarily base the integrals at \( t_0 \), i.e.\ to compute \( \intintf(t) = \int_{t_0}^t \intf(t) \, \D t \), resulting in definite integrals for the differences in integrated quantities in~\eqref{eq:rmd-alg-general}. When it is not possible to analytically integrate \( \intf, \intintf \), or \( \noisef \) or to invert \( \intintf \), numerical quadrature and root finding can be used instead. We use \texttt{scipy.integrate.quad} and \texttt{scipy.optimize.root\_scalar} respectively for these tasks, from the SciPy library \citep{2020SciPy-NMeth}. For quadrature it can help to signal discontinuities like \( \Smin \) and \( \Smax \) with the \texttt{points} argument. For root finding we use the \texttt{"brentq"} method with interval \( [t_0, t_1] \). Although in principle we could use a higher-order method like the \texttt{"halley"} method since \( \intintf \) is twice differentiable with derivatives \( \intintf' = \intf \), \( \intintf''(t) = -\scalef(t)\, \intf(t) \), in our settings we find both quadrature and root finding to converge to near machine precision (\( \approx 10^{-11} \)--\( 10^{-15} \)) in a handful of iterations (\( < 10 \)).

\subsubsection{Concrete choices of scaling factor}\label{app:concrete_scale}

Following~\eqref{eq:edm_sde}, we see that in order for the drift to be a time-scaling of the score, it suffices to take \( \scalef(t) = \dot{c}(t) / c(t) \). For the drift to be a time-scaling of the relative score, we take
\begin{align*}
  \scalef(t) = \frac{\dot{c}(t)}{c(t)}
    + \frac{c(t)^2 \dot{\sigma}(t) \sigma(t)}{\sigma_T^2}
    + \frac{c(t)^2 \beta(t) \sigma(t)^2}{\sigma_T^2}\,,
\end{align*}
where \( \sigma_T^2 \) is the variance of the forward process at time \( T \), \( \pi_T \). We also consider a ``network-adapted'' strategy (as opposed to the aforementioned ``SDE-adapted'') strategy by expanding the score in terms of the denoiser~\eqref{eq:denoising} and collecting linear terms, resulting in \( \scalef(t) = \dot{c}(t) / c(t) + \dot{\sigma}(t) / \sigma(t) \). We can also account for the skip connection in the denoiser itself, resulting in the choice of
\begin{align*}
  \scalef(t) = \frac{\dot{c}(t)}{c(t)}
    + (1 - \cskip(t))\, \frac{\dot{\sigma}(t)}{\sigma(t)}\,,
\end{align*}
where \( \cskip(t) \) is the skip connection in the denoiser \( D(\cdot; \sigma) \) (see Table 1 of \cite{karras2022elucidating}). In particular, we consider \( \cskip(t) = \sigmadata^2 / (\sigma(t)^2 + \sigmadata^2) \) for \( \sigmadata = 0.5 \).

In our experiments we use the relative score for the OU and VP processes, the non-relative score for the EDM process, and the skip connection for the VE process.

\clearpage

\subsection{Additional figures}\label{app:add_figures}
\begin{figure}[ht]
  \centering
  \includegraphics[width=\textwidth]{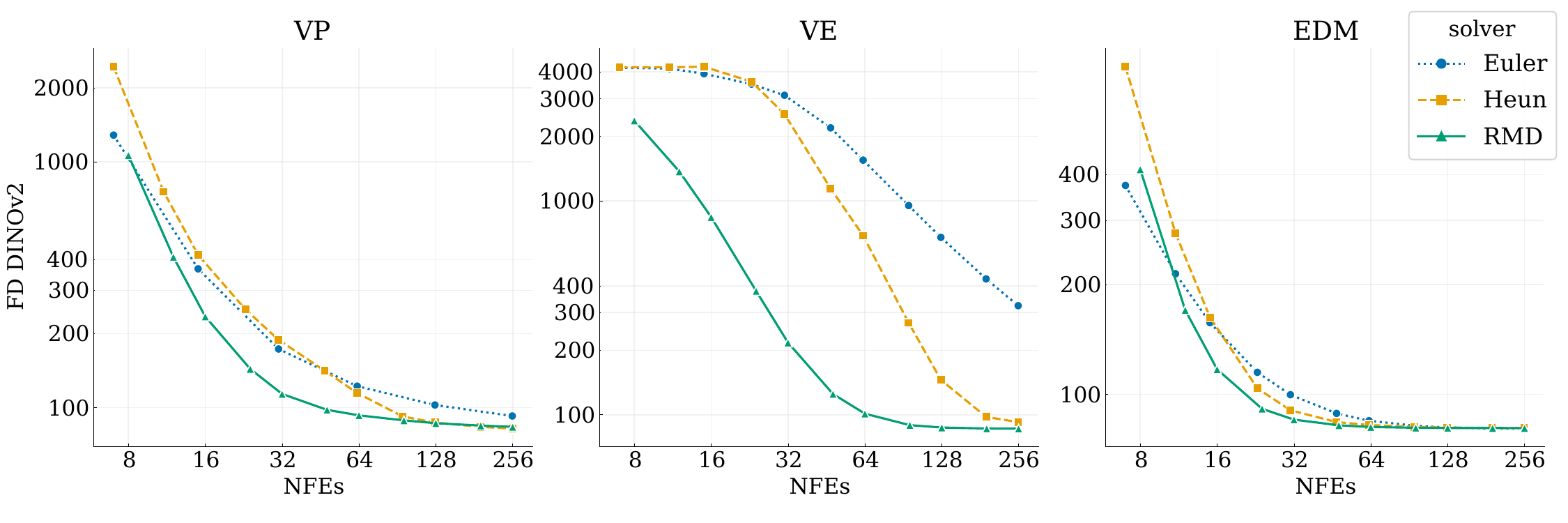}
  \caption{Image quality as measured by \( \fddino \). \ref{fig:ode_fid} uses the same generated images.}
  \label{fig:ode_fd_dinov2}
\end{figure}

\begin{figure}[ht]
  \centering
  \includegraphics[width=\textwidth]{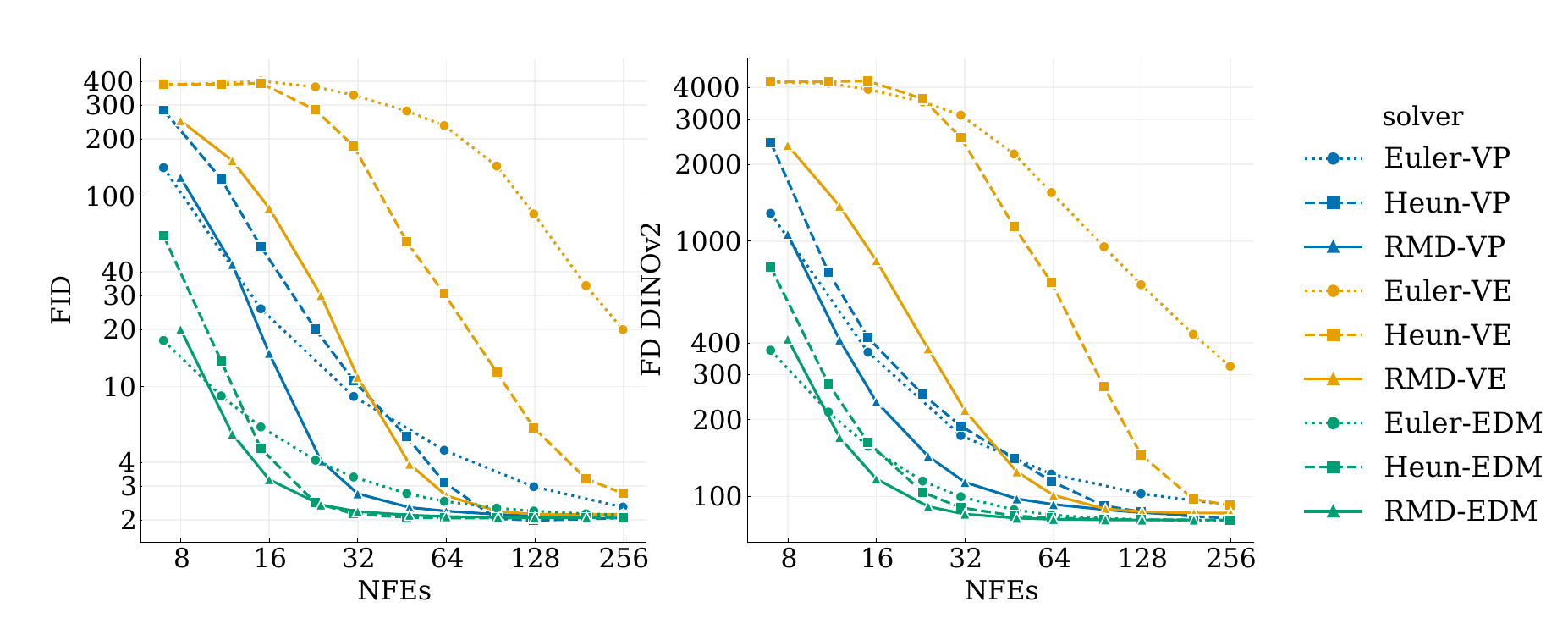}
  \caption{A variant of~\ref{fig:ode_fid} with all methods and settings shown on the same scale.}
  \label{fig:ode_fid_combine}
\end{figure}

\end{document}